\documentclass[twocolumn,letterpaper]{IEEEtran}
\usepackage[left=0.75in,top=0.72in,right=0.75in,bottom=0.78in]{geometry}
\usepackage{psfrag}
\usepackage{amsmath}
\allowdisplaybreaks 

\usepackage{hyperref}
\usepackage{amssymb}
\usepackage{graphicx}
\usepackage{subfigure}
\usepackage{verbatim}
\usepackage[thinlines,thicklines]{easybmat}
\usepackage{latexsym}
\usepackage[dvipsnames]{xcolor}
\usepackage{cite}
\usepackage{bm}
\usepackage{stmaryrd}
\usepackage{multirow}
\usepackage{textcomp}
\usepackage[linesnumbered,ruled,vlined]{algorithm2e}
\usepackage{mathtools}
\usepackage{bookmark}
\usepackage{dsfont}
\usepackage{csquotes}
\usepackage{tikz}
\usetikzlibrary{backgrounds, fit}
\definecolor{emerald}{rgb}{0.31, 0.78, 0.47}

\newtheorem{thm}{Theorem}
\newtheorem{lem}{Lemma}

\newtheorem{rem}{Remark}

\newcommand{\qed}{\hfill \mbox{~\rule[-1pt]{6pt}{6pt}}}
\newenvironment{proof}{\noindent { \textit{Proof}:} }{\qed \\}

\def\atan2{\mathrm{atan2}}

\begin{document}

\title{ Tactile Sensing-based Secure and Gentle\\  Grasping of Unknown Objects}


\author{Feng Han\thanks{F. Han is with the Department of Mechanical Engineering, New York Institute of Technology, Old Westbury, NY 11568 USA (email: {fhan03@nyit.edu})} }

\maketitle

\begin{abstract}

Reliable robotic manipulation requires maintaining a firm grasp while strictly avoiding both contact slip and structural damage to the object. Satisfying these inherent \textit{secure-and-gentle} grasping constraints is challenging for unknown objects. We present a novel online data-driven predictive control framework for safe dynamic robotic grasping of unknown objects. The proposed system features a pair of compact, single-channel tactile-sensing modules that characterize the gripper-object dynamic interaction in real time, entirely bypassing the need for prior knowledge or offline training. A finite-state-machine-based grasping force probing mechanism is developed to establish the initial grasping force and explicitly identify grasping force constraints. Through online dynamics learning, a Gaussian Process (GP) model captures gripper-object interaction dynamics. The model uncertainties are further used to form robust grasping constraints, which are embedded into a model predictive control scheme. We demonstrate error convergence in a probabilistic sense. The proposed control framework runs at 100~Hz in real time to dynamically manipulate and handle objects rather than static holding. Extensive experimental evaluations across diverse contact surface topologies and various types of objects validate superior performance and efficiency compared to baseline strategies. The proposed method is able to manipulate objects ranging from delicate items, such as raw tofu and marshmallows, to rough objects, such as rocks.
\end{abstract}

\begin{IEEEkeywords}
Robotic Grasping, Interaction Dynamics, Tactile Sensing, Gaussian Process Regression, Model Predictive Control.
\end{IEEEkeywords}

\section{Introduction}
\label{Sec_Introduction}


Robotic grasping and manipulation offer great efficiency and dexterity across a broad spectrum of automated tasks. Notable examples include automated fruit harvesting in modern agriculture~\cite{Lehnert2017RAL, Uppalapati2020Berry, Ansari2026Tomato}, dynamic food sorting and picking in the food industry~\cite{Zhongkui2021Circular, Baggetta2025Food, Zhang02092025}, and the delicate and ultra-precise grasping and transporting of live organs during robotic surgery~\cite{Roh2021Gripper, Mykhailyshyn2024Tissue}. With prior knowledge and offline training, these grasping tasks can be achieved safely. However, reliable robotic grasping of unknown objects remains one of the most fundamental yet challenging objectives in autonomous robotics~\cite{Xu2024, Palleschi2023TRO, phanomchoeng2023grasping}. When a robotic gripper interacts with unseen objects, the \textit{inherent but mathematically unknown grasping force constraints} must be satisfied. The grasping force should provide secure but delicate contact to avoid contact slip, damage, or breakdown~\cite{Xu2024, Dong2019ICRA, Yunha2025TMECH}. These constraints are related to the object's mass, material compliance, friction, and contact interface topology, etc., which are completely unknown. Even the same object under different contact interface topologies with the same gripper will result in different interaction dynamics~\cite{Romano2011TRO}. For instance, grasping a metal cube along diagonal portions or parallel edges requires significantly different forces.


Human grasping and manipulation of unknown objects take advantage of the coordination of vision and haptic-based force feedback~\cite{robotics9030050, Palleschi2023TRO, kang2026Force, Zheng2022Grasping}. The advanced biostructure of human fingers is capable of adapting to the object's geometry, size, surface topologies, etc. When handling objects out of the line of sight, human manipulation can still achieve success by relying fundamentally on blind haptic reflexes~\cite{Johansson2009Coding, Zheng2022Grasping}. During this process, humans learn the unseen object's mechanical properties, \enquote{feel} the interaction mechanism online, and then adjust the applied force to reach a secure-and-gentle grasp~\cite{Zheng2022Grasping, Romano2011TRO}.

For robotic grasping, accurate sensing plays a significant role in characterizing the robotic gripper-object interaction mechanism and dynamic grasping control. Sensing technology can be categorized into contactless (i.e., vision-based) and contact (i.e., tactile/haptic-based) groups. Vision-based machine learning agents can recognize and label object types and prescribe a grasping force~\cite{Beddow2025TMECH, kang2026Force, Yuan2017GelSight, Xu2024, Yunha2025TMECH, Zhao2025TASE}. However, such pipelines heavily depend on a pre-trained network over massive object datasets~\cite{calli2015ycb, kasaei2023mvgrasp}. Vision systems inherently suffer from critical degradation in occluded spaces and an inability to monitor visually unrecognized micro-slip at the interface~\cite{Hu2021RAL, Li2025Grasping, Dong2019ICRA, Yunha2025TMECH}. Furthermore, the computational cost of vision-based grasping is usually demanding, and the real-time control frequency is relatively low~\cite{Tian2019Manipulation, Xu2024}. For instance, in~\cite{Tian2019Manipulation}, video processing is used to handle tactile sensing feedback, and the control frequency can only reach $1$~Hz. Tactile sensing provides an alternative by perceiving physical interactions through direct contact~\cite{Zhou2024SoftGripper, Beddow2025TMECH, Romano2011TRO, Lu2022RAL}. Nevertheless, tactile sensing requires accurate calibration, and haptic feedback is inherently susceptible to distortions caused by complex contact surface topologies, localized stress concentrations, and vibrations~\cite{Friedl2021Evaluation, Piacenza2020TMECH}. The uncertainties in sensing propagate into interaction dynamics characterization and control design. Under dynamic grasping with a time-varying reference force, the control system design must respect secure-and-gentle grasping constraints~\cite{Xu2024, Tian2019Manipulation}.


To overcome the challenges in unknown object grasping, we propose a multi-phase, vision-free interaction dynamics learning and predictive control architecture. The proposed tactile mechatronics can adapt to different contact interface topologies and detect potential object slip or yielding while preserving the fidelity of the measurement. Additionally, a Gaussian Process (GP)-based calibration model accurately captures the interaction force from raw measurements with bounded uncertainties. The case-specific interaction mechanism is characterized online through a haptic-based finite state machine (FSM). The object geometry, stiffness, and grasping force constraints are identified systematically. Through online dynamics learning, a GP-based non-parametric interaction dynamics model is constructed. Together with the identified constraints, we design a robust model predictive control (MPC) scheme by incorporating both calibration and modeling uncertainties, ensuring grasping force control convergence and strict constraint satisfaction. The proposed control is data-efficient and does not depend on prior knowledge or datasets. Compared to the low-frequency control of vision-tactile sensing, the real-time control frequency of the proposed control reaches $100$~Hz.

\subsection{Related Work}
\label{sec:related_work}


Robotic grasping and manipulation encompass multifaceted innovations, including advanced gripper design, sensing and perception, actuation, modeling, and control. In the following, we review the most relevant work in the literature.

\subsubsection{Robotic Grasping}


Robotic grasping has been used in various applications due to its dexterity, ranging from fruit harvesting in agriculture~\cite{Lehnert2017RAL, Uppalapati2020Berry, Ansari2026Tomato} and food-feeding in caregiving~\cite{robotics9030050, Keely2024IROS} to raw material picking in manufacturing. For different applications, various gripper mechanisms have been developed, including rigid~\cite{Yunha2025TMECH, Romano2011TRO, Hua2023Grasping} and soft grippers~\cite{Zhou2024SoftGripper, Lu2022RAL}. For instance, a compliant three-finger gripper is designed with embedded strain gauges in~\cite{Beddow2025TMECH}, while the work in~\cite{Ansari2026Tomato} designs a hybrid robotic gripper combining six soft auxetic fingers with a rigid exoskeleton. Soft grippers can provide direct lifting force, whereas parallel-jaw grippers usually apply a normal contact force, with lifting force generated indirectly through friction~\cite{Romano2011TRO}.

\subsubsection{Tactile Sensing}

Emerging tactile sensing technology provides direct information about the gripper-object contact interface at the interface~\cite{Calandra2017Feeling, Si2022Grasp, Yin2022Multimodal}, which resembles advanced human grasping feedback. To provide rich contact information, multi-tactile sensing arrays have been designed and integrated into gripper systems~\cite{Romano2011TRO, Xu2024, kang2026Force, Yin2022Multimodal}. However, the high nonlinearity of tactile sensors requires advanced calibration to address hysteresis, dead zones, saturation, etc. A popular approach in the literature is to use machine learning-based methods to interpret raw measurements and bypass the calibration process. Tactile sensing, in principle, measures pressure and is thus sensitive to the gripper-object contact profile, area size, slip, and vibrations, leading to inherent uncertainties and case-specific feedback~\cite{Friedl2021Evaluation, Piacenza2020TMECH}. Integrated with vision systems, visual-tactile sensing combines the advantages of both modalities, providing rich feedback information for decision-making and control design~\cite{Xu2024, Yunha2025TMECH, Tian2019Manipulation}. However, the computational costs become very demanding, resulting in a low real-time control frequency (i.e., less than $50$~Hz).

\subsubsection{Grasping Control}

A critical component in robotic grasping is contact force design and control. Since interaction dynamics are case-specific, most existing works either ignore the hidden nonlinear gripper-object interaction dynamics or embed them implicitly into the learning process~\cite{Beddow2025TMECH}. The most popular approach designs the grasping force based on prior datasets~\cite{Yunha2025TMECH, calli2015ycb, kasaei2023mvgrasp}. Empowered by machine learning techniques, robotic grasping can perceive and recognize objects, with grasping forces designed accordingly to enable object-specific strategies. In~\cite{Zhou2024SoftGripper, Taunyazov2020Event}, raw tactile feedback is used for object recognition within a neural network. However, when grasping unknown, delicate objects, failures are commonly observed. Secure-and-gentle grasping requires adequate force without aggressive contact. The boundary grasping forces vary depending on the object's mass, compliance, surface roughness, and contact topology~\cite{Zheng2022Grasping, Romano2011TRO, kang2026Force}, and they must be identified online. Meanwhile, most works in the literature focus on object grasp-and-hold tasks~\cite{Li2025Gentle, Romano2011TRO} rather than dynamic grasping force control, which is critical in the presence of environmental uncertainties.


Additionally, PID, MPC~\cite{Xu2024, Tian2019Manipulation}, and machine learning-based control strategies~\cite{Zhou2024SoftGripper, Beddow2025TMECH, Chen2024Robot, Zeng2023Multifingered, kang2026Force} have been proposed in the literature. Physical model-based control primarily focuses on gripper kinematic movement control rather than explicit gripper-object interaction dynamics~\cite{Xu2024, Zhou2024SoftGripper}. The MPC approach enforces constraints on gripper movement and contact force to ensure safe and secure grasping. In~\cite{Xu2024}, an MPC is designed using gripper kinematics and vision-tactile sensing feedback. However, the constraints are only imposed on the kinematic movement of the gripper. Because the interaction dynamics remain unknown, there are no constraints involving the interaction force. Machine learning-based control is also popular given the complicated gripper-object interaction dynamics. In~\cite{Zhou2024SoftGripper}, raw tactile sensing data are directly processed in a neural network without calibration. End-to-end reinforcement learning neglects interaction dynamics, and the grasping force is designed through reward shaping~\cite{Beddow2025TMECH, Chen2024Robot}. Learning from human grasping is another alternative~\cite{Zeng2023Multifingered, kang2026Force}. However, these learning mechanisms rely on offline training and have low transparency and adaptation to unknown objects.


\subsection{Contributions and Organization}
The major contributions of the proposed online grasping dynamics control framework are summarized as follows:
\begin{enumerate}
    \item The proposed tactile sensing module is compact, containing only a single-channel thin-film force-sensitive resistor (FSR). The module does not require an advanced tactile sensing array design nor significant alteration of the grippers and supportive mechatronics. Through GP-based calibration, the tactile sensing provides high-fidelity measurements regardless of the tactile contact interface topologies.

    \item  The proposed novel finite-state machine safely and explicitly identifies the object's geometry, compliance, and object-specific grasping force constraints online in real time, which lays a solid foundation for secure and gentle grasping control. During this process, no prior knowledge or vision inspections are needed.

    \item We explicitly learn the gripper-object interaction dynamics online. An MPC-based controller ensures force control performance and robust constraint enforcement within the information-theoretic envelope of the dynamics learning uncertainties and tactile sensing uncertainties. We provide a comprehensive theoretical proof to establish probabilistic stability and convergence, bounding the true but unknown force control error within a compact residual set. To the best of the authors' knowledge, no previous work has explicitly identified and incorporated the secure-and-gentle grasping force constraints of unknown objects into control design.

    \item We validate the framework through extensive experiments, demonstrating superior control accuracy, slip prevention, and structural protection over baseline architectures across diverse types of objects, ranging from raw tofu to metal blocks. The control system reaches a real-time $100$~Hz frequency.
\end{enumerate}

The remainder of this paper is organized as follows. Sec.~\ref{Sec_overview} outlines the system modeling and formalizes the robust dynamic grasping problem. Sec.~\ref{Sec_tactile_sensing} details the tactile sensor mechatronics design and calibration. Sec.~\ref{Sec_state_machine} identifies the initial grasping force and inherent secure-and-gentle grasping force constraints. Sec.~\ref{Sec_MPC_control} formulates the online interaction dynamics learning and robust MPC scheme and delivers the analysis for error convergence. Sec.~\ref{Sec_Results} presents the experimental setup, sensor configurations, and performance benchmarks. Finally, limitations and future research avenues are discussed in Sec.~\ref{Sec_discussion}, and concluding remarks are provided in Sec.~\ref{Sec_conclusion}.

\begin{figure}
 \centering
 \includegraphics[width=6cm]{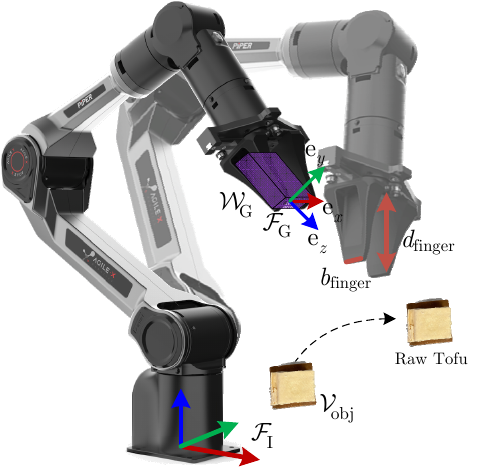}
 \vspace{-2mm}
 \caption{Tactile sensing enabled dynamic grasping of unknown objects.}\label{Fig_Robot}
 \vspace{-2mm}
\end{figure}

\section{Overview}
\label{Sec_overview}

\subsection{System Configuration}


The robotic grasping system is shown in Fig.~\ref{Fig_Robot}, which includes a multi-degree-of-freedom (DOF) dexterous manipulator, a rigid two-finger gripper, and a tactile sensing system. The configuration variable of the grasping system is denoted as $\bm\Theta = [\bm\Theta_m^\top\; \theta_\text{G}]^\top \in \mathbb{R}^{m+1}$, where $\bm\Theta_m = [\theta_1\; \cdots\; \theta_m]^\top$ contains the $m$-joint angles of the manipulator and $\theta_\text{G}$ is the angular position of the gripper motor. To characterize the dynamic grasping process, we establish two reference frames: the inertial frame $\mathcal{F}_\text{I}$ and the gripper body frame $\mathcal{F}_\text{G}$. The inertial frame is fixed at the base of the manipulator, and $\mathcal{F}_\text{G}$ is attached to the geometric center of the parallel fingertips of the gripper. The spatial orientation of $\mathcal{F}_\text{G} = \{\mathbf{e}_x, \mathbf{e}_y, \mathbf{e}_z\}$ is configured such that $\mathbf{e}_x$ is collinear with the line connecting the actuation centers of the left and right gripper fingers, $\mathbf{e}_z$ points outward from the fingers, and $\mathbf{e}_y$ completes the right-handed orthogonal coordinate system. Let $d_{\text{finger}}$ denote the maximum vertical height of the fingers along the local $\mathbf{e}_z$-axis, and $b_{\text{finger}}$ represent the longitudinal length of the fingers along the $\mathbf{e}_y$-axis.

The spatial pose of the moving gripper frame $\mathcal{F}_{\text{G}}$ relative to $\mathcal{F}_{\text{I}}$ is determined by the manipulator's DH parameters. The forward kinematics of the position of $\mathcal{F}_{\text{G}}$ is given by
\begin{subequations}
\label{Eq_Arm_Kine}
\begin{align}
 \bm{p}_{\text{G}} &= \bm {p}_{\text{G}}(\bm \Theta_m), \\
 \dot{\bm{p}}_{\text{G}} &= \mathbf J_m\dot{\bm \Theta}_m \triangleq \frac{\partial\bm {p}_{\text{G}}(\bm \Theta_m)}{\partial \bm \Theta_m} \dot{\bm \Theta}_m,
 \end{align}
\end{subequations}
where $\mathbf J_m = \frac{\partial\bm {p}_{\text{G}}(\bm \Theta_m)}{\partial \bm \Theta_m} \in \mathbb{R}^{3\times 6}$ is the position Jacobian matrix of the gripper. The gripper's movement in $\mathcal{F}_\text{G}$ is expressed as
\begin{equation}
\label{Eq_Gripper_Width}
\begin{bmatrix}
 x \\
 y \\
 z
\end{bmatrix} =
\begin{bmatrix}
 f(\theta_\text{G}) \\
 0 \\
 0
\end{bmatrix},\;
 \theta_\text{G}\in[\theta_{\text{G},\min}, \theta_{\text{G},\max}],
\end{equation}
where $f: \mathbb R \rightarrow \mathbb R^+$ captures the linear motion of the gripper. The local operational space bounded between the fingers, denoted as $\mathcal{W}_{\text{G}} \subset \mathbb{R}^3$, is formally defined by the compact set
\begin{equation}
   \mathcal{W}_{\text{G}} = \left\{\bm w= \begin{bmatrix} x \\ y \\ z \end{bmatrix} \middle| \begin{aligned} -\frac{1}{2}f(\theta_{\text{G}}) &\leq x \leq \frac{1}{2}f(\theta_{\text{G}}) \\ -\frac{1}{2}b_{\text{finger}} &\leq y \leq \frac{1}{2}b_{\text{finger}} \\ -d_{\text{finger}} &\leq z \leq 0  \end{aligned} \right\},
\end{equation}
which can be further mapped back into $\mathcal{F}_\text{I}$ through the coordinate transformation
\begin{equation}\label{Eq_workspace_inertial}
   ^{\text{I}}\mathcal{W}_{\text{G}} = \left\{ \bm{p}\mid \bm{p} = \bm{p}_{\text{G}} + {^{\text{I}}\mathbf{R}_{\text{G}}} \bm w, \bm w \in \mathcal{W}_{\text{G}}(\theta_{\text{G}}) \right\},
\end{equation}
where $^{\text{I}}\mathbf{R}_{\text{G}}(\bm{\Theta}_m) \in \text{SO}(3)$ represents the rotation matrix defining the spatial orientation of $\mathcal{F}_{\text{G}}$ relative to $\mathcal{F}_{\text{I}}$.


The gripper-object interaction dynamics registered at the contact interface are expressed in $\mathcal F_\text{G}$ as
\begin{equation}\label{Eq_Dyna_Inter}
\bm {F} =
\begin{bmatrix}
  F_x \\
  F_y \\
  F_z
\end{bmatrix}
=
\begin{bmatrix}
 1 \\
 \nu_y \\
 \nu_z
\end{bmatrix} f_\text{inter}(x,\dot x, \ddot x), \quad x\leq W_\text{obj},
\end{equation}
where $W_\text{obj}$ is the object width (free of deformation) in the $\mathbf{e}_x$ direction and $F_x \in \mathbb{R}^+$ defines the compressive normal grasping force along $\mathbf e_x$. $F_y, F_z\in \mathbb{R}$ represent the lateral and vertical static friction forces with coefficients $\nu_y, \nu_z>0$. The interaction dynamics~\eqref{Eq_Dyna_Inter} simplify the contact mechanism while preserving the complex, nonlinear characteristics of the interactive object through the single hidden scalar function $f_{\text{inter}}: \mathbb{R}^3 \to \mathbb{R}$.

\begin{figure*}
  \centering
   \includegraphics[width=16cm]{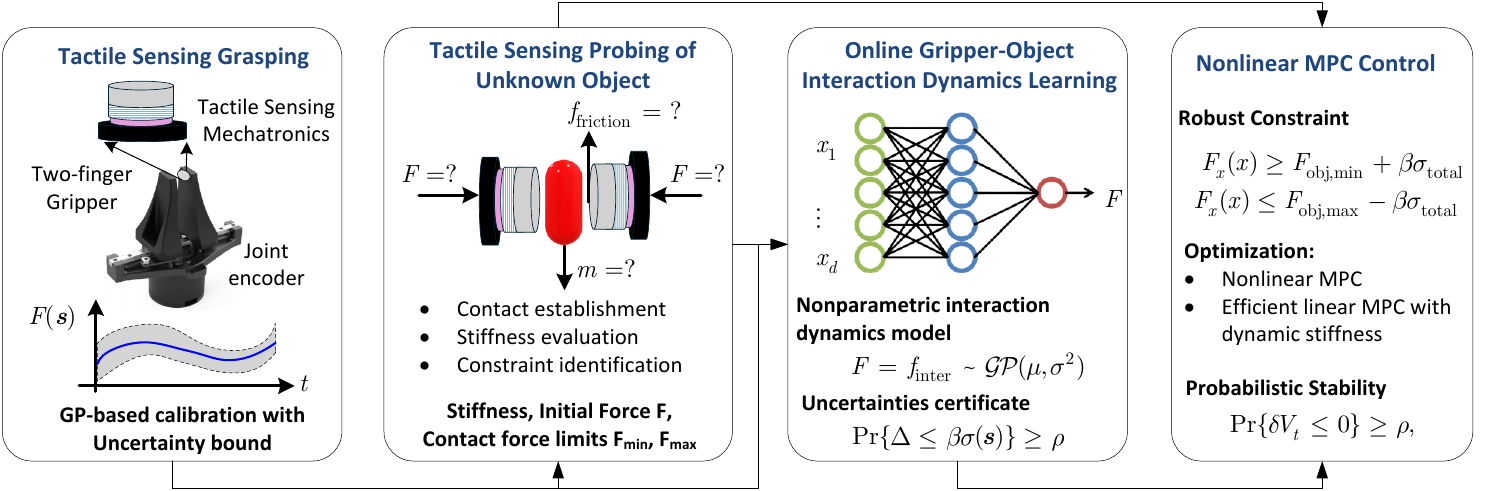}
   \caption{Overview of the proposed vision-free active grasping framework.}
  \label{Fig_Overview}
  \vspace{-2mm}
\end{figure*}

%

Let $\nu \in \mathbb{R}^+$ be the isotropic maximum static friction coefficient, satisfying $\sqrt{\nu_y^2 + \nu_z^2} \leq \nu$. To prevent slippage across the contact interface while preventing object damage during dynamic grasping, the grasping force should satisfy the following constraints:
\begin{subequations}\label{Eq_Consrtrains}
\begin{align}
    &\mathds{C}_\text{secure}: F_x > F_{\text{obj}, \min} \triangleq \frac{\sqrt{F_y^2 + F_z^2}}{\nu} = \frac{mg}{\nu},\\
   & \mathds{C}_\text{gentle}: F_x < F_{\text{obj}, \max},
  \end{align}
\end{subequations}
where $m \in \mathbb{R}^+$ represents the unknown mass, and $F_{\text{obj}, \min}, F_{\text{obj}, \max} \in \mathbb{R}^+$ define the unknown minimum and maximum grasping force thresholds before failure occurs.

\subsection{Problem Formulation}


The target of this work is to build a tactile grasping control system for unknown objects while respecting the unknown, underlying constraints. To establish mathematical tractability and define the scope of this work, we first make the following assumptions regarding the dynamic grasping process.
\begin{enumerate}
   \item [$\mathcal{A}_1$] Initially, the object resides within the workspace of the gripper, such that $\mathcal{V}_{\text{obj}} \cap\, ^{\text{I}}\mathcal{W}_{\text{G}} \neq \emptyset$, where $\mathcal{V}_{\text{obj}}$ in $\mathcal F_{\text{I}}$ represents the volume occupied by the object's geometry.


   \item [$\mathcal{A}_2$] The grasping system has zero \textit{a priori} knowledge of the object, including its mass $m$, friction characteristics, internal material stiffness $\kappa$, and surface topology. However, the object's mass and stiffness are bounded as $m \in (0, m_{\max}]$ and $\kappa \in [\kappa_{\min}, \kappa_{\max}]$.


   \item[$\mathcal{A}_3$] The tactile sensor only registers the single-channel interaction load along the $\mathbf{e}_x$ direction, despite the effective contact region potentially spanning a continuous surface area.

   \item [$\mathcal{A}_4$] The system imposes no geometric or distribution constraints on the contact topology between the object and the tactile sensor.
\end{enumerate}


The assumption that the object initially resides within the intersecting volume of the workspace is reasonable in the sense that tactile-sensing grippers can search for the object without vision and reconstruct its 3-dimensional structure~\cite{Zhang2026Object, Pan2026IROS, Comi2024DeepSDF}. Furthermore, $\mathcal{A}_2$ is made based on the grasping capability (the maximum grasping force) of the physical system. Meanwhile, $\mathcal{A}_3$ and $\mathcal{A}_4$ are realistic and practical. Due to the limited area of the gripper fingers, the tactile sensor must be compact and capable of adapting to various contact surface topologies. Given the above system setup and assumptions, we present the problem statement.


\textbf{Problem statement:} Given a completely unknown object and the tactile-sensing grasping system under Assumptions~$\mathcal{A}_1$ through~$\mathcal{A}_4$, this work aims to design an online machine learning-based dynamic grasping control framework to manipulate and transport the object while respecting the unknown constraints in~\eqref{Eq_Consrtrains}.

\begin{figure*}
  \centering
   \includegraphics[width=18cm]{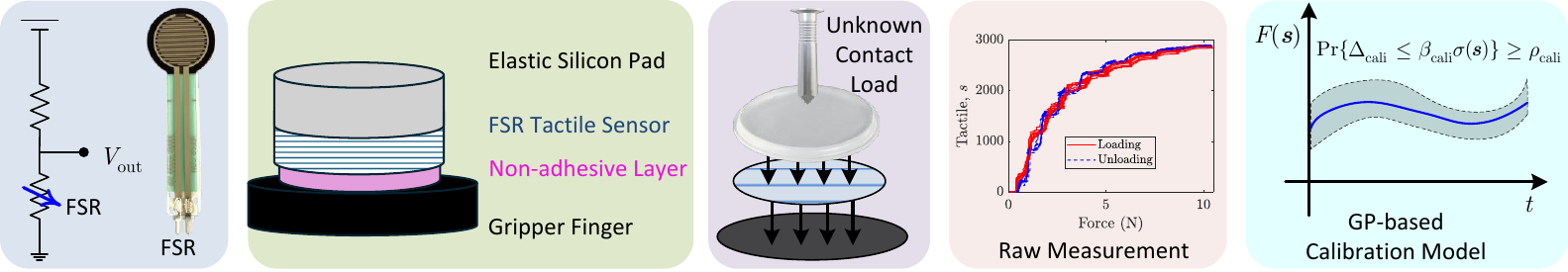}
   \vspace{-5mm}
   \caption{The tactile sensing module. The FSR sensor is placed into a voltage divider circuit. With the design, the point load will be distributed among the active area of the FSR sensor. The raw measurement is highly nonlinear. Through GP-based calibration, we map the raw data in real time to contact force, with bounded error.}
  \label{Fig_Gripper_Tacitle}
  \vspace{-3mm}
\end{figure*}

\subsection{Overview of the Proposed Approach}


To resolve the challenges of parametric uncertainties ($\mathcal{A}_2$), completely unobserved shear and friction states ($\mathcal{A}_3$), and unmodeled contact topologies ($\mathcal{A}_4$), we propose a vision-free active grasping architecture. The proposed approach tightly couples tactile mechatronics, constraint identification, online interaction dynamics characterization, and predictive control; see the schematic diagram in Fig.~\ref{Fig_Overview}. The framework is organized into three core pillars:
\begin{enumerate}

   \item \textbf{Tactile Sensing Mechatronics:} To interface directly with unknown object profiles and contact topologies ($\mathcal{A}_4$), we design a grasping interaction sensing module using affordable tactile sensors ($<\$5$). The designed sensing module automatically adapts to various contact types. The severe nonlinearities and cross-coupling inherent in raw sensor measurements are systematically resolved using a machine learning-based calibration.


   \item \textbf{Finite State Machine for Object Probing and Constraint Identification:} Because the minimum physical slip threshold $F_{\text{obj}, \min}$ and structural destruction limit $F_{\text{obj}, \max}$ are unknown parameters ($\mathcal{A}_2, \mathcal{A}_3$), an FSM is deployed to manage operational execution phases and identify these unknown constraints.


   \item \textbf{Online Interaction Dynamics Learning and MPC Control:} To accommodate unmodeled elastic material variations ($\mathcal{A}_2$), a data-driven non-parametric model learns the gripper-object interaction dynamics $f_{\text{inter}}$ online. This adaptive interaction model is embedded directly inside a nonlinear MPC engine alongside the identified constraints. The model uncertainty is explicitly incorporated into the control design to ensure robust predictive control.

\end{enumerate}


By integrating these three strategies, the proposed architecture bypasses the need for global vision networks or exact material profiles, achieving highly robust and safe dynamic grasping of unmodeled payloads.

\section{Gripper Tactile Sensing System}
\label{Sec_tactile_sensing}


In this section, we present the mechatronic design of a compact tactile sensing system for grasping control. The tactile sensing system provides real-time force feedback on the gripper-object interaction.

\subsection{Tactile Sensor Mechatronics}


The structural architecture of the tactile sensing perception layer is detailed in Fig.~\ref{Fig_Gripper_Tacitle}. Each parallel finger is integrated with a dedicated tactile sensing module featuring a single-channel FSR. Under external normal pressure, the spatial distribution of internal transduction varies, thereby altering the material's conductivity. This dynamic conductance change is subsequently mapped to a continuous analog signal and measured via a standard voltage divider circuit. The raw analog signal is then sampled using a microcontroller.


Using an FSR-based tactile sensing system presents several key challenges. First, the FSR can only measure normal force, rendering the tangential shear components completely unobservable ($\mathcal{A}_3$). Second, the transducer substrate is inherently sensitive to the spatial pressure distribution across its active area. Because it yields a single-channel measurement, the resulting output suffers from structural ambiguity: the system cannot distinguish whether a change in the electrical signal is driven by an alteration in the absolute normal force magnitude or by a shifting contact topology and pressure distribution profile. Under $\mathcal{A}_4$, this coupling severely distorts the feedback loop, as the contact profile may dynamically switch between point, line, or irregular surface patch contacts. Furthermore, the raw signal is corrupted by multiple nonlinearly coupled noise sources, including hysteresis, viscoelastic recovery, and material fatigue.


To resolve issues arising from diverse physical contact types and to minimize mechanical noise, a specialized medium-elasticity elastomeric pad is fabricated and overmolded directly onto the top of the circular FSR active area, while a non-adhesive film is attached to the bottom. This arrangement relaxes the FSR's contact with the gripper, thereby preventing internal pre-loading. When encountering distinct contact topologies, such as point contact from a sharp edge, this compliant interface undergoes rapid elastic deformation. It redistributes concentrated stresses across the active sensing area, ensuring that irregular geometric boundaries do not result in localized sensor punctures or premature signal saturation. Furthermore, this composite design inherently possesses the structural ability to damp high-frequency tactile noise induced by micro-slip transitions and dynamic structural vibrations.

\subsection{GP Tactile Sensing Calibration}


While the elastomeric overlay ensures geometric adaptation, it severely exacerbates the intrinsic hysteresis, viscoelastic, and history-dependent electrical behavior of the sensor measurement~\cite{Donlon2018IROS, Li2024EasyCalib}. Instead of using a traditional analytical model-based calibration approach, a GP-based calibration model that captures the relationship between applied force $F$ and raw measurement $s$ is constructed. The bounded calibration uncertainties are utilized in the control design~\cite{HanRAL2021, Han2025IROS, Han2024JDSMC}.


We deploy GP-based regression mapping. Let the haptic feature vector containing historical data be defined as $\bm{s}_i = [s_i \, s_{i-1} \, \dots \, s_{i-n+1}]^\top \in \mathbb{R}^n$. The calibration model is denoted as
\begin{equation}\label{Eq_GP_Sensor}
  F = f_\text{cali} (\bm{s}) \sim \mathcal{GP}\left(\mu(\bm{s}), \sigma^2(\bm{s})\right),
\end{equation}
where $F = f_{\text{cali}}(\bm{s})$ denotes the true underlying mapping, while $\mu(\bm{s})$ and $\sigma^2(\bm{s})$ represent the mean and variance functions of the GP model, respectively. Offline calibration is executed to collect a training dataset $\mathbb{D}_{\text{cali}} = \{(\bm{s}_i, F_{i})\}_{i=1}^{N}$, where $F_{i} \in \mathbb{R}$ represents the true normal force target corrupted by noise $\epsilon \sim \mathcal{N}(0, \sigma_n^2)$. Let $\mathbf{S} = [\bm{s}_1\, \dots\, \bm{s}_N]^\top \in \mathbb{R}^{N \times n}$ denote the aggregated design matrix, and let $\mathbf{F} = [{F}_{1}\, \dots\, {F}_{N}]^\top \in \mathbb{R}^N$ be the associated target vector.


Given a real-time haptic query vector $\bm{s}$ collected at runtime step $t$, the joint distribution of the training targets $\mathbf{F}$ and the unobserved output force $F$ is Gaussian~\cite{RasmussenGPML}
\begin{equation*}
\begin{bmatrix}
  \mathbf{F} \\
  F
\end{bmatrix} \sim
\mathcal{N}\left( \mathbf{0}, \,
\begin{bmatrix}
\bm{K}(\mathbf{S}, \mathbf{S}) + \sigma_n^2\mathbf{I}_N & \bm{k}(\mathbf{S}, \bm{s}) \\
\bm{k}^\top(\mathbf{S}, \bm{s}) & k(\bm{s}, \bm{s})
\end{bmatrix} \right),
\end{equation*}
where $\mathbf{K}(\mathbf{S}, \mathbf{S}) \in \mathbb{R}^{N \times N}$ represents the training covariance matrix, and $\bm{k}(\mathbf{S}, \bm{s}) \in \mathbb{R}^N$ acts as the cross-covariance vector between the training set and the runtime query. Each entry of $\mathbf{K}$ or $\bm{k}$ is obtained using the kernel function
\begin{equation}\label{Eq_GP_Kernel}
    k(\bm{s}, \bm{s}') = \sigma_f^2 \exp\left( -\frac{1}{2} (\bm{s} - \bm{s}')^\top \mathbf{L}^{-2} (\bm{s} - \bm{s}') \right),
\end{equation}
where $\sigma_f^2$ represents the signal scaling variance, and $\mathbf{L} = \text{diag}(\ell_1, \ell_2, \dots, \ell_n) \in \mathbb{R}^{n \times n}$ represents the length-scale matrix. The hyperparameter $\{\sigma_f^2, \mathbf{L}, \sigma_n^2\}$ is optimized by maximizing the log-marginal likelihood of the training dataset.


By applying standard Gaussian conditioning rules, the predictive distribution is calculated analytically in closed form, yielding the mean and variance~\cite{RasmussenGPML}
\begin{subequations}
\label{Eq_GP_Pred}
 \begin{align}
   \mu(\bm{s}) &= \bm{k}^\top \left[ \mathbf{K} + \sigma_n^2\mathbf{I}_N \right]^{-1} \mathbf{F}, \label{Eq_gp_mean} \\
   \sigma^2(\bm{s}) &= k(\bm{s}, \bm{s}) - \bm{k}^\top \left[ \mathbf{K} + \sigma_n^2\mathbf{I}_N \right]^{-1} \bm{k}, \label{Eq_gp_var}
\end{align}
\end{subequations}
where the implicit dependencies on $\bm{s}$ and $\mathbf{S}$ in $\bm{k} = \bm{k}(\mathbf{S}, \bm{s})$ and $\mathbf{K} = \mathbf{K}(\mathbf{S}, \mathbf{S})$ are omitted for brevity. The computed mean $\mu$ provides an accurate measurement of contact force, while the variance $\sigma^2$ quantifies the localized calibration confidence under the following lemma. The tactile sensing force measurement is denoted as $F_x(\bm{s}) = \mu(\bm{s})$.

\begin{lem}[Theorem 6 in~\cite{GPtheory}]
\label{Lemma_error_bound}
Given the training data set $\mathbb D_{\text{cali}}$, if the
kernel function~\eqref{Eq_GP_Kernel} is chosen such that $F$ has a finite reproducing
kernel Hilbert space norm $\|F\|_k< \infty$, for given $\rho_\text{cali} \in(0,1)$,
the calibration error $\Delta_\text{cali} =\mu (\bm s)- F$ relative to the ground truth ($F$) is bounded probabilistically
\begin{equation}
   \Pr \left\{ | \Delta_\text{cali} | \le \beta_\text{cali} \sigma(\bm s) \right\} \ge \rho_\text{cali} ,
  \label{Eq_lemma_probability}
\end{equation}
where $\beta_\text{cali} = \sqrt{2\|F\|_k^2 + 300\gamma \ln^3 \left(\frac{N + 1}{1 - \rho} \right)}$ and
$\gamma = \max_{\bm{s},\bm{s}' \in \mathbf S} \frac{1}{2} \ln \left| 1 + \sigma_n^{-2}{k}(\bm{s}, \bm{s}') \right|$ measures the maximum information gain.
\end{lem}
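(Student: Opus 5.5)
This lemma is a specialization of the GP uniform-error result of Srinivas et al.\ (Theorem~6 in~\cite{GPtheory}), so the proof verifies that the calibration setting meets that theorem's hypotheses and then translates notation.

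\textbf{Step 1: the RKHS structure.} Take the squared-exponential kernel~\eqref{Eq_GP_Kernel} with the optimized hyperparameters $\{\sigma_f^2,\mathbf L,\sigma_n^2\}$ held fixed. This kernel is symmetric, positive definite and bounded, with $k(\bm s,\bm s)=\sigma_f^2$. It therefore induces a reproducing kernel Hilbert space $\mathcal H_k$ on the compact set of admissible haptic feature vectors $\bm s$; compactness holds because the ADC outputs are bounded. The hypothesis $\|F\|_k<\infty$ says exactly that the true map $f_{\text{cali}}$ belongs to $\mathcal H_k$. Theorem~6 additionally requires the noise to be a martingale difference sequence that is uniformly bounded by some $\sigma_n$ (or $\sigma_n$-sub-Gaussian). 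I would argue that the Gaussian calibration noise $\epsilon\sim\mathcal N(0,\sigma_n^2)$ on the reference force sensor satisfies the sub-Gaussian version, which is the form later extensions of the theorem use. I expect this to be the main obstacle: the original statement assumes almost surely bounded noise, so I would either invoke the sub-Gaussian variant of the result or truncate the Gaussian noise at a level absorbed into $\rho_{\text{cali}}$.

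\textbf{Step 2: invoke the theorem and match notation.} Under these conditions, Theorem~6 states that, with probability at least $1-\delta$,
\[
|\mu_{N}(\bm s)-f_{\text{cali}}(\bm s)|\le \beta_N^{1/2}\sigma_{N}(\bm s)\quad \text{for all } \bm s ,
\]
where $\beta_N=2\|F\|_k^2+300\gamma_N\ln^3(N/\delta)$ and $\gamma_N$ is the maximal information gain after $N$ samples. The posterior mean and standard deviation in that theorem are exactly~\eqref{Eq_GP_Pred}, since the same kernel and noise variance are used. Setting $\delta=1-\rho_{\text{cali}}$ and $\beta_{\text{cali}}=\beta_N^{1/2}$ gives~\eqref{Eq_lemma_probability}. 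Because the bound holds uniformly in $\bm s$, it holds in particular at the runtime query; $|\Delta_{\text{cali}}|=|\mu-F|$ is symmetric, so the sign convention does not matter.

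\textbf{Step 3: reconcile the constants.} The logarithmic term in the statement uses $(N+1)/(1-\rho)$ instead of $N/\delta$. This only enlarges $\beta$, so the probability bound is preserved (the $\rho$ there should read $\rho_{\text{cali}}$). The paper's $\gamma$ is a surrogate for $\gamma_N=\max_{A,|A|\le N}\tfrac12\ln|\mathbf I+\sigma_n^{-2}\mathbf K_A|$, so I would note that it must be interpreted as this maximal information gain over the training set $\mathbf S$, taken as an upper bound. Substituting a larger $\gamma$ can only loosen the inequality, so the stated probability still holds.
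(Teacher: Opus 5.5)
Your route is the paper's route. The paper gives no argument beyond citing Theorem~6 of~\cite{GPtheory}, and you do the same with $\delta=1-\rho_{\text{cali}}$, while checking hypotheses the paper never checks.

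\textbf{The noise step.} The gap sits in the one hypothesis you flag and then leave open. Theorem~6 requires noise almost surely bounded by the same $\sigma$ that the GP uses as its noise parameter, and its $300\gamma\ln^3(\cdot)$ constant comes from a martingale concentration step that uses exactly that bound. The calibration targets instead carry $\epsilon\sim\mathcal N(0,\sigma_n^2)$. Neither repair you offer returns the stated $\beta_{\text{cali}}$:
\begin{itemize}
\item Truncating at $\sigma_n$ is useless, since all $N$ noises land in $[-\sigma_n,\sigma_n]$ with probability about $0.68^N$. Truncating at $c\sigma_n$ with $c$ large gives a bound $c\sigma_n\neq\sigma_n$, which rescales the noise term in the proof. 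The probability spent on the truncation event also forces a smaller $\delta$ inside the $\ln^3$. So the constant changes either way.
\item The sub-Gaussian extensions (for instance Chowdhury and Gopalan's self-normalized kernel bound) are proved by a different argument. They give a multiplier of the form $\|F\|_k+\mathcal O\bigl(\sqrt{\gamma+\ln(1/\delta)}\bigr)$ under their own regularization convention. To reach the lemma as stated, you would still have to show that the Srinivas-type $\beta_{\text{cali}}$ dominates that multiplier, and that the convention matches the $\sigma_n^2$ used in~\eqref{Eq_GP_Pred}. This is easy for $N=2000$ and non-negligible $\gamma$, but it is a step you have not made.
\end{itemize}
The clean fixes are to add almost-surely bounded noise as a hypothesis of the lemma, or to replace $\beta_{\text{cali}}$ by the sub-Gaussian multiplier. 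The gap is inherited from the statement, but your two proposed ways of closing it do not close it. Separately, fitting $\{\sigma_f^2,\mathbf L,\sigma_n^2\}$ by marginal likelihood on $\mathbb D_{\text{cali}}$ makes the kernel data-dependent, which Theorem~6 does not cover. Saying the hyperparameters are ``held fixed'' is a convention, not a verification.

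\textbf{Bookkeeping in Steps~2--3.} Two points need correcting.
\begin{itemize}
\item Theorem~6 actually reads $\Pr\{\forall T,\ \forall\bm s:\ |\mu_T(\bm s)-f(\bm s)|\le\beta_{T+1}^{1/2}\sigma_T(\bm s)\}\ge 1-\delta$. After $N$ training points the multiplier is therefore $\beta_{N+1}$. The $(N+1)$ in $\beta_{\text{cali}}$ is the theorem's own index, not slack, and the information gain required is $\gamma_{N+1}$, not $\gamma_N$.
\item The paper's $\gamma$, taken literally, is $\max_{\bm s,\bm s'}\tfrac12\ln\bigl(1+\sigma_n^{-2}k(\bm s,\bm s')\bigr)=\tfrac12\ln(1+\sigma_f^2/\sigma_n^2)$. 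That is the gain of a single observation, and by monotonicity of information gain it is a \emph{lower} bound on $\gamma_{N+1}$.
\end{itemize}
So your remark that ``substituting a larger $\gamma$ can only loosen the inequality'' runs the wrong way for the formula as written. The lemma holds only after $\gamma$ is redefined as $\max_{|A|\le N+1}\tfrac12\ln\det(\mathbf I+\sigma_n^{-2}\mathbf K_A)$, or an upper bound on it. Present that as a correction to the statement, not as a conservative reading of it.
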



With the availability of accurate force measurements, we design the following preliminary PID control scheme to regulate the gripper-object interaction force:
\begin{align}\label{Eq_PID_Control}
  u &= K_p (F_x^d(\bm{s})-F_x) + K_d \frac{d(F_x^d(\bm{s})-F_x)}{dt} \nonumber\\
  &\quad + K_i\int_{0}^{t}(F_x^d(\bm{s})-F_x)\,dt,
\end{align}
where $K_p, K_d, K_i > 0$ are the control gains, $F_x(\bm{s}) = \mu(\bm{s})$ is evaluated based on the tactile sensing system prediction, and $u$ is the gripper displacement. The reference width of the gripper is then commanded to
\begin{equation}\label{Eq_Gripper_Control}
  x(t+1) = x(t) + u
\end{equation}
by regulating the angular position $\theta_\text{G}$ of the motor shaft using~\eqref{Eq_Gripper_Width}. The primitive controller~\eqref{Eq_PID_Control} is necessary when the gripper-object interaction dynamics~\eqref{Eq_Dyna_Inter} is unknown for force regulation. In the following, unless stated otherwise, the term "contact force" refers to the GP prediction mean value rather than the ground truth.

\section{Grasping Force Probing and Constraint Identification}
\label{Sec_state_machine}


Gentle and secure grasping of unknown objects requires appropriate grasping force design and the identification of the constraints $\mathds{C}_\text{secure}$ and $\mathds{C}_\text{gentle}$. In this section, we propose an FSM-based probing strategy to capture the object's geometry and compliance, design the contact force, and identify these constraints.

\subsection{Initial Grasping Force Probing}


Under $\mathcal{A}_1$, the object resides in the workspace of the gripper. Grasping and transporting the object requires identifying the critical initial grasping force. Due to the complete absence of \textit{a priori} physical properties dictated by the blind manipulation $\mathcal A_2$, a constant pre-selected gripping force introduces severe operational risks, which may either break delicate objects or cause slipping in the case of a heavy object. Consequently, the initial grasping force must actively adapt to the object's hidden mechanical characteristics.

\begin{figure}
 \centering
 \includegraphics[width=8cm]{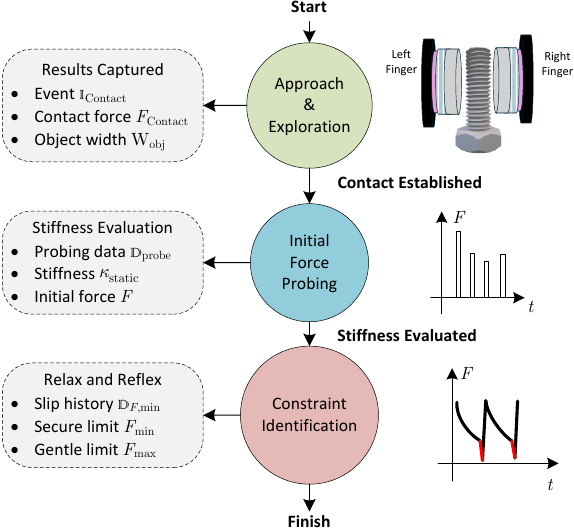}
 \vspace{-2mm}
 \caption{FSM topology for sequential contact initialization, stiffness evaluation, and online tactile safety boundary identification.}
 \label{Fig_FSM}
 \vspace{-3mm}
\end{figure}

\subsubsection{Contact Establishment}


The contact initialization event is formally defined by evaluating the sensory feedback from each parallel finger module. Let $F_{x,l}$ and $F_{x,r}$ represent the contact forces of the left and right fingers, respectively. The gripper closes its fingers gently at a constant speed, and a valid contact event is registered as
\begin{equation*}\label{Eq_contact_event}
  \mathds{I}_{\text{Contact}} =
  \begin{cases}
     1, \text{if } F_{x,l}(\bm s) \ge F_{\text{thr.}} \;\text{and}\; F_{x,r}(\bm s) \ge F_{\text{thr.}}, \\
   0, \text{otherwise},
  \end{cases}
\end{equation*}
where $F_{\text{thr.}} \in \mathbb{R}^+$ denotes the static engagement threshold. Furthermore, the actual contact force and the width of the object are registered at the time instant $t$ when $\mathds{I}_{\text{Contact}}(t) =1$ as
\begin{equation*}
   W_{\text{obj}} = x (t), \;\; F_{\text{Contact}} = F_x(\bm s) = \frac{F_{x,l}(\bm s) + F_{x,r}(\bm s)}{2},
\end{equation*}
where $F_x(\bm s)$ represents the average contact force.

\subsubsection{Grasping Force Estimation}

Once contact is established, the FSM execution loop safely triggers a transition from the exploration phase to the active stiffness probing routine to identify the initial grasping force. This phase aims to estimate the object's material compliance safely. To achieve this, the gripper jaws are commanded to apply multiples of $F_\text{Contact}$ under the controller~\eqref{Eq_PID_Control}. However, the engagement is stopped and restored immediately if a certain further deformation of the object is detected. At each measurement interval $i$, the probing information is updated:
\begin{equation}\label{Eq_probing}
 \mathbb{D}_\text{Probe}\leftarrow \mathbb{D}_{\text{Probe}}\cup \left\{F_{x,i}^-(\bm s), F_{x, i}^+(\bm s), x_{i}^-, x_{i}^+\right\},
\end{equation}
where \enquote{$-$} and \enquote{$+$} represent variables before and after each single probing step.

Since the deformation of the object (namely, the displacement $x_{i}^- - x_{i}^+$) is limited, given the probing data $\mathbb{D}_\text{Probe}$, we can identify the structural stiffness of the object by evaluating the linear relationship between the applied normal force differential and the resulting joint displacement profile across the collected sequence:
\begin{equation}
   \kappa_\text{static} = \frac{1}{|\mathbb{D}_\text{Probe}|} \sum_{i} \frac{F_{x,i}^+(\bm s) - F_{x,i}^-(\bm s)}{|x_{i}^+ - x_{i}^-|}.
  \label{Eq_stiffness}
\end{equation}
The stiffness metric $\kappa_\text{static}$ serves as a direct representation of the unmodeled payload's physical compliance. Under assumption $\mathcal{A}_2$, where no prior geometric or material properties are available, the initial grasping force is designed to scale dynamically based on this identified stiffness parameter:
\begin{equation}
   F = F_{\text{Contact}} + (\kappa_\text{static} - \kappa_{\min}) \frac{F_{\text{Contact}, \max} - F_{\text{Contact}}}{\kappa_{\max} - \kappa_{\min}},
  \label{Eq_Grasp_Force}
\end{equation}
where $F_{\text{Contact},\max} \in \mathbb{R}^+$ is the maximum grasping force provided by the gripper's actuator. Under this mechanism, a larger grasping force is applied to a more rigid object.

\subsection{Grasping Constraint Identification}

The grasping force probing provides a systematic way to detect the mechanical compliance of the object. However, successful grasping and transporting require further satisfaction of $\mathds{C}_\text{secure}$ and $\mathds C_\text{gentle}$. The initial grasping force could be either large (yet not large enough to break the object) or small (leading to slip), since the mass and surface friction remain unchecked. Estimation of the mass and friction situation is a complicated task as the tactile sensing system only provides a single-channel, single-direction measurement. Instead, we directly identify $ F_{\text{obj},\min}$ and $F_{\text{obj},\max}$ for $\mathds{C}_\text{secure}$ and $\mathds{C}_\text{gentle}$ through a relax-and-reflex mechanism.

Under the grasping force, the robotic arm is used to lift and transport the object to the target location. If secure grasping is established, the grasping force inherently satisfies the underlying constraint $\mathds C_{\text{secure}}$. Since $F_x (\bm s)>F_{\text{obj}, \min}$, we start to relax the reference force as $F_{x}^d (t)= \alpha_1 F_x^d(t-1), \alpha_1 \in(0, 1)$, under the controller~\eqref{Eq_PID_Control}. A valid slip event $\mathds{I}_{\text{slip}}$ due to the violation of the constraint $\mathds{C}_\text{slip}$ is registered as
\begin{equation}
  \mathds{I}_{\text{slip}} =
  \begin{cases}
     1,\; \text{if } F_{x}(\bm s) < \lambda F_x^d, \; {\dot F}_x^d <0,\\
   0,\; \text{otherwise},
  \end{cases}
  \label{Eq_slip_event_trigger}
\end{equation}
where $\lambda_1 \in (0, 1)$. The condition ${\dot F}_x^d <0$ rules out slips in the reference grasping force restoration phase due to controller overshoot. To avoid failures due to loss of contact, the reference force immediately increases based on the previously identified minimum force once slip occurs.

%
%
%

At each valid slip, $F_x(\bm s)$ represents the minimum grasping force, since the constraint is validated. Therefore,
 \begin{equation}
   \mathbb{D}_{F_{\min}} \leftarrow \mathbb{D}_{F_{\min}} \cup \left\{ F_x(\bm s(t)) \right\}, \text{if } \mathds I_\text{slip}(t) = 1.
  \label{Eq_memory_update}
\end{equation}
Over multiple trials, the minimum grasping force is identified as
\begin{equation}\label{Eq_F_min}
   F_{\text{obj},\min} = \frac{1}{|\mathbb{D}_{F_{\min}}|} \sum_{F \in \mathbb{D}_{F,\min}} F,
\end{equation}
which completes the constraint $\mathds{C}_\text{secure}$. For $\mathds{C}_\text{gentle}$, the temporary maximum grasping force is defined as
\begin{equation}\label{Eq_F_max_temp}
 \hat{F}_{\text{obj},\max} = \min\{\kappa_{\text{static}} \cdot \epsilon_{\max},\; \xi \cdot F_{\text{obj},\min},\; F_{\text{G}, \max}\},
\end{equation}
where $\epsilon_{\max}\in(0, 1)$ is the maximum deformation, $\xi>1$, and $F_{\text{G}, \max}$ is the maximum grasping force provided by the gripper.

\begin{rem}
  The obtained maximum grasping force~\eqref{Eq_F_max_temp} will not cause significant deformation or damage to hard objects. For soft and deformable objects,~\eqref{Eq_F_max_temp} might be overestimated. Thin films, such as cardboard and aluminum plates, can display spring-like elastic properties within a certain force range. However, when the grasping force approaches the yield limit, the reactive force drops, and permanent and irreversible deformation occurs.
\end{rem}

To verify the temporary maximum grasping force, the grasping force reference is increased slowly toward the temporary maximum force as $F_{x}^d (t)= \alpha_2 F_x^d(t-1), \alpha_2 >1$. We define the event of objects yielding against $F_{x}^d (t)$ as
\begin{equation}
  \mathds{I}_{\text{yield}} =
  \begin{cases}
     1,\; \text{if } F_{x}(\bm s) < \lambda_2 F_x^d,\; {\dot F}_x^d >0, \\
   0,\; \text{otherwise},
  \end{cases}
  \label{Eq_yield_event}
\end{equation}
where $\lambda_2 \in (0, 1)$. The gentle grasping ceiling force is thus registered as
 \begin{equation}
   F_{\text{obj},\max}= F_{x}(\bm s(t)), \text{if } \mathds I_\text{yield}(t) = 1.
  \label{Eq_F_max}
\end{equation}
If no yielding occurs, $F_{\text{obj},\max}=\hat{F}_{\text{obj},\max}$.

\begin{rem}
  The proposed object slip and yield event detection mechanism is applied to oversee the grasping process from the moment contact is established (i.e., when applying the force in~\eqref{Eq_Grasp_Force}) until constraint-enforced control takes over. If slip or yielding is detected during the initial lifting process, the contact force is adjusted automatically.
\end{rem}

All the aforementioned procedures are executed entirely online, enabling the physical constraints to be identified in real time. With these constraints verified dynamically, the manipulation and transport of unmodeled payloads become certifiably safe, paving the way for predictive control formulations. However, the exact underlying interaction dynamics remain unknown. To address this, the subsequent section introduces an online dynamics learning scheme designed to identify the unmodeled interaction $f_{\text{inter}}$. Furthermore, an MPC framework, which replaces the baseline PID controller, is designed to actively track the reference force while explicitly respecting the identified constraints.

\section{Dynamic Grasping via Predictive Contact Control}
\label{Sec_MPC_control}


The transition from baseline control to a predictive framework is motivated by three critical limitations: (1) the object's stiffness may vary dynamically due to nonuniform material compliance; (2) the PID loop ignores the coupled contact dynamics ($f_{\text{inter}}$); and (3) the PID control design does not take constraints into consideration, where overshoot or oscillation can cause constraint violations. To address these deficiencies, this section introduces an integrated model learning and predictive control framework.

\subsection{Online Interaction Dynamics Learning}

The grasping constraints are identified online, and the object is already in the grasping phase. Therefore, the characterization of the interaction dynamics must also be achieved online. This is based on the consideration that $f_\text{inter}$ is a shared dynamic property between the gripper and the object under the current specific contact topology. Instead of traditional system identification, we consider a data-driven method.

To identify the unmodeled interaction dynamics without relying on precise analytical models, a multi-frequency sine-excitation grasping force reference (subject to the identified constraints) is injected into the system under PID control. The resulting state-force pairs are collected as $
\mathbb{D}_{\text{inter}} = \left\{ (\bm{x}_i, F_{x,i}(\bm s)) \right\}_{i=1}^{M}$, where $M \in \mathbb{Z}^+$ represents the total number of valid observations. The input feature vector $\bm{x}_i \in \mathbb{R}^{d}$ structures a localized temporal sequence of normalized displacement depths
\begin{equation}
   \bm{x}_i = \frac{1}{W_{\text{obj}}} \begin{bmatrix} x_i & x_{i-1} & \cdots & x_{i-d+1} \end{bmatrix}^\top \in \mathbb{R}^{d}.
  \label{Eq_input_feature_vector}
\end{equation}
Using $\mathbb{D}_{\text{inter}}$, the nonlinear interaction dynamics $f_{\text{inter}}$ are learned online via GP regression, sharing a mathematical formulation similar to tactile sensor calibration~\eqref{Eq_GP_Sensor} as
\begin{equation}
   F = f_\text{inter} \sim \mathcal{GP}(\mu(\bm x), \sigma^2(\bm x)),
  \label{Eq_gp_interaction}
\end{equation}
where $F = f_\text{inter}$ represents the underlying true relationship. The prediction is denoted as $F_x(\bm x)=\mu(\bm x)$, obtained similarly to~\eqref{Eq_GP_Pred}, and the same type of kernel function is used. The error is denoted as $\Delta_\text{inter}=\mu(\bm x) - F_x(\bm s)$. To accommodate the computational cost, the nonlinear interaction dynamics are learned online in a separate thread alongside the main program execution.

%
%

\begin{rem}\label{Rem_GP_Interaction}
During the grasping phase, particularly when interacting with rigid objects, the physical displacement along $\mathbf{e}_x$ is strictly constrained. Under these conditions, utilizing numerical differentiation to compute velocity $\dot{x}$ and acceleration $\ddot{x}$ introduces significant high-frequency sensory noise and truncation errors. To preserve data fidelity, the proposed GP formulation bypasses explicit time derivatives entirely, leveraging a $d$-step historical horizon of normalized depth states to implicitly capture the underlying system acceleration and damping dynamics. Furthermore, because direct, isolated ground-truth measurements of the internal interaction forces are inherently unavailable during live operations, the physical force readings ($F_x = \mu(\bm s)$) harvested from the calibrated tactile sensors are utilized within the online regression framework. In contrast, during the tactile sensing system calibration phase, the ground-truth force is measured using force sensors.
\end{rem}

Based on Remark~\ref{Rem_GP_Interaction}, the overall error $\Delta_{\text{total}}$ can be explicitly decomposed as
\begin{align}\label{Eq_error_decomposition}
 \Delta_\text{total} &= F_x(\bm x) - F_x=\mu(\bm x) - F_x\nonumber\\
 & = [\mu(\bm x) - F_x(\bm s)+ F_x(\bm s) - F_x]\nonumber\\
 & = [\mu(\bm x) - F_x(\bm s)] + [\mu(\bm s) - F_x]\nonumber\\
 & = \Delta_\text{cali} + \Delta_\text{inter},
\end{align}
where
\begin{itemize}
  \item $F_x$ is the unknown ground truth of the contact force;
  \item $F_x(\bm s) = \mu (\bm s)$ is the tactile sensing model prediction;
  \item $F_x(\bm x) = \mu (\bm x)$ is the gripper-object interaction dynamics prediction.
\end{itemize}

\begin{lem}
\label{Lemma_cascaded_error_bound}
Let the true calibration profile $F = f_{\text{cali}}(\bm{s})$ and the true interaction dynamics $F = f_{\text{inter}}(\bm{x})$ possess finite RKHS norms under their own kernel functions. The overall prediction error $\Delta_{\text{total}}$ relative to the ground truth in \eqref{Eq_error_decomposition} is bounded probabilistically by the composite statistical uncertainty of the cascaded system
\begin{equation}
   \Pr \left\{ | \Delta_{\text{total}}| \le \bar{\beta} \sigma_{\text{total}}(\bm{x}, \bm{s}) \right\} \ge \rho,
  \label{Eq_lemma_cascaded_probability}
\end{equation}
where $\sigma_{\text{total}}(\bm{x}, \bm{s}) = \sqrt{\sigma_{\text{cali}}^2(\bm{s}) + \sigma_{\text{inter}}^2(\bm{x})}$ represents the joint posterior standard deviation, and $\bar{\beta} = \sqrt{2}\max(\beta_{\text{inter}}, \beta_{\text{cali}})$ serves as the conservative unified information-theoretic scaling multiplier for a target confidence probability $\rho \in (0, 1)$.
\end{lem}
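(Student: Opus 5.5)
The plan is to combine the two single-source bounds through a union bound on the failure events and then merge the two thresholds with an elementary inequality. First I apply Lemma~\ref{Lemma_error_bound} to each GP model separately. For the calibration model, with confidence level $\rho_{\text{cali}}$, the event $E_{\text{cali}} = \{|\Delta_{\text{cali}}| \le \beta_{\text{cali}}\sigma_{\text{cali}}(\bm s)\}$ has probability at least $\rho_{\text{cali}}$. For the interaction model, I invoke the same theorem with its own kernel, its own RKHS-norm bound, its own information gain, and data size $M$; this gives an event $E_{\text{inter}} = \{|\Delta_{\text{inter}}| \le \beta_{\text{inter}}\sigma_{\text{inter}}(\bm x)\}$ with probability at least $\rho_{\text{inter}}$.

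One subtlety must be flagged and absorbed as an assumption. The interaction GP is trained on targets $F_x(\bm s)=\mu(\bm s)$ rather than on the true force, and $\Delta_{\text{inter}}$ is measured against $\mu(\bm s)$. So I treat the calibrated readings as the regression targets, and I treat the function $\bm x \mapsto \mu(\bm s)$ along the trajectory as the RKHS element with finite norm. The calibration residual then plays the role of bounded (sub-Gaussian) observation noise, which the theorem already tolerates. I expect this step, justifying Lemma~\ref{Lemma_error_bound} for the cascaded interaction model, to be the main obstacle. The noise it sees is not i.i.d.\ Gaussian but is correlated with the inputs through the calibration error. I will try first to argue that this noise is bounded on $E_{\text{cali}}$, hence sub-Gaussian, so that the theorem still applies with $\beta_{\text{inter}}$.

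Next I choose the individual confidence levels so that $\rho_{\text{cali}} = \rho_{\text{inter}} = (1+\rho)/2$. Because $\Pr(E_{\text{cali}}^c \cup E_{\text{inter}}^c) \le (1-\rho_{\text{cali}}) + (1-\rho_{\text{inter}}) = 1-\rho$, the union bound gives $\Pr(E_{\text{cali}} \cap E_{\text{inter}}) \ge \rho$. This choice is where the $\rho$ inside each $\beta$ gets replaced by $(1+\rho)/2$, a harmless change in the logarithm.

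On the joint event, I use the decomposition~\eqref{Eq_error_decomposition} and the triangle inequality:
$|\Delta_{\text{total}}| \le \beta_{\text{cali}}\sigma_{\text{cali}} + \beta_{\text{inter}}\sigma_{\text{inter}} \le \max(\beta_{\text{cali}},\beta_{\text{inter}})(\sigma_{\text{cali}}+\sigma_{\text{inter}})$.
The Cauchy--Schwarz inequality $a+b \le \sqrt{2}\sqrt{a^2+b^2}$ then turns this into $\bar\beta\,\sigma_{\text{total}}(\bm x,\bm s)$, which yields~\eqref{Eq_lemma_cascaded_probability}.
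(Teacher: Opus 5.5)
Your deterministic chain is exactly the paper's: the triangle inequality on $\Delta_{\text{total}}=\Delta_{\text{cali}}+\Delta_{\text{inter}}$, factoring out $\max(\beta_{\text{inter}},\beta_{\text{cali}})$, then $a+b\le\sqrt{2}\sqrt{a^2+b^2}$. The genuine difference is how you combine the two confidence events. The paper asserts a joint probability of at least $\rho_{\text{cali}}\rho_{\text{inter}}$. That tacitly treats $E_{\text{cali}}$ and $E_{\text{inter}}$ as independent, which is hard to justify when the interaction GP is trained on the calibration GP's outputs $\mu(\bm s_i)$. Your union bound, $\Pr(E_{\text{cali}}\cap E_{\text{inter}})\ge\rho_{\text{cali}}+\rho_{\text{inter}}-1$ with both levels set to $(1+\rho)/2$, needs no independence. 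Since $\rho_{\text{cali}}\rho_{\text{inter}}-(\rho_{\text{cali}}+\rho_{\text{inter}}-1)=(1-\rho_{\text{cali}})(1-\rho_{\text{inter}})\ge 0$, the paper's claim is the stronger and less supported one. Your only cost is replacing $1-\rho$ by $(1-\rho)/2$ inside the logarithm of each $\beta$, so your route is the more rigorous proof of the same inequality.

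On the subtlety you flag, the paper never addresses it. It writes ``similarly to Lemma~\ref{Lemma_error_bound}, there exists $\beta_{\text{inter}}$,'' which folds the applicability to the interaction model into the finite-RKHS-norm hypothesis. Absorbing it as an assumption therefore matches the paper. Your attempted justification, however, would fail. The theorem behind Lemma~\ref{Lemma_error_bound} tolerates bounded noise only if it is a martingale difference sequence, i.e.\ conditionally zero-mean. The calibration residual is instead a systematic, input-correlated bias, and being bounded on $E_{\text{cali}}$ does not remove that bias; conditioning on $E_{\text{cali}}$ also changes its law. A bounded but biased perturbation of the targets typically adds an error term not proportional to $\sigma_{\text{inter}}(\bm x)$, so you would not recover the $\beta_{\text{inter}}\sigma_{\text{inter}}$ form.

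Your two modeling choices also conflict. If the RKHS element is $\bm x\mapsto\mu(\bm s)$, and granting that this is a genuine function of $\bm x$, then the targets $\mu(\bm s_i)$ are noiseless samples of it. In that case the calibration residual never enters the interaction regression. It becomes observation noise only if you regress on the true force, and then you would be bounding $\mu(\bm x)-F_x=\Delta_{\text{total}}$ directly rather than $\Delta_{\text{inter}}$. State the assumption explicitly and drop the sub-Gaussian argument.
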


\begin{proof}
By applying the triangle inequality directly to the cascaded error decomposition formulated in \eqref{Eq_error_decomposition}, the magnitude of the total prediction error is bounded by
\begin{equation}
   |\Delta_{\text{total}}| \le |\Delta_{\text{inter}}| + |\Delta_{\text{cali}}|. \label{eq:triangle_expansion}
\end{equation}
Similarly to Lemma~\ref{Lemma_error_bound}, for the interaction dynamics model, there exists a scaling parameter $\beta_{\text{inter}}$ such that $\Pr\{|\Delta_{\text{inter}}| \le \beta_{\text{inter}} \sigma(\bm{x})\} \ge \rho_{\text{inter}}$. Substituting both individual bounds into \eqref{eq:linear_combination} yields
\begin{equation}
   |\Delta_{\text{total}}| \le \beta_{\text{inter}} \sigma(\bm{x}) + \beta_{\text{cali}} \sigma(\bm{s}). \label{eq:linear_combination}
\end{equation}
To simplify, we invoke $a + b \le \sqrt{2}\sqrt{a^2 + b^2}$ for any real numbers $a, b \ge 0$. Applying this inequality to the right-hand side of \eqref{eq:linear_combination} leads to
\begin{align}
|\Delta_{\text{total}}| &\le
   \beta_{\text{inter}} \sigma(\bm{x}) + \beta_{\text{cali}} \sigma(\bm{s}) \nonumber\\
   &\le \max(\beta_{\text{inter}}, \beta_{\text{cali}}) \big( \sigma(\bm{x}) + \sigma(\bm{s}) \big) \nonumber \\
   &\le \sqrt{2}\max(\beta_{\text{inter}}, \beta_{\text{cali}}) \sqrt{\sigma_{\text{cali}}^2(\bm{s}) + \sigma_{\text{inter}}^2(\bm{x})} \nonumber \\
   &= \bar{\beta} \sigma_{\text{total}}(\bm{x}, \bm{s}).
\end{align}
Consequently, $|\Delta_{\text{total}}| \le \bar{\beta} \sigma_{\text{total}}(\bm{x}, \bm{s})$ is guaranteed to hold, conditioned on the joint event of tactile sensing and dynamic interaction, with a joint probability of at least $\rho = \rho_{\text{cali}}  \rho_{\text{inter}} \in (0, 1)$. This completes the proof.
\end{proof}

\subsection{Constrained MPC-Based Dynamic Grasping}

Given the learned contact dynamics, we can enforce these predictive envelopes as robust constraints within a model predictive control framework to achieve secure and gentle grasping and manipulation of the object.

In practice, the true physical contact force $F_x$ is unavailable. To account for both the tactile sensing uncertainty $\sigma_{\text{cali}}$ and the interaction dynamics variance $\sigma_{\text{inter}}$, we establish a statistical confidence bound against worst-case prediction deviations. The true contact force can be bounded probabilistically as $F_x \in [\mu(\bm{x}_k) - \beta\sigma_{\text{total}}, \mu(\bm{x}_k) + \beta\sigma_{\text{total}}]$ for some $\beta>0$. Substituting this into the nominal safety bounds yields the tightened, robust force constraint pair directly enforced within the optimization:
\begin{subequations}\label{Eq_Robust_Constraints}
\begin{align}
   &\mathds{C}_{\text{secure}}^{\text{Robust}}:\; F_x(\bm x) \ge F_{\text{obj}, \min} + \beta \sigma_{\text{total}}, \\
   &\mathds{C}_{\text{gentle}}^{\text{Robust}}:\; F_x(\bm x) \le F_{\text{obj}, \max} - \beta \sigma_{\text{total}}.
\end{align}
\end{subequations}

Under this structure, the nonlinear MPC problem is formulated over a finite prediction horizon $H$ as the following constrained optimization:
\begin{subequations}
\label{Eq_NMPC}
\begin{align}
   \min_{\{u_{t+j|t}\}} & \sum_{j=0}^{H-1} \left(w_1 \hat{e}_{t+j|t}^2 + w_2 \Delta \hat{e}_{t+j|t}^2 + w_3 u_{t+j|t}^2 \right) \\
   \text{s. t.}\;\; & F(\bm x_{t+j|t}) = \mu(\bm{x}_{t+j|t}), \\
   & \bm{x}_{t+j|t} = \mathcal{M}\left( u_{t+j|t}\right), \\
   & F_{x,t+j|t} \ge F_{\text{obj}, \min} + \beta \sigma_{\text{total}}(\bm{x}_{t+j|t}, \bm{s}_{t+j|t}), \\
   & F_{x,t+j|t} \le F_{\text{obj}, \max} - \beta \sigma_{\text{total}}(\bm{x}_{t+j|t}, \bm{s}_{t+j|t}), \\
  & -u_{\max} \le u_{t+j|t} \le u_{\max},
\end{align}
\end{subequations}
where $w_1, w_2, w_3\in \mathbb R^+$, the control input $u_{t+j|t}$ represents the optimized incremental displacement of the gripper width, $\hat{e}_{t+j|t} = F_{x,t+j|t}^d - F(\bm x_{t+j|t})= F_{x,t+j|t}^d - \mu( \bm x_{t+j|t})$ denotes the difference between the interaction dynamics predicted force and the desired reference $F_x^d$, $\Delta \hat{e}$ represents the error difference (where the derivative is not taken due to numerical errors and noise), and $\mathcal{M}(\cdot)$ represents the temporal shift mapping operating on the state-space history buffer to construct the feature vector in~\eqref{Eq_input_feature_vector}. The control input is constrained to $[-u_{\max}, u_{\max}]$. The final control input $u_t^*=u_{t|t}$ is transmitted to the gripper motor position controller through~\eqref{Eq_Gripper_Control}.

\begin{algorithm}
	\caption{Dynamic grasping of unknown objects}\label{Alg_overall_strategy}
	
	\SetAlgoLined
	\PrintSemicolon
	
    \KwIn{ Horizons/Weights $H, w_1, w_2, w_3$, $u_{\max}$ etc., Tactile sensing model $f_{\text{cali}}$ }
	\KwOut{Control step $u_t^*$}
	
	\textsf{Phase 1: FSM-based Probing}\;
	\If{Contact Establishment}
	{
        Approach object with constant speed until $\mathds{I}_{\text{Contact}} == 1$\;			
		$W_{\text{obj}} \gets x, \, F_{\text{Contact}} \gets F_{x}(\bm{s})$\;
	}
	\If{Stiffness Probing}
	{
		Squeeze the object\;
        $\mathbb{D}_\text{Probe}\leftarrow \mathbb{D}_{\text{Probe}}\cup \left\{F_{x,i}^-(\bm s), F_{x, i}^+(\bm s), x_{i}^-, x_{i}^+\right\}$\; 			
		Compute $\kappa_{\text{static}}$ via~\eqref{Eq_stiffness} and initial $F$ via~\eqref{Eq_Grasp_Force}\;
	}
	\If{Constraint Identification}
	{
		\For{Relaxation \& Reflex iteration}
		{
			Relax force and evaluate slip trigger $\mathds{I}_{\text{slip}}$\;
			\If{$\mathds{I}_\mathrm{slip} == 1$}
			{
            $\mathbb{D}_{F_{\min}} \leftarrow \mathbb{D}_{F_{\min}} \cup \left\{ F_x(\bm s) \right\}$\;	
            Increase contact force\;			
			}

		}
Update  $F_{\text{obj}, \min}$ via \eqref{Eq_F_max_temp}\;
Verify the ceiling grasping force by~\eqref{Eq_F_max}\;
	}

	\textsf{Phase 2: Interaction Dynamics Learning}\;
	$\mathbb{D}_{\text{inter}} \gets \mathbb{D}_{\text{inter}} \cup \{(\bm{x}_t, F_{x,t}(\bm{s}))\}$ via \eqref{Eq_input_feature_vector}\;
	Train $f_{\text{inter}} \sim \mathcal{GP}(\mu(\bm{x}), \sigma^2(\bm{x}))$ via \eqref{Eq_gp_interaction}\;

	\While{System is operational}
	{
		\textsf{Phase 3: Dynamic Grasping Control}\;
        Update reference force\;
		Update $\mathds C_{\text{gentle}}^{\text{Robust}}$ and $\mathds C_{\text{secure}}^{\text{Robust}}$ by~\eqref{Eq_Robust_Constraints}\;
		\eIf{Computational budget permits Nonlinear MPC}
		{
			Solve nonlinear MPC via \eqref{Eq_NMPC}\;
		}
		{
			Evaluate $\kappa_\text{dyna}$ by~\eqref{Eq_Dyan_Stiffness}\;
            Solve linear MPC via \eqref{Eq_Linear_MPC_Optimization}\;
		}
		 $u_t^* \gets u_{t|t}$\;
	}
\end{algorithm}

\subsection{Practical Consideration of Linear MPC}

The optimization in~\eqref{Eq_NMPC} relies on multi-step forward non-parametric evaluations of both the grasping dynamics prediction and time-varying constraint updates. This might be computationally expensive in the case of limited computational resources and a long prediction horizon. Based on this practical consideration, for cases with a limited computation budget, we develop a variant of the nonlinear MPC. Given the constrained displacement of $x$ upon secure contact, the interaction dynamics are linearized as
\begin{equation*}
   F_{k+1} \approx F_k + \frac{\partial f_{\text{inter}}}{\partial x} \delta x = F_k + \kappa_\text{dyna} u,
\end{equation*}
where $\kappa_\text{dyna} =\frac{\partial f_{\text{inter}}}{\partial x} $ is the dynamic, localized stiffness and is captured via a numerical perturbation of $\epsilon \in \mathbb{R}^+$ to $\bm{x}$:
\begin{equation}\label{Eq_Dyan_Stiffness}
   \kappa_\text{dyna} \approx \frac{\mu(\bm{x} + \bm{\epsilon}) - \mu(\bm{x} - \bm{\epsilon})}{2\epsilon},
\end{equation}
where $\bm{\epsilon} = [\epsilon\; 0\; \dots\; 0]^\top \in \mathbb{R}^d$.

By transforming the contact dynamics into a locally linear space, the optimization problem reduces to a highly efficient convex quadratic programming problem:
\begin{subequations}
\label{Eq_Linear_MPC_Optimization}
\begin{align}
   \min_{\{u_{t+j|t}\}} & \sum_{j=0}^{H-1} \left( w_1 \hat e_{t+j|t}^2 + w_2 \Delta \hat e_{t+j|t}^2 + w_3 u_{t+j|t}^2 \right) \\
   \text{s.t.} \quad & F_{x,t+j|t} = F_{x,t} + \kappa_\text{dyna} \sum\nolimits_{m=0}^{j} u_{t+m|t}, \\
   & F_{x,t+j|t} \ge F_{\text{obj}, \min} + \beta \sigma_{\text{total}}(\bm{x}_{t}, \bm{s}_{t}), \\
   & F_{x,t+j|t} \le F_{\text{obj}, \max} - \beta \sigma_{\text{total}}(\bm{x}_{t}, \bm{s}_{t}), \\
  & -u_{\max} \le u_{t+j|t} \le u_{\max},
\end{align}
\end{subequations}
where the robust uncertainty boundaries are kept frozen starting from the initial evaluation step. The final control input $u_t^*=u_{t|t}$ is transmitted to the motor controller through~\eqref{Eq_Gripper_Control}.

\begin{rem}
 Upon establishing physical contact, the localized deformation depth between the object and the gripper is bounded. The online-estimated local stiffness dynamically captures potential non-uniform compliance variations of the object while simultaneously simplifying the real-time model updating loop. In practice, this linear approximation is activated to preserve control stability in the event of limited computational resources or a nonlinear optimization failure.
\end{rem}

The overall online probing and predictive control strategy is summarized in Algorithm~\ref{Alg_overall_strategy}. Except for tactile sensing calibration, all 3 phases are run online in real time. The linear MPC model is only used if the computation budget is limited.

\begin{figure*}
  \centering
  \subfigure[]{\label{Fig_Robot_Setup}
  \includegraphics[height=3.5cm]{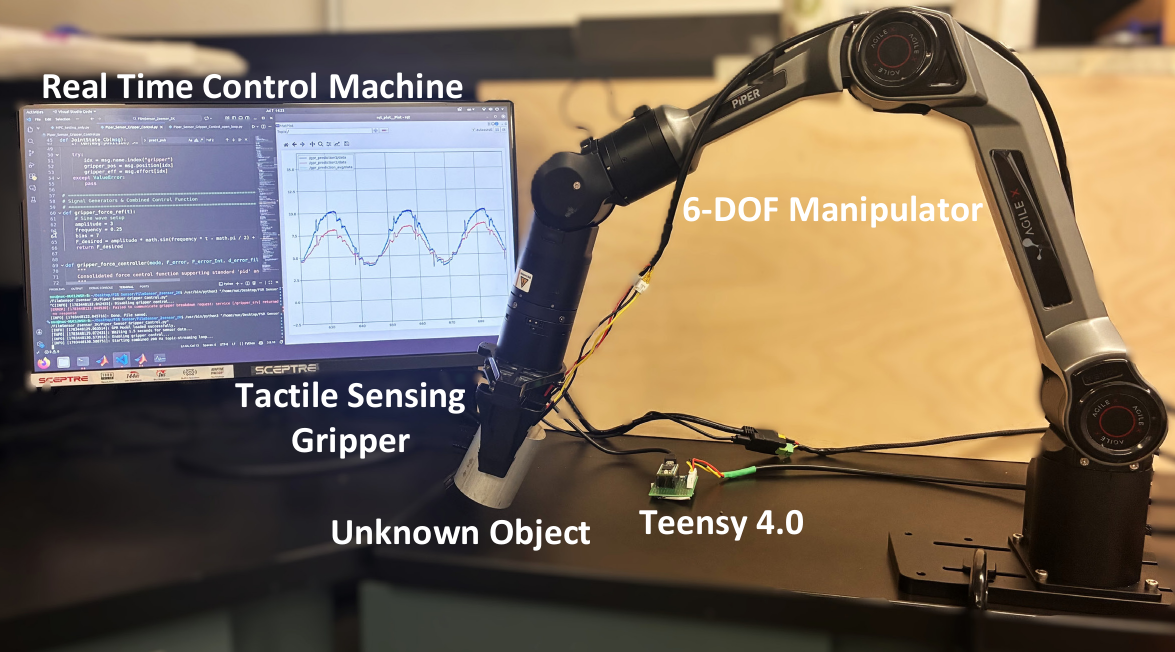}}
  \hspace{-3mm}
  \subfigure[]{\label{Fig_Gripper}
  \includegraphics[height=3.5cm]{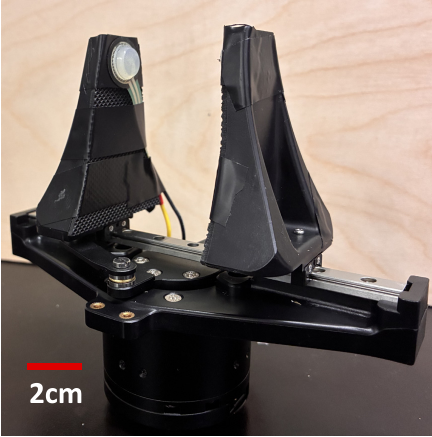}}
    \hspace{-3mm}
  \subfigure[]{\label{Fig_Cali_Loadcell}
  \includegraphics[height=3.5cm]{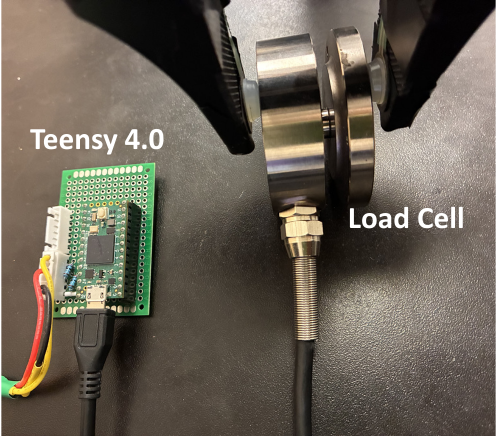}}
  \hspace{-3mm}
  \subfigure[]{\label{Fig_1kg_weight}
  \includegraphics[height=3.5cm]{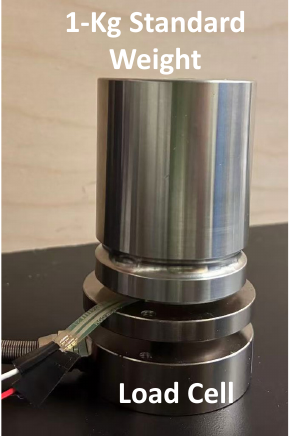}}
  \hspace{-3mm}
  \subfigure[]{\label{Fig_Cali_Contact}
  \includegraphics[height=3.5cm]{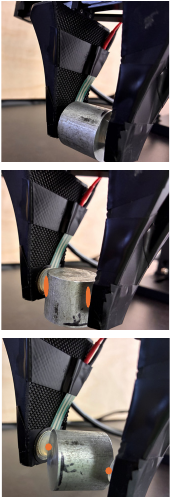}}
  \vspace{-2mm}
  \caption{Experimental setup. (a) Overview of the tactile sensing-enabled grasping system. (b) The gripper with a tactile sensing module. (c) Calibration setup and custom-made circuit mechatronics. (d) Standard weight measurement testing. (e) Demonstration of different contact types, from top to bottom: surface contact, line contact, and point contact.}
  \label{Fig_Setup}
  \vspace{-2mm}
\end{figure*}

\subsection{Convergence Analysis}
\label{Sec_probabilistic_stability}


In this subsection, we show the error convergence under the proposed control. Let the true force control error at $t$ be $e_t = F_{x,t}^d - F_{x,t}$. Based on Remark~\ref{Rem_GP_Interaction}, the actual contact force deviates from the direct GP prediction $\mu(\bm{x}_t)$ due to cascaded calibration and interaction errors. Following~\eqref{Eq_error_decomposition}, we obtain
\begin{equation*}
   F_{x,t} = F_x(\bm{x}_t) - \Delta_{\text{total},t} = \mu(\bm{x}_t) - (\Delta_{\text{cali},t} + \Delta_{\text{inter},t}).
\end{equation*}
Under the MPC control, $\bm{x}_{t+1} = \mathcal{M}(u_t^*)$, and thus the true error dynamics cascade to the subsequent interval as
\begin{align}\label{Eq_Error_Dynamics}
   e_{t+1} &= F_{x,t+1}^d - F_{x,t+1}\nonumber\\
   &=\underbrace{F_{x,t+1}^d - \mu\left(\mathcal{M}( u_{t}^*)\right)}_{\hat{e}_{t+1}} + \Delta_{\text{total},t+1},
\end{align}
where $\mu\left(\mathcal{M}( u_{t}^*)\right)$ represents the force prediction under the control input $u_{t}^*$. Given the above setup, we show the convergence of the true error in the following theorem.

\begin{thm}
\label{Thm_mpc_stability}
Suppose assumptions $\mathcal A_1$ to $\mathcal A_4$ hold. If the MPC optimization~\eqref{Eq_NMPC} generates a valid solution, under the proposed predictive control framework, the grasping force control error decreases with probability $\rho$ until it enters a compact residual bound, and the constraints~\eqref{Eq_Robust_Constraints} are satisfied.
\end{thm}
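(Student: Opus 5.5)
The argument has two parts. First, constraint satisfaction follows from Lemma~\ref{Lemma_cascaded_error_bound}. Second, the decrease of the tracking error follows from a standard MPC value-function argument applied to the nominal error $\hat e$. The true error is then obtained through the decomposition \eqref{Eq_Error_Dynamics}.

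\textbf{Constraints.} Take $\beta=\bar\beta$ in \eqref{Eq_Robust_Constraints}. On the event of Lemma~\ref{Lemma_cascaded_error_bound}, which has probability at least $\rho$, the true force satisfies
\[
F_{x,t}\in[\mu(\bm x_t)-\bar\beta\sigma_{\text{total}},\,\mu(\bm x_t)+\bar\beta\sigma_{\text{total}}].
\]
Because the optimization \eqref{Eq_NMPC} is assumed to return a valid solution, the applied $u_t^*$ makes $\mu(\bm x_{t+1})$ satisfy both tightened inequalities. Combining this with the interval above gives $F_{\text{obj},\min}\le F_{x,t+1}\le F_{\text{obj},\max}$. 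So the original constraints \eqref{Eq_Consrtrains} hold with probability at least $\rho$, and the robust ones \eqref{Eq_Robust_Constraints} hold by construction.

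\textbf{Nominal decrease.} Let $V_t$ be the optimal cost of \eqref{Eq_NMPC} at time $t$. I would build a candidate input sequence at $t+1$ by shifting the optimal sequence $\{u^*_{t+j|t}\}$ by one step and appending a terminal input $u=0$. This keeps the gripper width fixed, which is feasible because $|0|\le u_{\max}$. It also keeps the force inside the robust band, provided the last predicted state was inside it; I would assume this, or add it as a terminal condition. The standard telescoping argument then gives
\[
V_{t+1}\le V_t - w_1\hat e_{t|t}^2 - w_3 (u_t^*)^2 + \ell_H,
\]
where $\ell_H$ is the stage cost of the appended step. The term $\ell_H$ is bounded using the slow variation of $F^d$ and the boundedness of $\mu$ over the compact state set, which comes from \eqref{Eq_Gripper_Width} and the bound $u_{\max}$. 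In addition, $V_t\ge w_1\hat e_{t|t}^2$. Together these show that $V$ decreases whenever $|\hat e|$ exceeds a level $c_0$ determined by $\ell_H$, so $|\hat e|$ is ultimately bounded.

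\textbf{True error.} From \eqref{Eq_Error_Dynamics}, $|e_{t+1}|\le|\hat e_{t+1}|+|\Delta_{\text{total},t+1}|$. With probability at least $\rho$ the second term is at most $\bar\beta\bar\sigma$, where $\bar\sigma$ is a uniform bound on $\sigma_{\text{total}}$ (finite because $\sigma^2\le\sigma_f^2$ for the kernel \eqref{Eq_GP_Kernel}). Hence $|e|$ decreases until it enters the compact set $\{|e|\le c_0+\bar\beta\bar\sigma\}$, where $c_0$ is the ultimate bound on $|\hat e|$ from the previous step.

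\textbf{Main obstacle.} The hardest step is recursive feasibility and the cost decrease for the shifted sequence. The problem \eqref{Eq_NMPC} has no terminal set, and the tightening $\beta\sigma_{\text{total}}$ depends on the state, so the shifted sequence may not remain feasible. I would handle this through the theorem's hypothesis that a valid solution exists at every step, and I would absorb the mismatch in the appended stage cost into the residual bound rather than prove strict contraction. A secondary issue is that the probability $\rho$ holds per step, not uniformly over time. I would state the result pointwise in $t$, which is consistent with ``decreases with probability $\rho$''.
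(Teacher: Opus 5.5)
\textbf{Main gap: the value-function decrease is not established.} Your inequality $V_{t+1}\le V_t-w_1\hat e_{t|t}^2-w_3(u_t^*)^2+\ell_H$ holds only if the shifted-and-padded sequence is \emph{feasible} for \eqref{Eq_NMPC} at $t+1$. The hypothesis that the optimizer returns some valid solution at $t+1$ does not give you this. $V_{t+1}$ is bounded by the candidate's cost only when the candidate itself lies in the feasible set.

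Your reason for feasibility is also wrong for this model. The GP input \eqref{Eq_input_feature_vector} is a $d$-step width history ($d=3$ in the experiments). Padding with $u=0$ therefore does not freeze the feature vector: it becomes $\frac{1}{W_{\text{obj}}}[x_{t+H}\; x_{t+H}\; x_{t+H-1}]^\top$ instead of $\frac{1}{W_{\text{obj}}}[x_{t+H}\; x_{t+H-1}\; x_{t+H-2}]^\top$. So $\mu$ and $\sigma_{\text{inter}}$ change even though the width is fixed. In addition, $\sigma_{\text{cali}}(\bm s)$ depends on the raw sensor history, which the model does not propagate. The $\bm s$-dependent part of the tightening is re-evaluated once $\bm s_{t+1}$ is measured, and it can exceed what the time-$t$ prediction used, already at the first shifted step. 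The tail costs also coincide only if the reference preview is not revised between $t$ and $t+1$.

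Closing this would need terminal ingredients, for instance a steady-state terminal constraint on the last $d-1$ inputs, with the resulting equilibrium force inside a tightening that is uniform in $\bm s$. That changes \eqref{Eq_NMPC}, so it proves the result for a different controller. ``Absorbing the mismatch into the residual'' has no quantitative content here. Without a sensitivity bound on the optimal value under the perturbed, state-dependent tightening, $V_{t+1}$ is controlled only by the trivial bound coming from compactness.

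\textbf{How the paper proceeds instead.} The paper never uses the value function or recursive feasibility. It takes the Lyapunov candidate $\frac{1}{2}e^2$ on the \emph{true} error and writes $\hat e_{t+1}=\hat e_t+\eta_t$ with $\eta_t=\Delta F^d_{x,t}-\Delta\mu_t$. It bounds every $\Delta_{\text{total}}$ cross term through Lemma~\ref{Lemma_cascaded_error_bound}. This reduces the claim to the nominal one-step term $\hat e_t\eta_t+\frac{1}{2}\eta_t^2=\frac{1}{2}(\hat e_{t+1}^2-\hat e_t^2)$. Its negativity outside a band is attributed directly to the MPC minimization. So the step you could not close is exactly the one the paper folds into ``valid solution.'' If you adopt that reading, you can drop the shift argument entirely.

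\textbf{Secondary issue: you get ultimate boundedness, not decrease.} Even granting the inequality, your chain does not give the decrease you assert at the end. A decreasing horizon cost does not make $|\hat e|$ monotone, and $|e|=|\hat e+\Delta_{\text{total}}|$ is not monotone either. So ``$|e|$ decreases until it enters $\{|e|\le c_0+\bar\beta\bar\sigma\}$'' does not follow.

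Moreover, $c_0$ is not the ultimate bound on $|\hat e|$. With only $V_t\ge w_1\hat e_{t|t}^2$, the cost can be large while $\hat e_{t|t}$ is small, and it can grow by up to $\ell_H$ per step. You also need an upper bound $V_t\le\alpha(|\hat e_{t|t}|)+c$, for instance from a feasible $u\equiv 0$ sequence. The residual is then the corresponding sublevel set, not $\{|\hat e|\le c_0\}$. The paper's ``decreases'' comes from the one-step drift of $\frac{1}{2}e^2$ itself, which is why it works with the true error directly rather than transferring a bound from $\hat e$.

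\textbf{Constraints.} Your constraint paragraph is fine for \eqref{Eq_Robust_Constraints}: they hold by feasibility of the applied input, which is all the theorem claims, and the paper does not argue this separately. The extension to \eqref{Eq_Consrtrains} with $\beta=\bar\beta$ is an addition. It requires the interval of Lemma~\ref{Lemma_cascaded_error_bound} at time $t+1$ with the realized $\bm s_{t+1}$. It should not use time $t$, nor the predicted $\bm s_{t+1|t}$ entering the tightening.
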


\begin{proof}
To verify the convergence of the true error $e$, we define a candidate Lyapunov function based on the true error as $V = \frac{1}{2} e^2$. To explicitly link the future tracking energy back to this current state baseline under the high-probability guarantees of Lemma~\ref{Lemma_cascaded_error_bound}, we resolve the nominal updates over the sampling interval. Let
\begin{align*}
\Delta \mu_t &\triangleq \mu\left(\mathcal{M}(u_{t}^*)\right) - \mu(\bm{x}_t), \;\;
\Delta F_{x,t}^d \triangleq F_{x,t+1}^d - F_{x,t}^d,
\end{align*}
be the predicted nominal force increment and reference change, respectively. The net tracking correction command generated by the optimization is $\eta_t \triangleq \Delta F_{x,t}^d-\Delta \mu_t$.

The tracking energy at the next step using~\eqref{Eq_Error_Dynamics} becomes
\begin{align}
V_{t+1} &= \frac{1}{2} \left( F_{x,t+1}^d - \mu\left(\mathcal{M}(u_{t}^*)\right) + \Delta_{\text{total},t+1} \right)^2 \nonumber \\
&= \frac{1}{2} \left(F_{x,t}^d - \mu(\bm{x}_t) + \eta_t + \Delta_{\text{total},t+1} \right)^2 \nonumber \\
&= \frac{1}{2} (\underbrace{F_{x,t}^d - \mu(\bm{x}_t) +\Delta_{\text{total},t}}_{e_t} -\Delta_{\text{total},t}+ \eta_t + \Delta_{\text{total},t+1} )^2 \nonumber \\
&= \frac{1}{2} \left( e_t + \eta_t - \Delta_{\text{total},t} + \Delta_{\text{total},t+1} \right)^2 \nonumber \\
&= V_t + e_t \eta_t + \frac{\eta_t^2}{2} - (e_t + \eta_t)(\Delta_{\text{total},t} - \Delta_{\text{total},t+1}) \nonumber\\&+ \frac{(\Delta_{\text{total},t} - \Delta_{\text{total},t+1})^2}{2}.
\label{Eq_lyapunov_expansion}
\end{align}

By Lemma~\ref{Lemma_cascaded_error_bound}, $| \Delta_{\text{total}}| \le \bar{\beta} \sigma_{\text{total}}(\bm{x}, \bm{s})$ holds with probability at least $\rho$. Therefore,
\begin{align*}
  |\Delta_{\text{total},t} - \Delta_{\text{total},t+1}| \le
  |\Delta_{\text{total},t}|+| \Delta_{\text{total},t+1}| \le
  2\bar{\beta}\sigma_{\text{total}}.
\end{align*}
Applying these bounds to~\eqref{Eq_lyapunov_expansion} yields the following difference condition:
\begin{align*}
\label{Eq_probabilistic_drift}
   \delta V_t \le e_t \eta_t + \frac{\eta_t^2}{2} + 2|e_t + \eta_t| \bar{\beta}\sigma_\text{total} + 2\bar{\beta}^2\sigma_\text{total}^2,
\end{align*}
where $\delta V_t = V_{t+1}-V_t$. According to~\eqref{Eq_Error_Dynamics}, $e_t = \hat{e}_t + \Delta_{\text{total},t}$, and therefore,
\begin{equation*}
e_t \eta_t = \hat{e}_t \eta_t + \Delta_{\text{total},t}\eta_t \le \hat{e}_t \eta_t + |\eta_t|\bar{\beta}\sigma_{\text{total}}.
\end{equation*}
Substituting the above relationship into $\delta V_t$ and applying simplification yields
\begin{equation*}
\Pr\left\{\delta V_t \le \hat{e}_t \eta_t + \frac{\eta_t^2}{2} + 2|\hat{e}_t + \eta_t| \bar{\beta}\sigma_\text{total} + 4\bar{\beta}^2\sigma_\text{total}^2\right\} \ge \rho.
\end{equation*}

Because the MPC actively minimizes the nominal error cost objective function across the preview horizon, it guarantees that $\eta_t$ acts in the opposite direction of the nominal tracking error $\hat{e}_t$ (i.e., $\text{sign}(\eta_t) = -\text{sign}(\hat{e}_t)$), rendering $\hat{e}_t \eta_t \le 0$. Outside of the compact residual band, the strictly negative nominal optimization descent term $\hat{e}_t \eta_t$ dominates these unmodeled cross-coupling disturbances. Over the iterations of the MPC,
$\delta V_t < 0$ until the true error enters a compact bound defined as
\begin{align*}
   \Omega_{e,t} \triangleq \left\{ e_t \;\middle|\; e_t \eta_t + \frac{\eta_t^2}{2} + 2|e_t + \eta_t|\bar{\beta}\sigma_\text{total} \ge -2\bar{\beta}^2\sigma_\text{total}^2 \right\},
\end{align*}
under the probability $\rho$. This completes the proof.
\end{proof}

With the above analysis, the online-identified secure-and-gentle constraints are embedded into the control design to ensure safety and reliability.

\section{Experiment and Analysis}
\label{Sec_Results}

In this section, we present extensive experimental results to validate the proposed grasping mechatronics system and the learning-based predictive control strategy.

\subsection{Experimental Setup}

%
%
%
%
%

The experimental setup is shown in Fig.~\ref{Fig_Setup}. The robotic arm (model Piper from Agilex Robotics) is a 6-DOF manipulator equipped with a two-finger gripper; see Fig.~\ref{Fig_Robot_Setup}. The gripper fingers are driven by a single DC motor through position control. The robotic arm communicates with the host PC via a CAN (Controller Area Network) bus, and the feedback and commands are compiled into standard ROS (Robot Operating System) topics. The gripper dimensions are $d_\text{finger}=70$ mm and $b_\text{finger}=22$ mm, and the maximum width between the two fingers is~$100$ mm. The tactile sensing system is shown in Fig.~\ref{Fig_Gripper}, following the design presented in Sec.~\ref{Sec_tactile_sensing}.

The FSR sensor (model FSR 402) is manufactured by Interlink Electronics, and its active circular sensing region has a diameter of $12.7$ mm. Its thickness is less than $0.5$ mm. The overall thickness of the custom-made tactile sensing module is approximately $4$ mm. Measurements from each FSR sensor are collected through a standard voltage divider circuit with a constant resistor of~$2000$ Ohms using a Teensy 4.0 microcontroller. The raw analog data falls in the range of $[0, 4095]$. According to the user manual, the maximum effective force measurement is around $10$ N. The raw tactile sensing measurements are transmitted to the host machine through a customized ROS topic and synchronized with the robotic arm's measurement and control signals.

\begin{figure*}
  \centering
  \subfigure[]{\label{Fig_Raw_Data}
  \includegraphics[height=4.6cm]{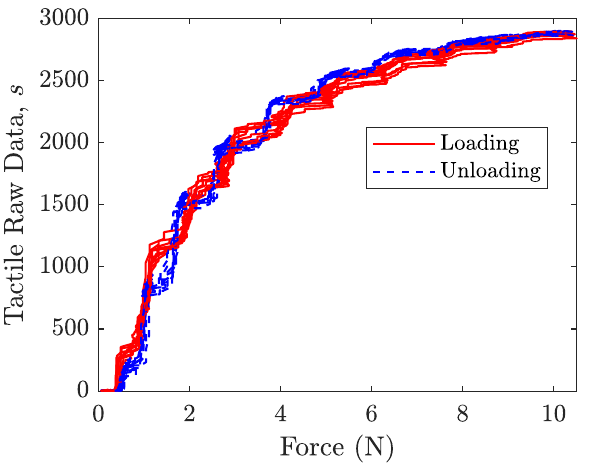}}
  \hspace{-3mm}\vspace{-2mm}
  \subfigure[]{\label{Fig_1kg_Test}
  \includegraphics[height=4.6cm]{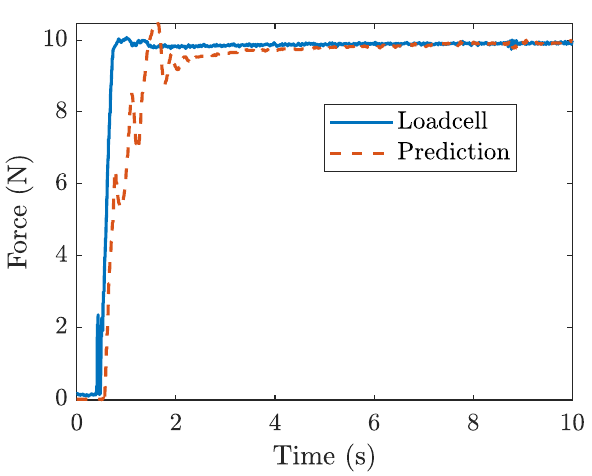}}
  \hspace{-3mm}
  \subfigure[]{\label{Fig_Cali_Valid}
  \includegraphics[height=4.6cm]{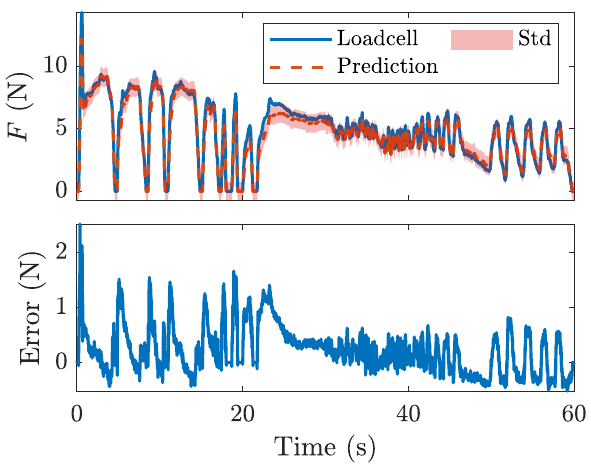}}
  \hspace{-3mm}
  \subfigure[]{\label{Fig_Cylinder}
  \includegraphics[height=4.6cm]{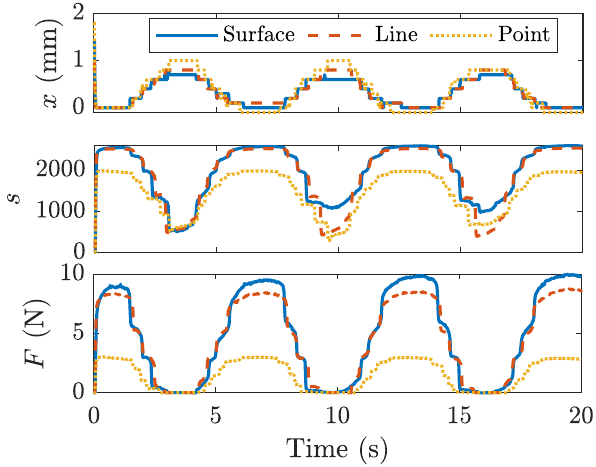}}
  \hspace{-3mm}
  \subfigure[]{\label{Fig_Cylinder_No}
  \includegraphics[height=4.6cm]{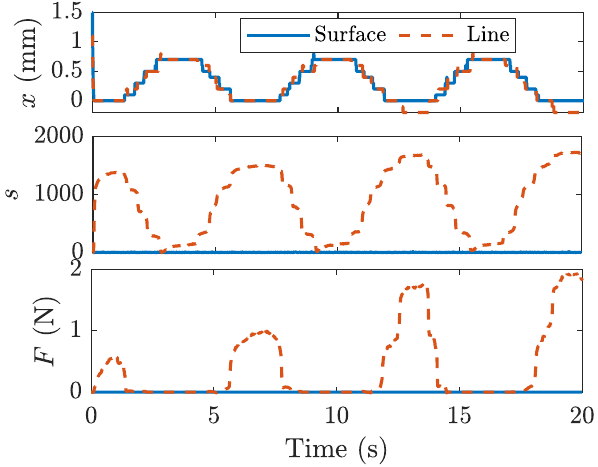}}
  \hspace{-3mm}
  \subfigure[]{\label{Fig_PID}
  \includegraphics[height=4.6cm]{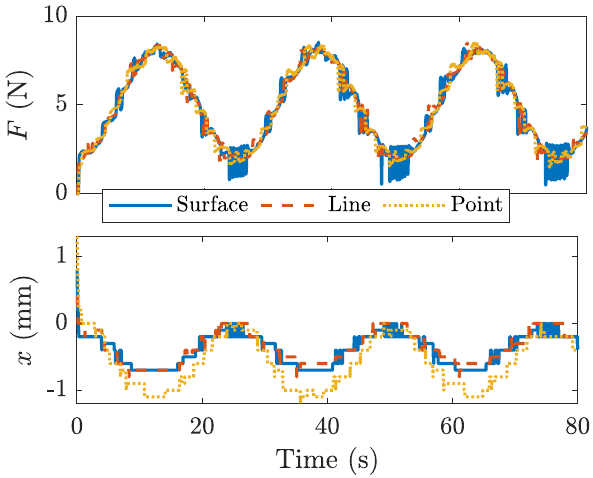}}
  \vspace{-2mm}
  \caption{Tactile sensing calibration and control. (a) Raw measurement. (b) 1-Kg standard weight measurement. (c) Varying force measurement. (d) Open-loop grasping control with different contact types. (e) Open-loop grasping control without contact force redistribution. (f) Closed-loop grasping control. The contact types in (d)-(f) can be found in Fig.~\ref{Fig_Cali_Contact}. To prevent potential damage to the sensors, the point contact test is not performed in (e).}
  \label{Fig_Cali_PID}
  \vspace{-2mm}
\end{figure*}


The GP-based tactile sensing calibration model is trained offline using raw tactile sensing data and force measured by a load cell; see Fig.~\ref{Fig_Cali_Loadcell}. We use the gripper (with the tactile module) to squeeze the industrial load cell and collect the data. The measurement from the load cell is treated as the ground truth. The training dataset contains $2000$ data points selected from a larger dataset, with a depth of $n=5$. The gripper-object interaction dynamics are trained online using a total of $1500$ data points, with a depth of $d=3$.


The control parameters are selected offline using only a foam block and a 3D-printed part. Since we assume that the object is completely unknown during experimental implementation, all the aforementioned setups and parameters are fixed as constants. The primary parameters used are $K_p=2.55\times 10^{-5}$, $K_i=1\times 10^{-6}$, and $K_d= 3\times 10^{-6}$ for the PID controller; $m_{\max}=0.5$~kg, $F_{\text{Contact}, \max}=5$~N, $F_\text{thr.}=1.25$~N, $\kappa_{\min}=200$, $\kappa_{\max}=4000$, $\lambda_1 =0.75$, and $\lambda_2 =0.75$ for the FSM; $H =5$, $w_1=5$, $w_2=2$, $w_3=1$, and $\beta=1$ for the MPC. All experiments are run on a Linux Ubuntu 22 machine (Intel Core i9-14900 x32, Nvidia RTX4060 GPU) with a $100$~Hz real-time control frequency.

\subsection{Results on Tactile Sensing }

The tactile sensing calibration results are shown in Fig.~\ref{Fig_Cali_PID}. It is clear that the raw measurement displays highly nonlinear properties, including dead zones, hysteresis, and saturation; see Fig.~\ref{Fig_Raw_Data}. Upon loading and unloading, the raw measurements under the same force differ. This motivates our use of GP-based calibration. Figs.~\ref{Fig_1kg_Test} and~\ref{Fig_Cali_Valid} show the online prediction of the GP-based calibration model. We place a standard weight of $m=1$~kg on the sensor (which is attached to the load cell; see Fig.~\ref{Fig_1kg_weight}), and the tactile sensing prediction in Fig.~\ref{Fig_1kg_Test} reaches the target value quickly. Note that the vibration in the measurement is due to manual adjustment of the load cell position on the sensor to achieve a stable balance. Furthermore, we use the gripper to dynamically grasp the load cell, and the real-time ground-truth measurement from the load cell alongside the tactile sensing prediction are shown in Fig.~\ref{Fig_Cali_Valid}. The overall prediction error is approximately $1$~N during fast loading/unloading and less than $0.5$~N during firm contact. The ground truth stays within one standard deviation of the GP-based prediction, highlighting the accuracy and bounded uncertainty of the non-parametric calibration model. Given this accuracy, the tactile prediction measurement is directly used in the control design hereafter.

In Figs.~\ref{Fig_Cylinder} and~\ref{Fig_Cylinder_No}, we demonstrate the effectiveness of the tactile-sensing mechatronic design in rejecting the effects of the contact interface topology. The gripper grasps an aluminum cylinder through open-loop control of the gripper width, $x$. Upon contact, the gripper width is commanded to follow a sine wave with the same amplitude and frequency across different contact configurations. We consider surface contact, line contact, and point contact; see Fig.~\ref{Fig_Cali_Contact}. With the designed structure, surface and line contacts do not display significant differences in terms of raw measurements and force predictions, meaning that the contact force is redistributed onto the FSR by the structure. In contrast, in Fig.~\ref{Fig_Cylinder_No}, when direct contact between the object and the FSR occurs, a significant discrepancy is observed; the measurement in surface contact becomes zero, and in line contact it becomes much smaller. These results show that FSR sensors are sensitive to contact types, rendering the redistribution of contact force necessary. Furthermore, Fig.~\ref{Fig_Cylinder} illustrates that the measurement in point contact differs from other types during open-loop control. This does not indicate a failure of the designed structure. Rather, it shows that the point-type interaction dynamics are distinct, such that under the same squeezing depth, the generated force is different; see Fig.~\ref{Fig_PID} for closed-loop control. Under closed-loop PID control, the gripper grasps the cylinder across different contact types by tracking a reference force profile. Fig.~\ref{Fig_PID} shows that the grasping force can be controlled in real time to follow a slow-changing profile using tactile sensing feedback. The gripper width in point contact mode undergoes the largest displacement to generate the same amount of force. Combined with Figs.~\ref{Fig_Cylinder} and~\ref{Fig_Cylinder_No}, these results demonstrate the effectiveness of the tactile mechatronics in handling different contact interface topologies and highlight the uniqueness of the case-specific interaction dynamics between the gripper and objects. Since contact topologies remain unknown until contact is established, these results imply that offline-identified models can be inaccurate.

\begin{figure*}
  \centering
  \subfigure[]{\label{Fig_Copper1}
  \includegraphics[height=2.85cm]{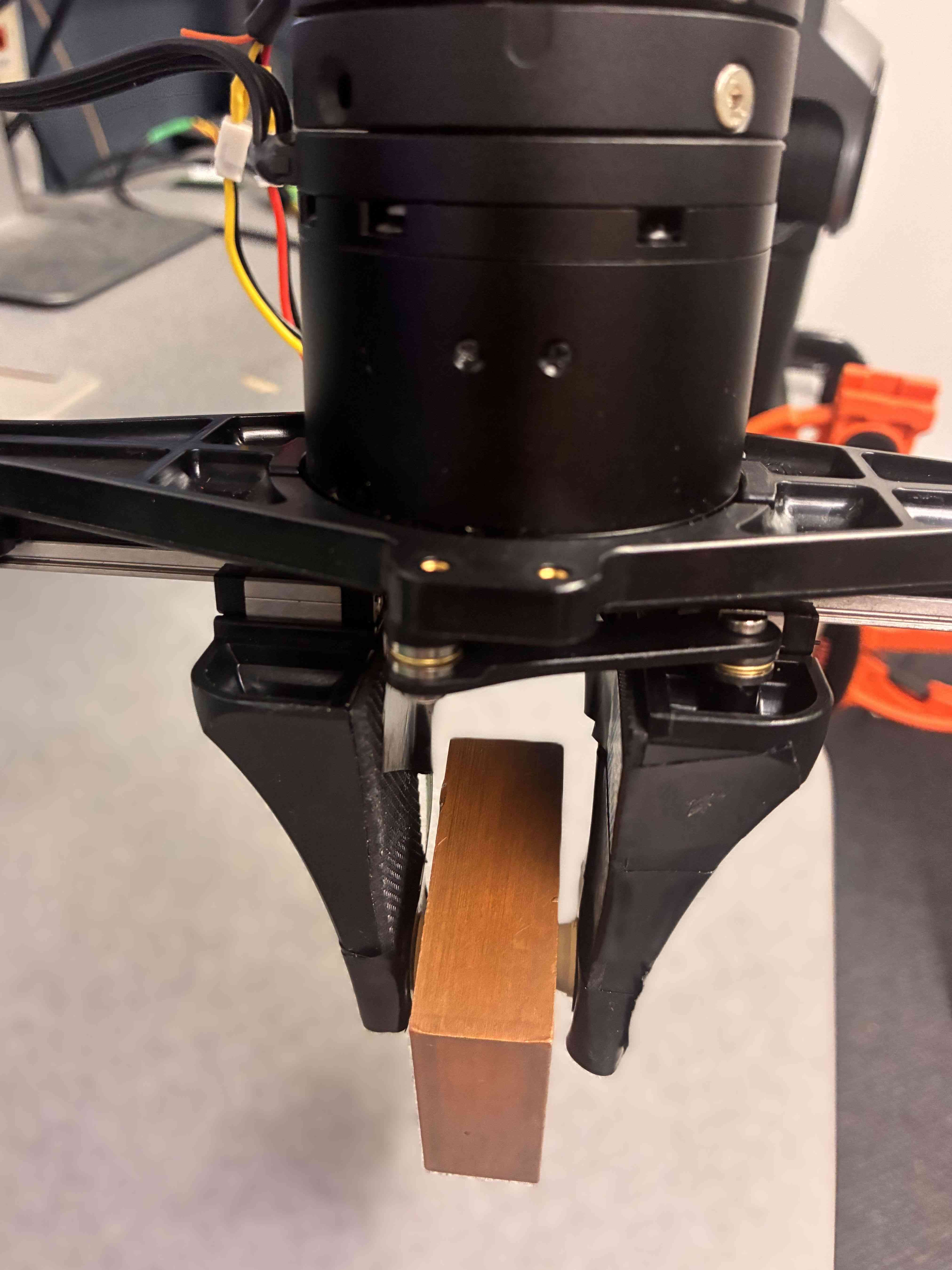}}
  \hspace{-3mm}\vspace{-2mm}
  \subfigure[]{\label{Fig_Copper2}
  \includegraphics[height=2.85cm]{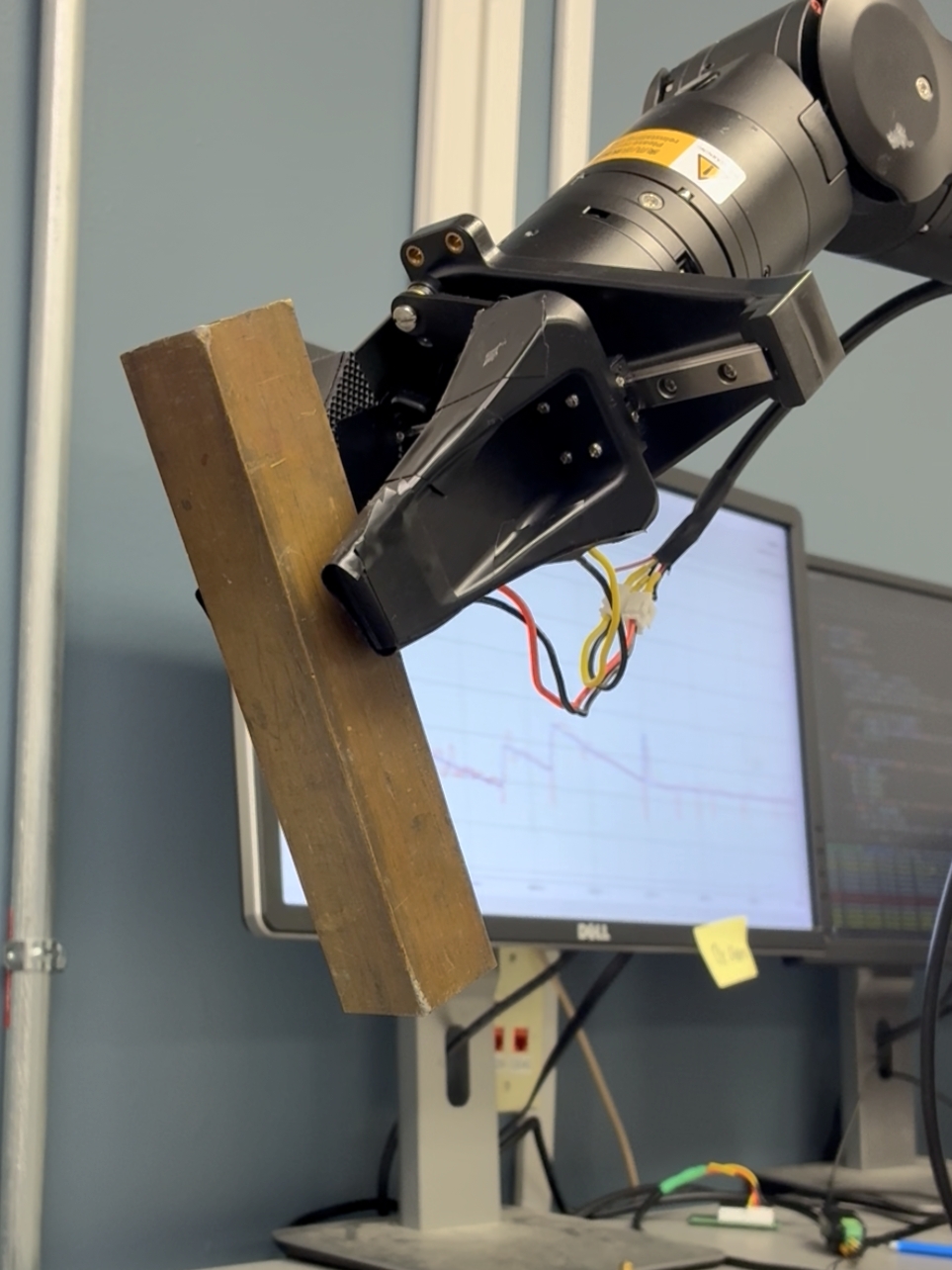}}
  \hspace{-3mm}
  \subfigure[]{\label{Fig_AL}
  \includegraphics[height=2.85cm]{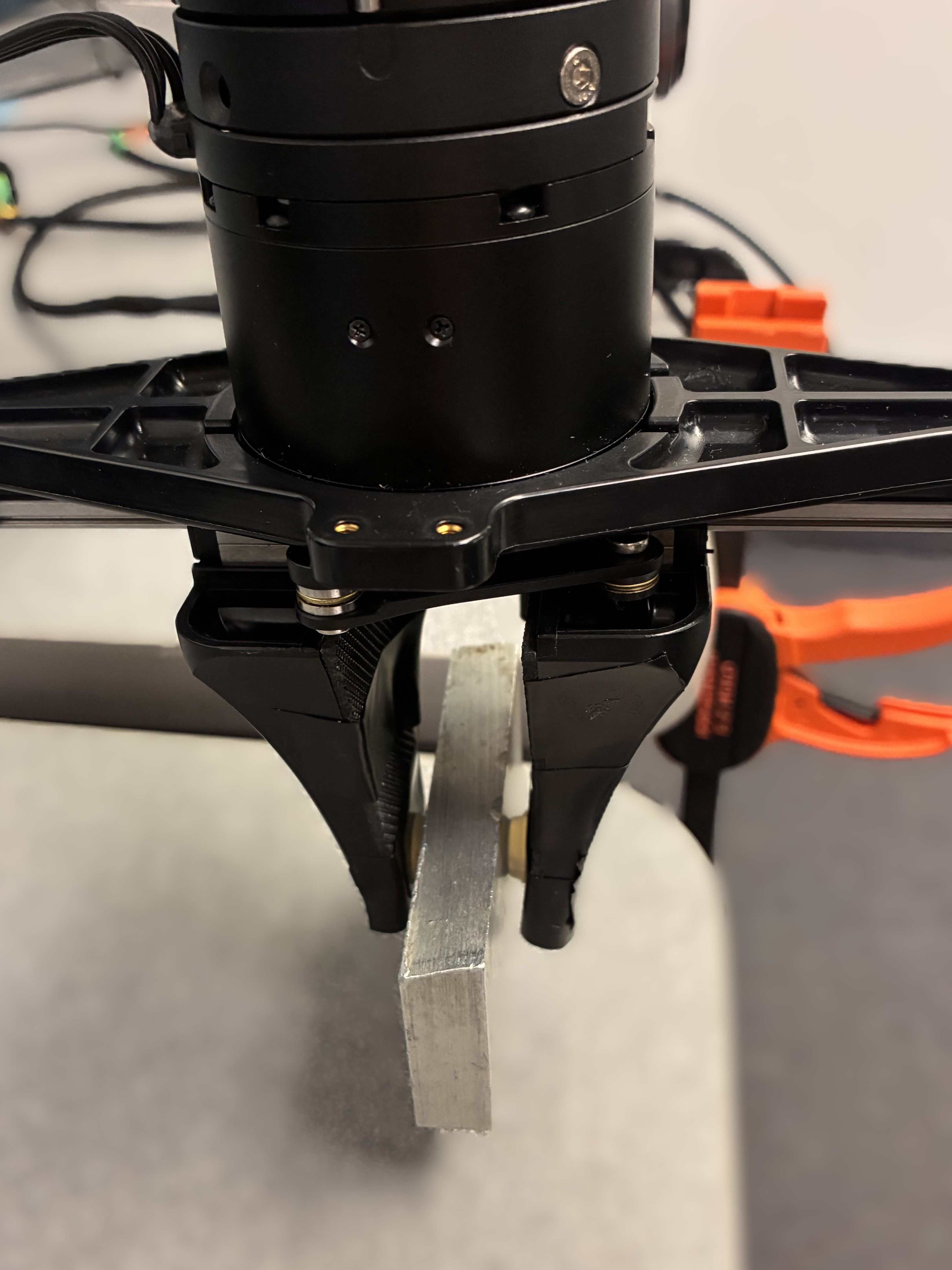}}
  \hspace{-3mm}
  \subfigure[]{\label{Fig_3D}
  \includegraphics[height=2.85cm]{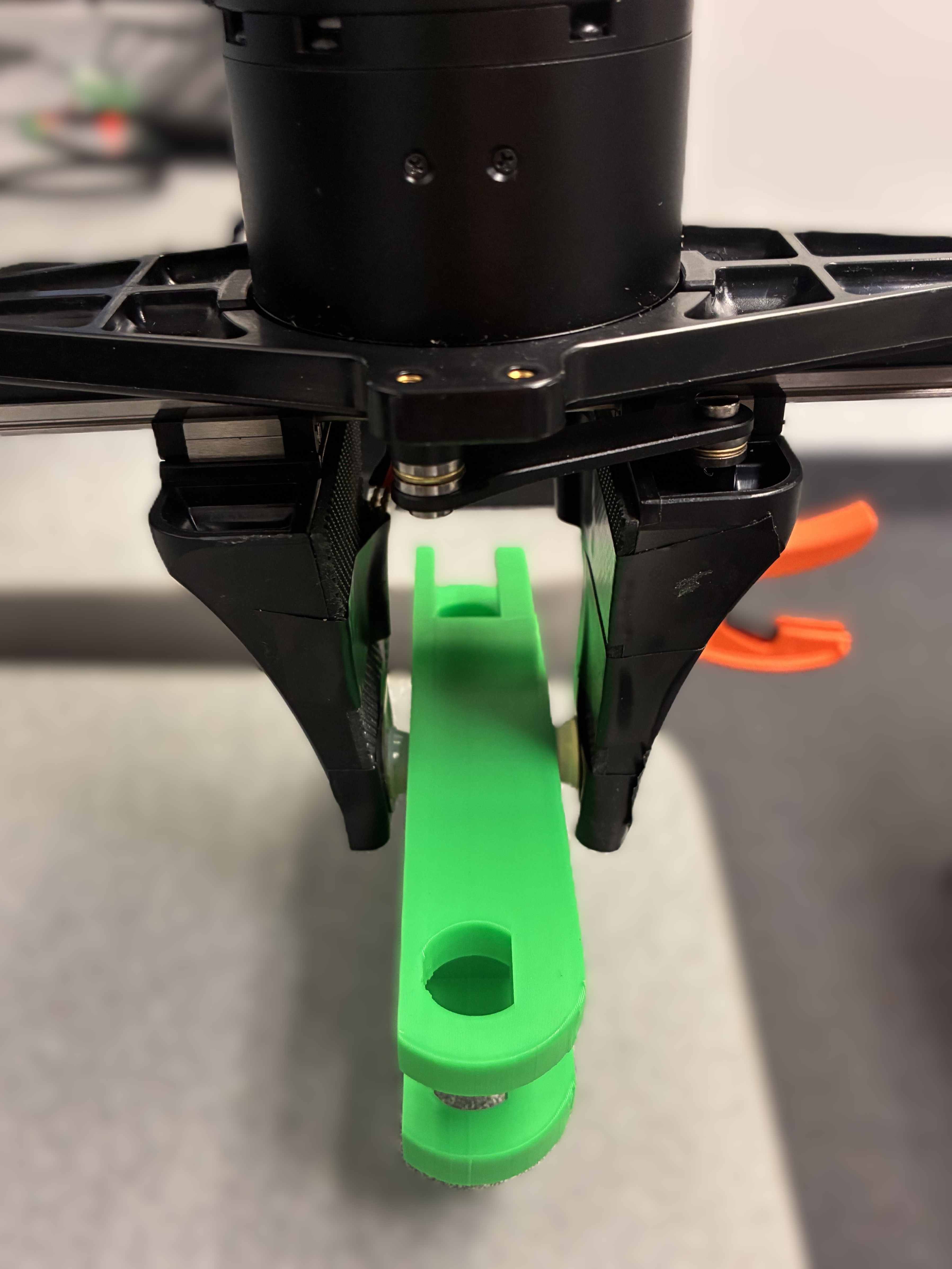}}
  \hspace{-3mm}
  \subfigure[]{\label{Fig_Foam}
  \includegraphics[height=2.85cm]{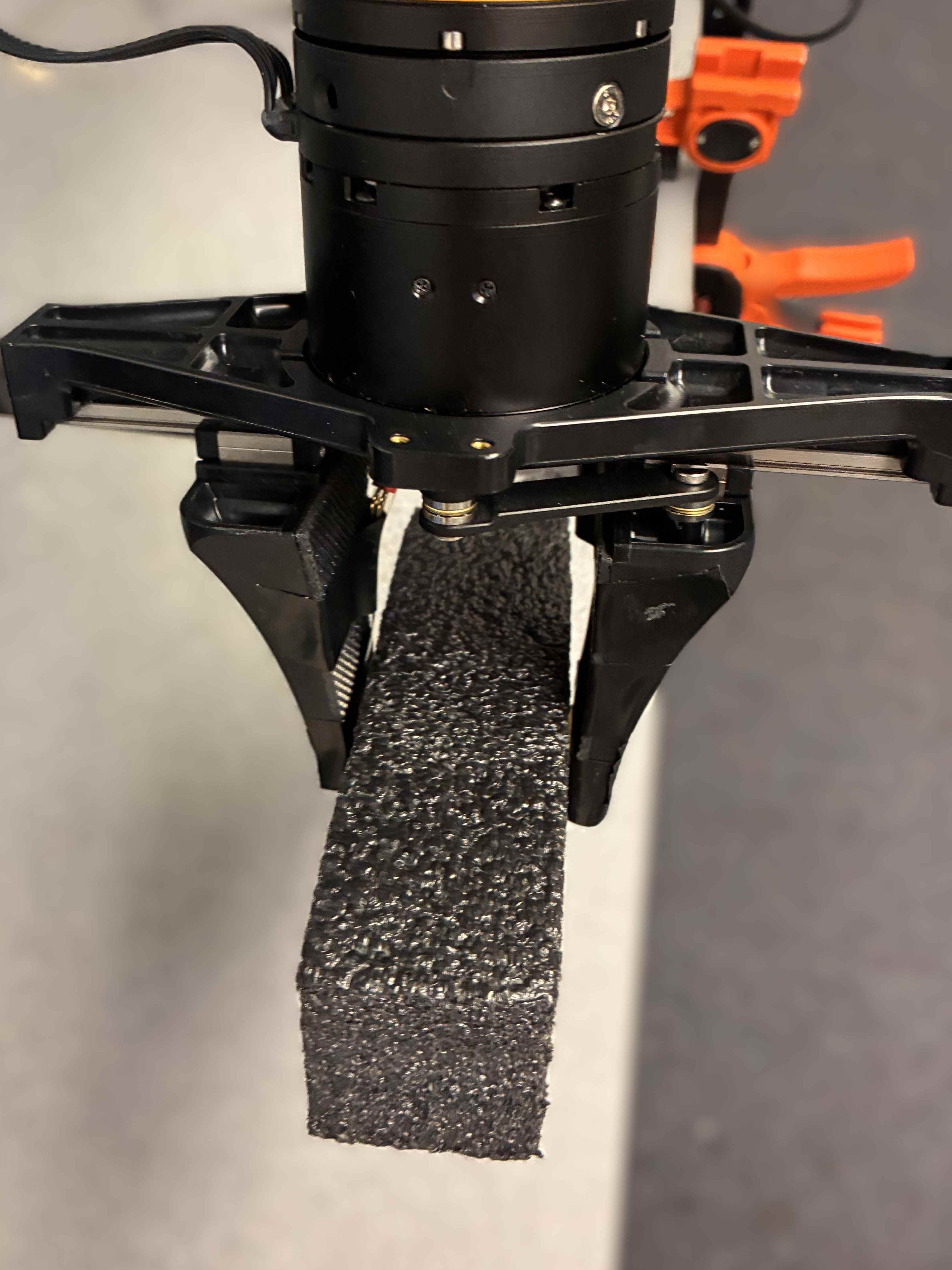}}
  \hspace{-3mm}
  \subfigure[]{\label{Fig_Silicon}
  \includegraphics[height=2.85cm]{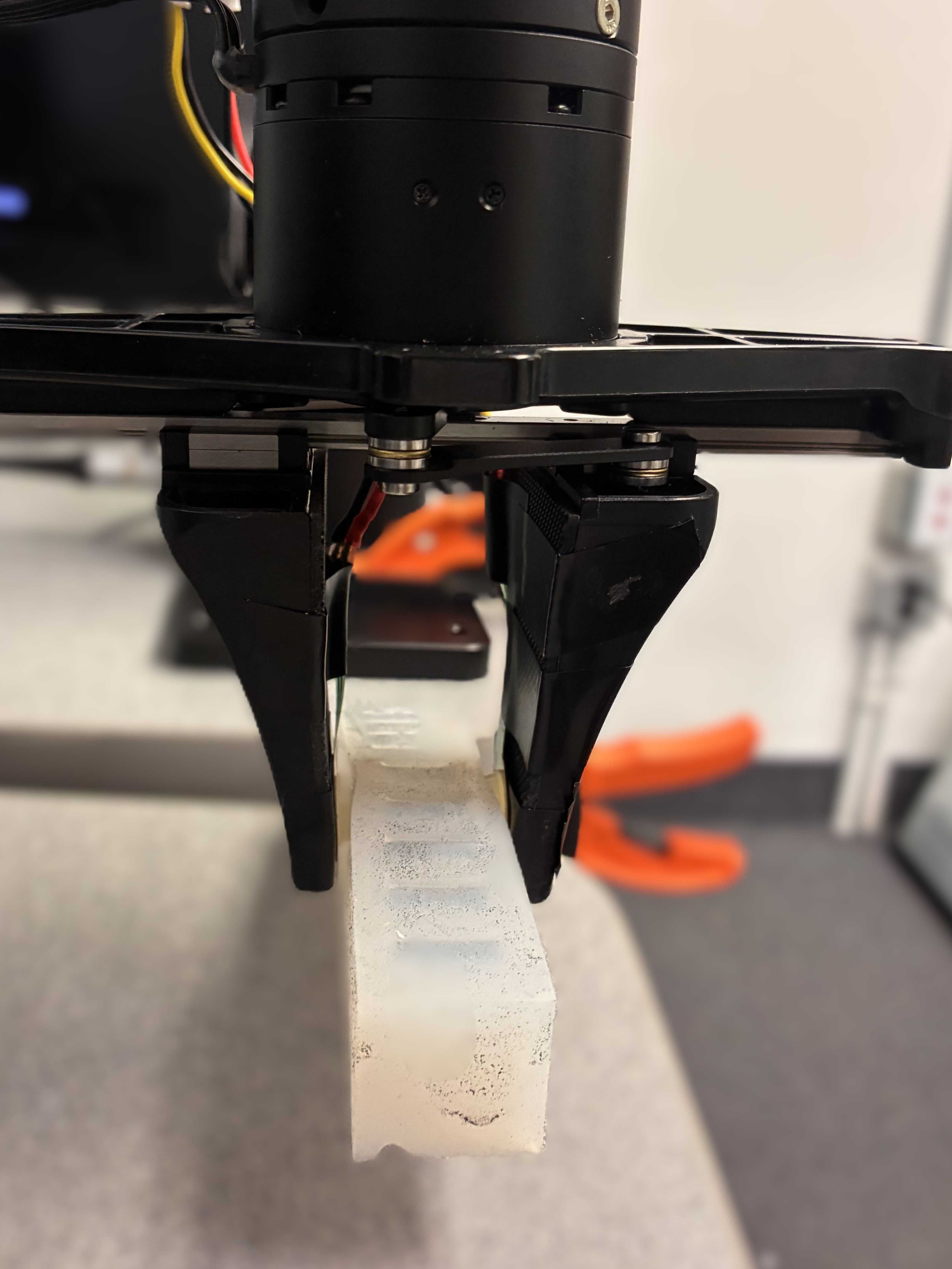}}
  \hspace{-3mm}
  \subfigure[]{\label{Fig_Rock}
  \includegraphics[height=2.85cm]{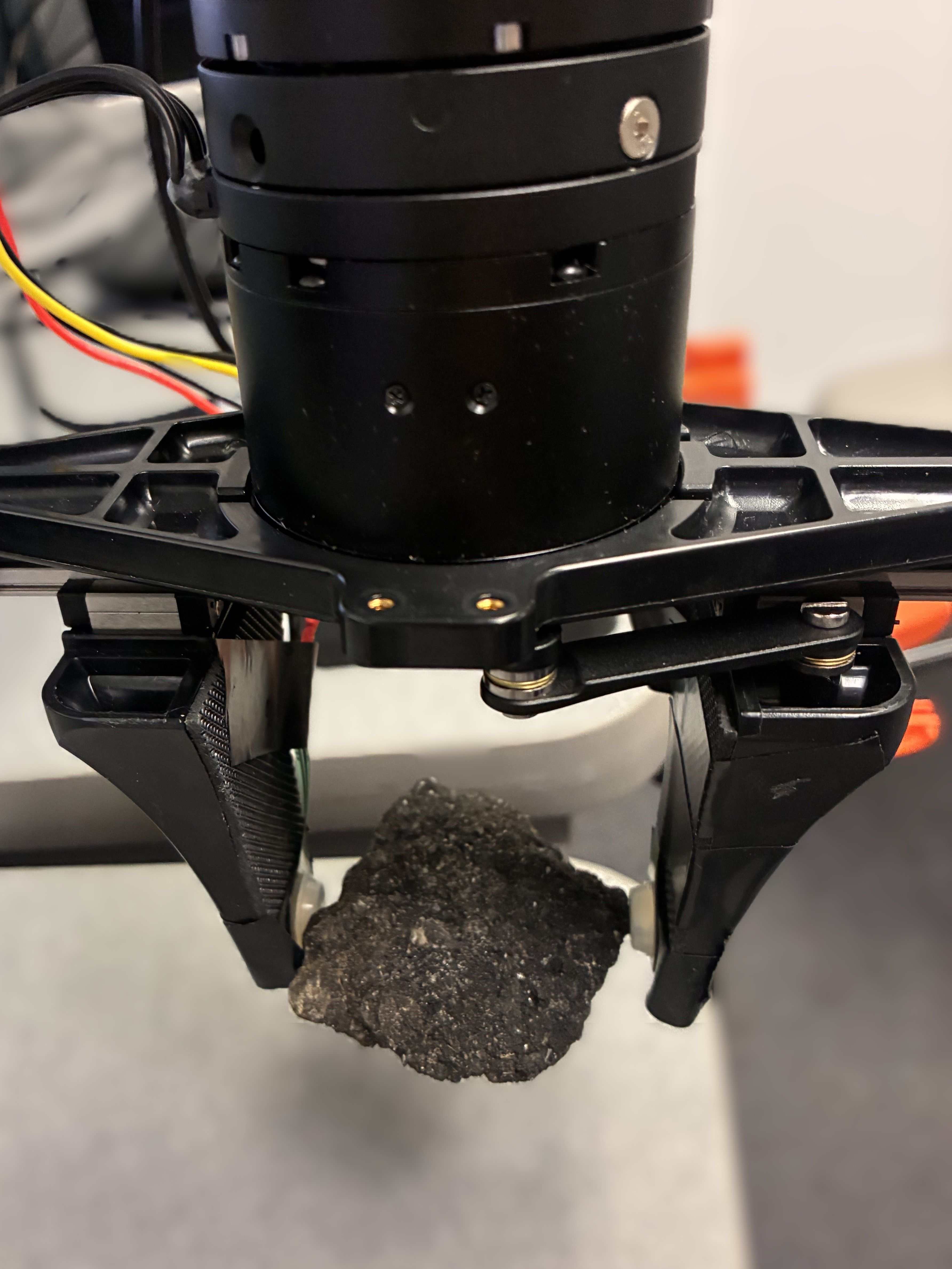}}
  \hspace{-3mm}
  \subfigure[]{\label{Fig_Card}
  \includegraphics[height=2.85cm]{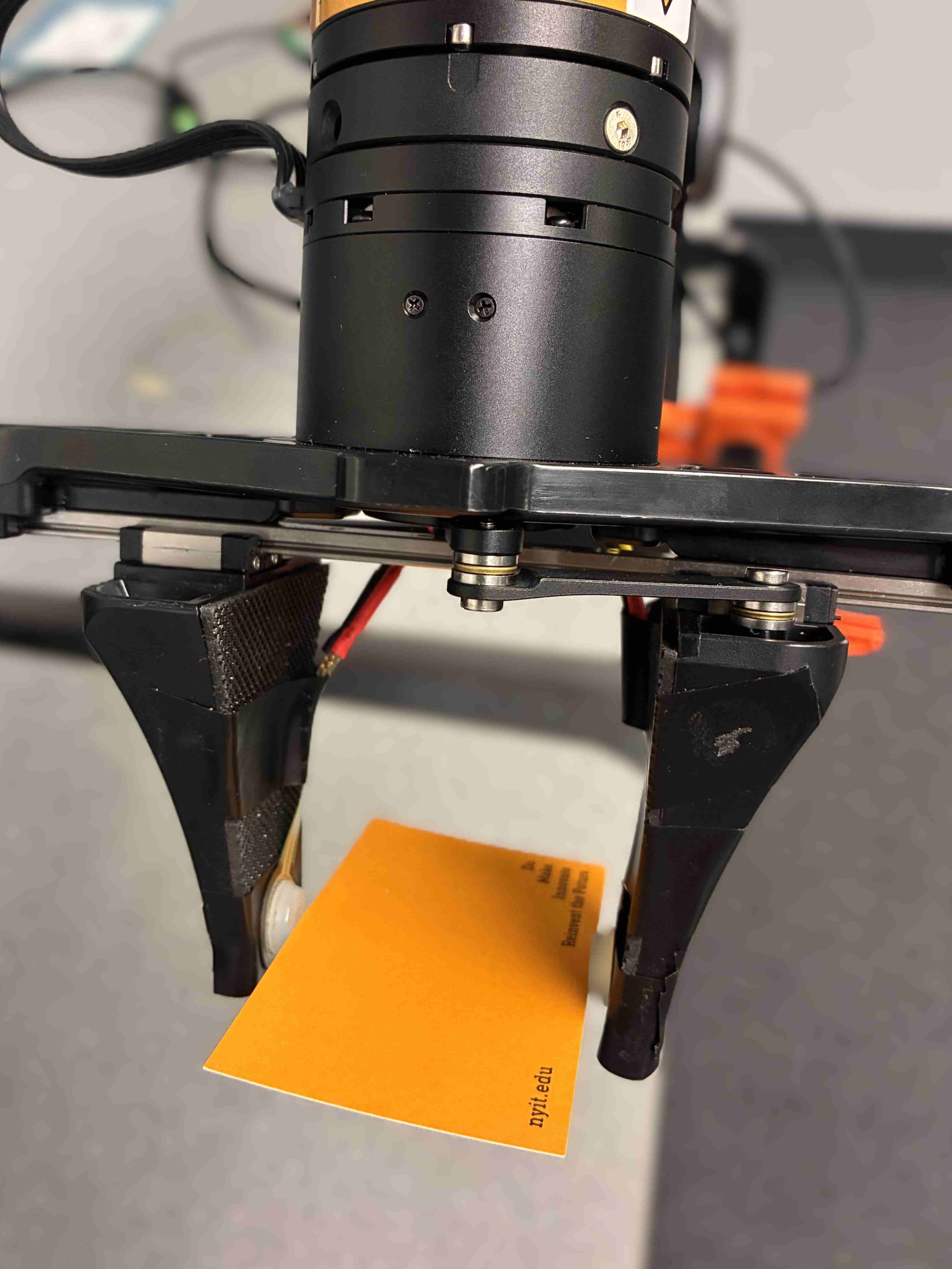}}
  \subfigure[]{\label{Fig_Chips}
  \includegraphics[height=2.85cm]{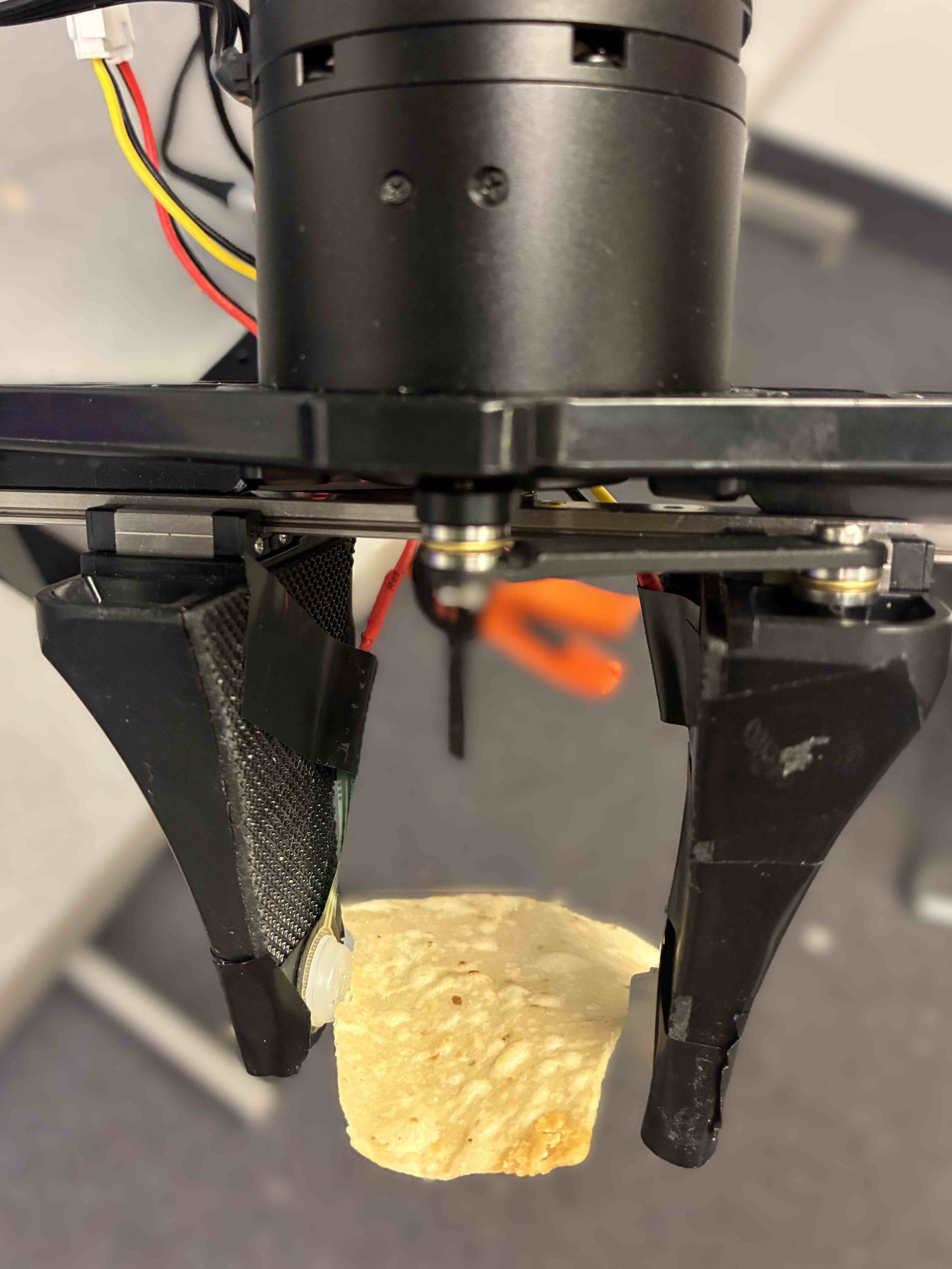}}
  \hspace{-3mm}
  \subfigure[]{\label{Fig_Maple}
  \includegraphics[height=2.85cm]{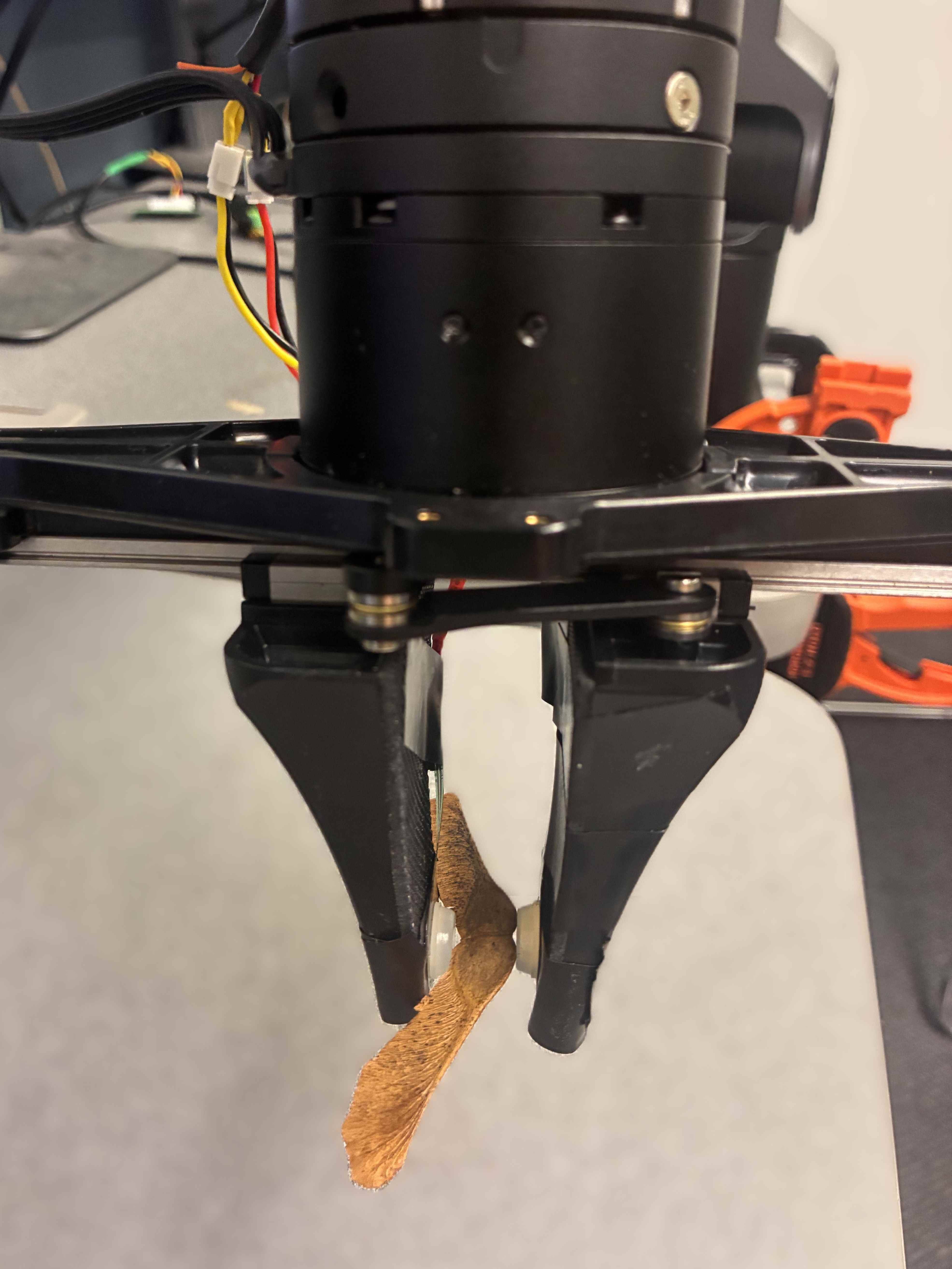}}
  \hspace{-3mm}
  \subfigure[]{\label{Fig_Coin}
  \includegraphics[height=2.85cm]{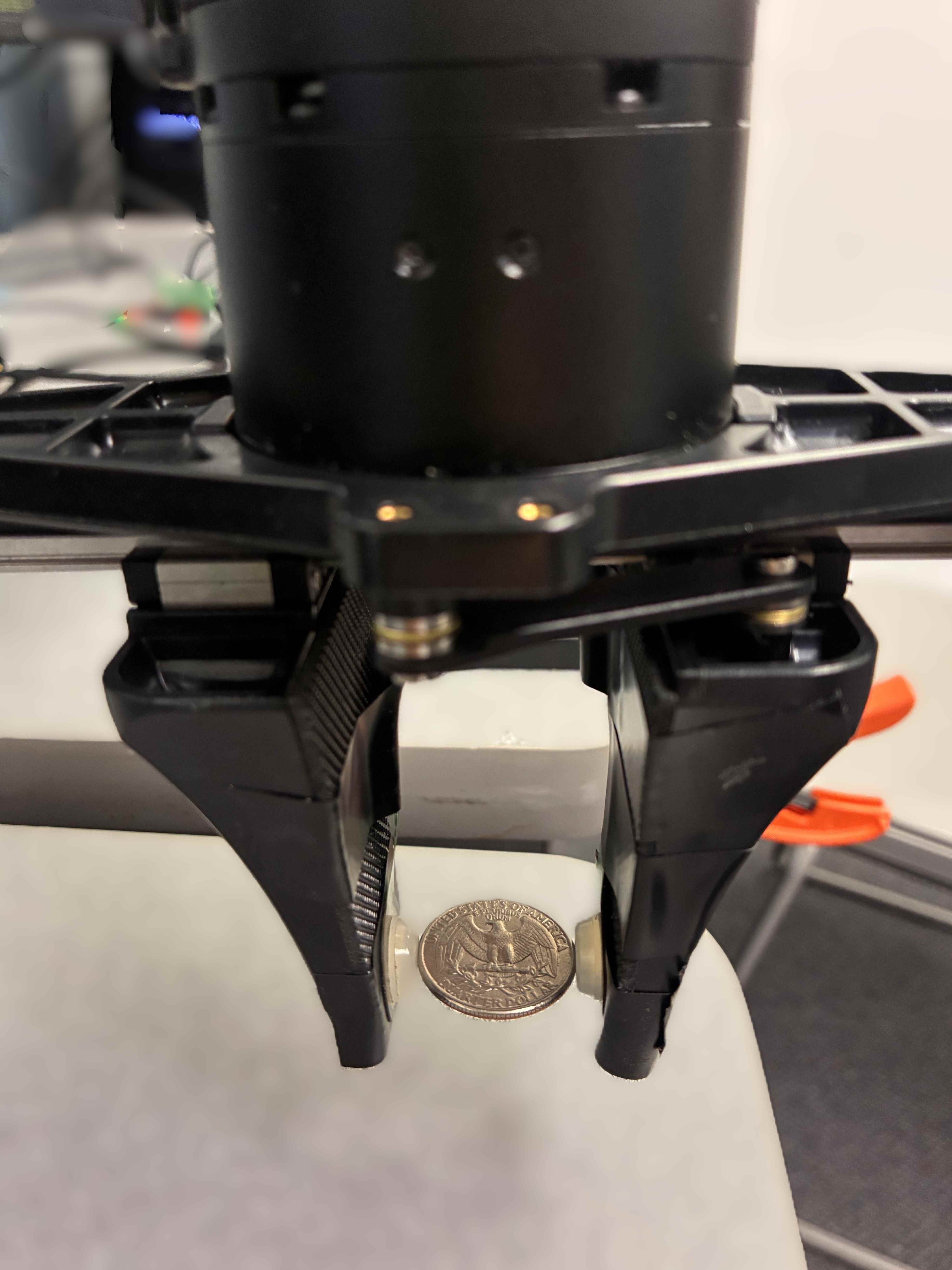}}
  \hspace{-3mm}
  \subfigure[]{\label{Fig_Nut}
  \includegraphics[height=2.85cm]{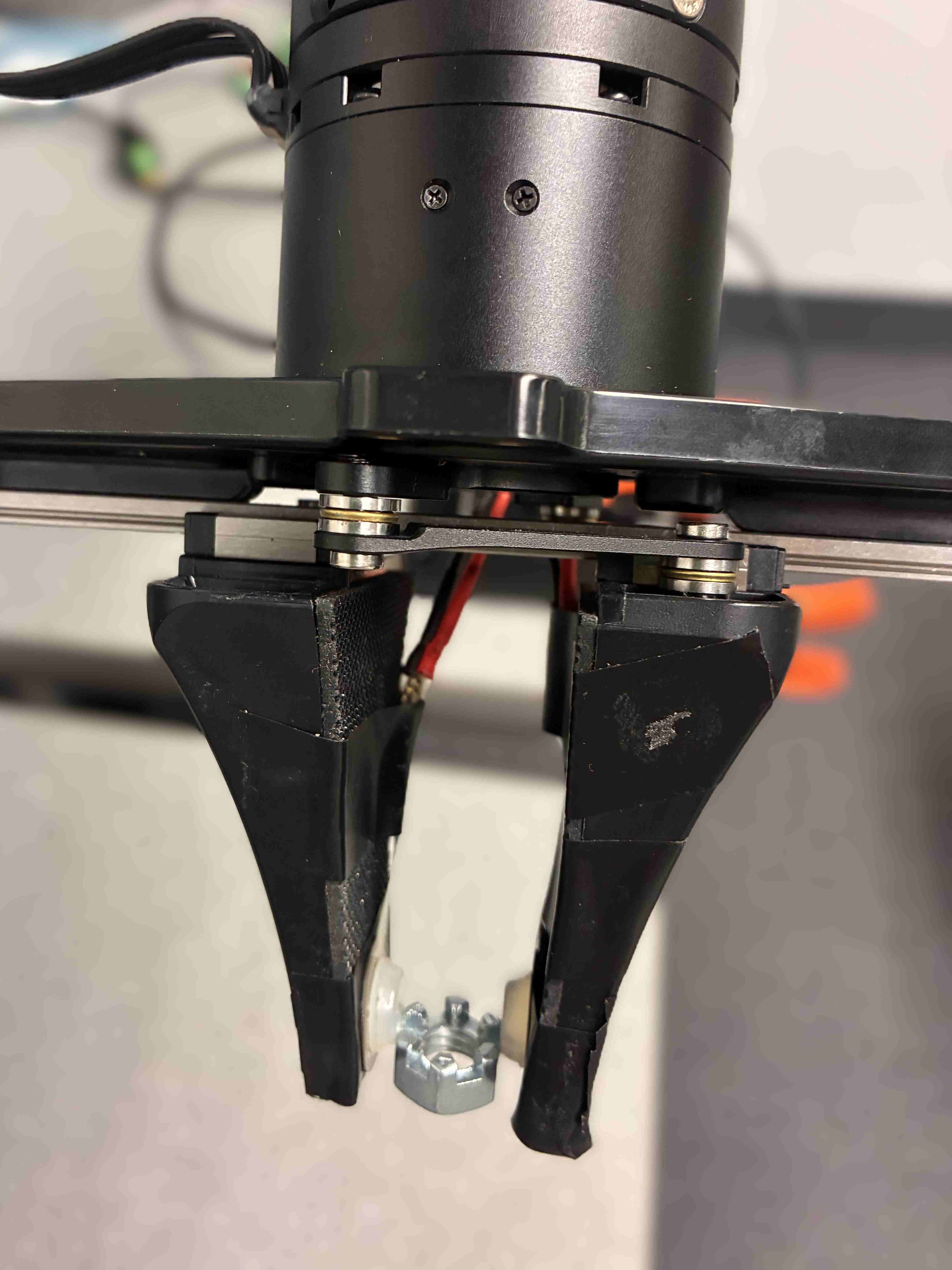}}
  \hspace{-3mm}
  \subfigure[]{\label{Fig_Tomato}
  \includegraphics[height=2.85cm]{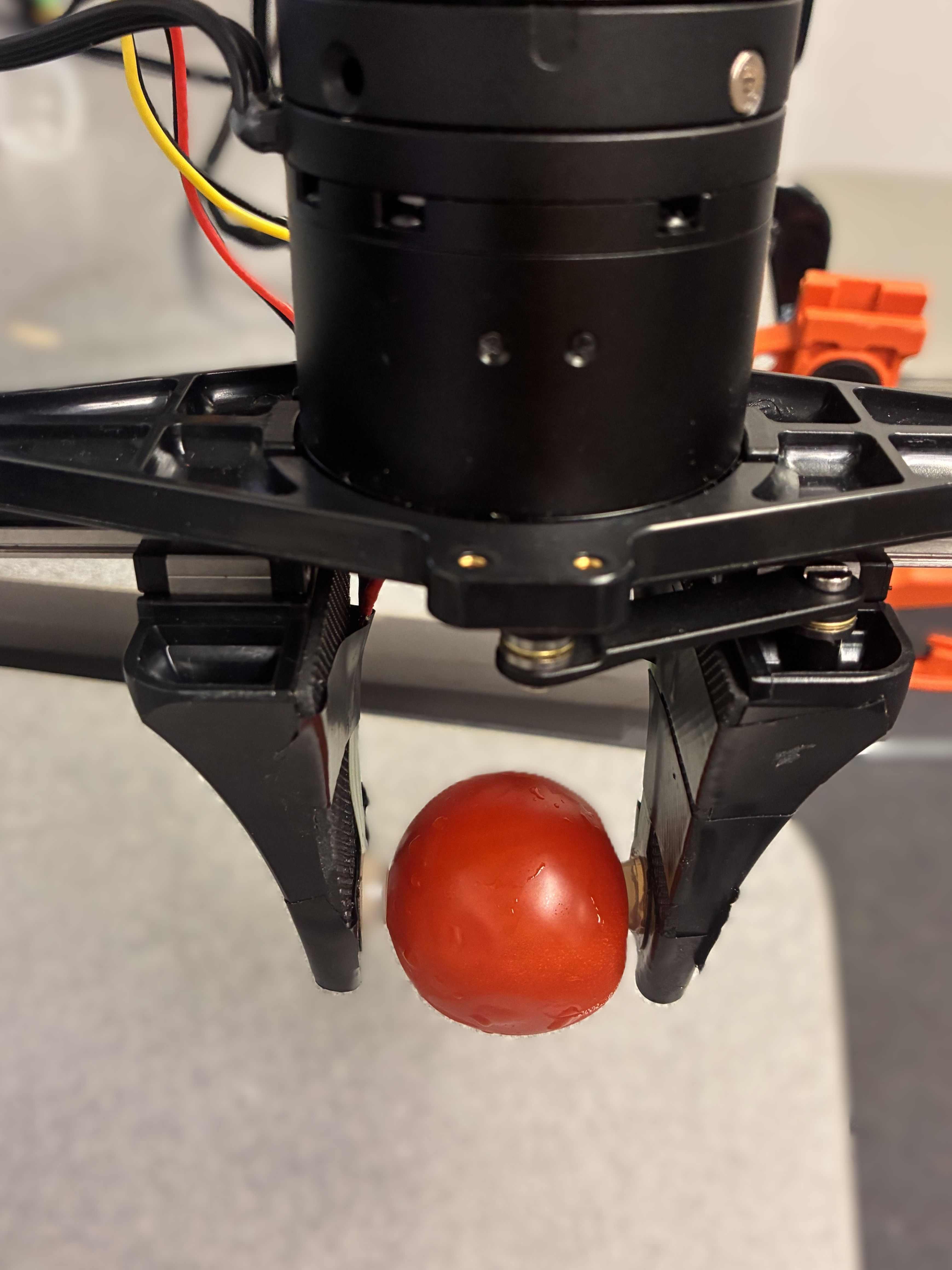}}
  \hspace{-3mm}
  \subfigure[]{\label{Fig_Blue}
  \includegraphics[height=2.85cm]{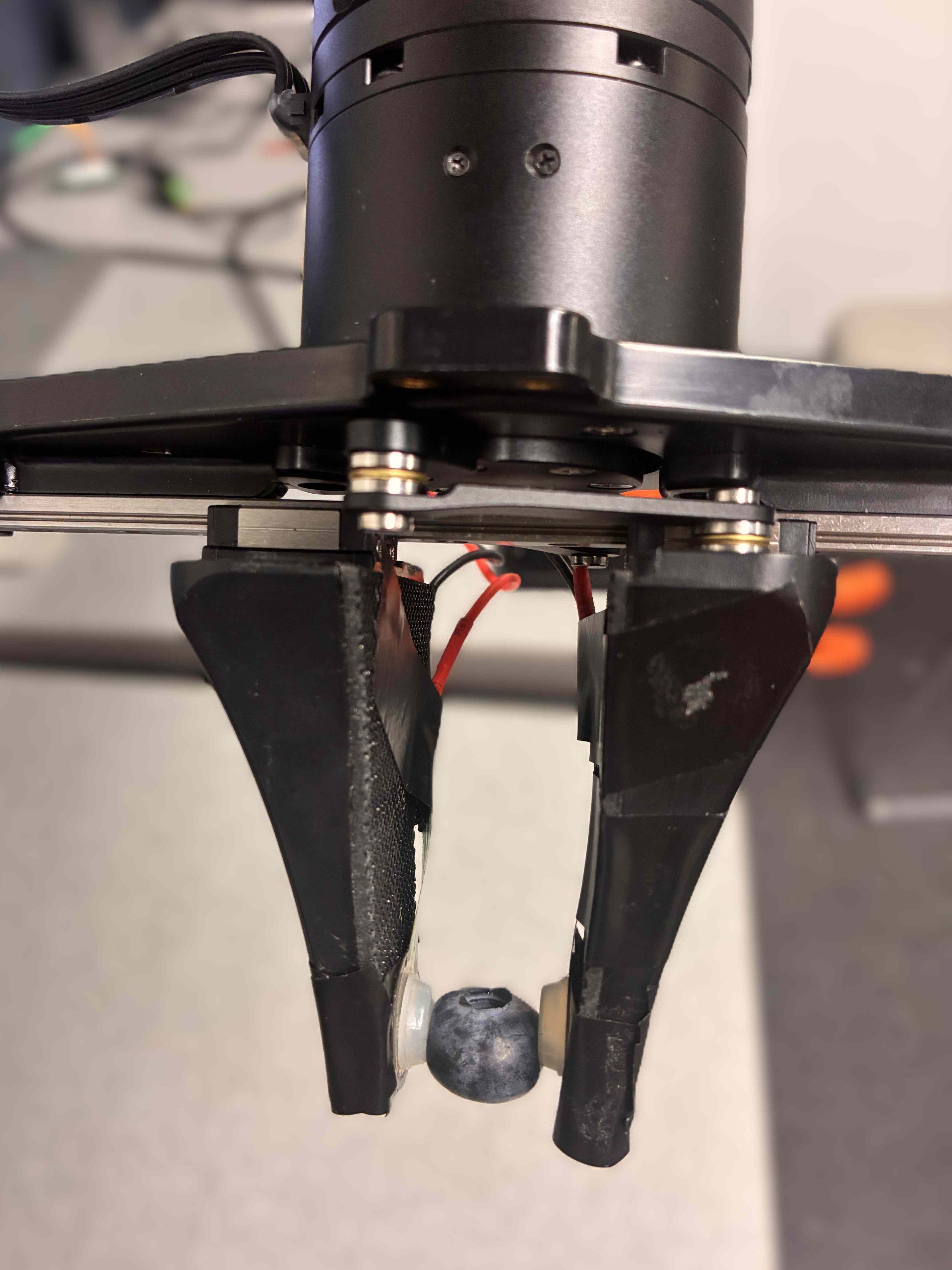}}
  \hspace{-3mm}
  \subfigure[]{\label{Fig_Marsh}
  \includegraphics[height=2.85cm]{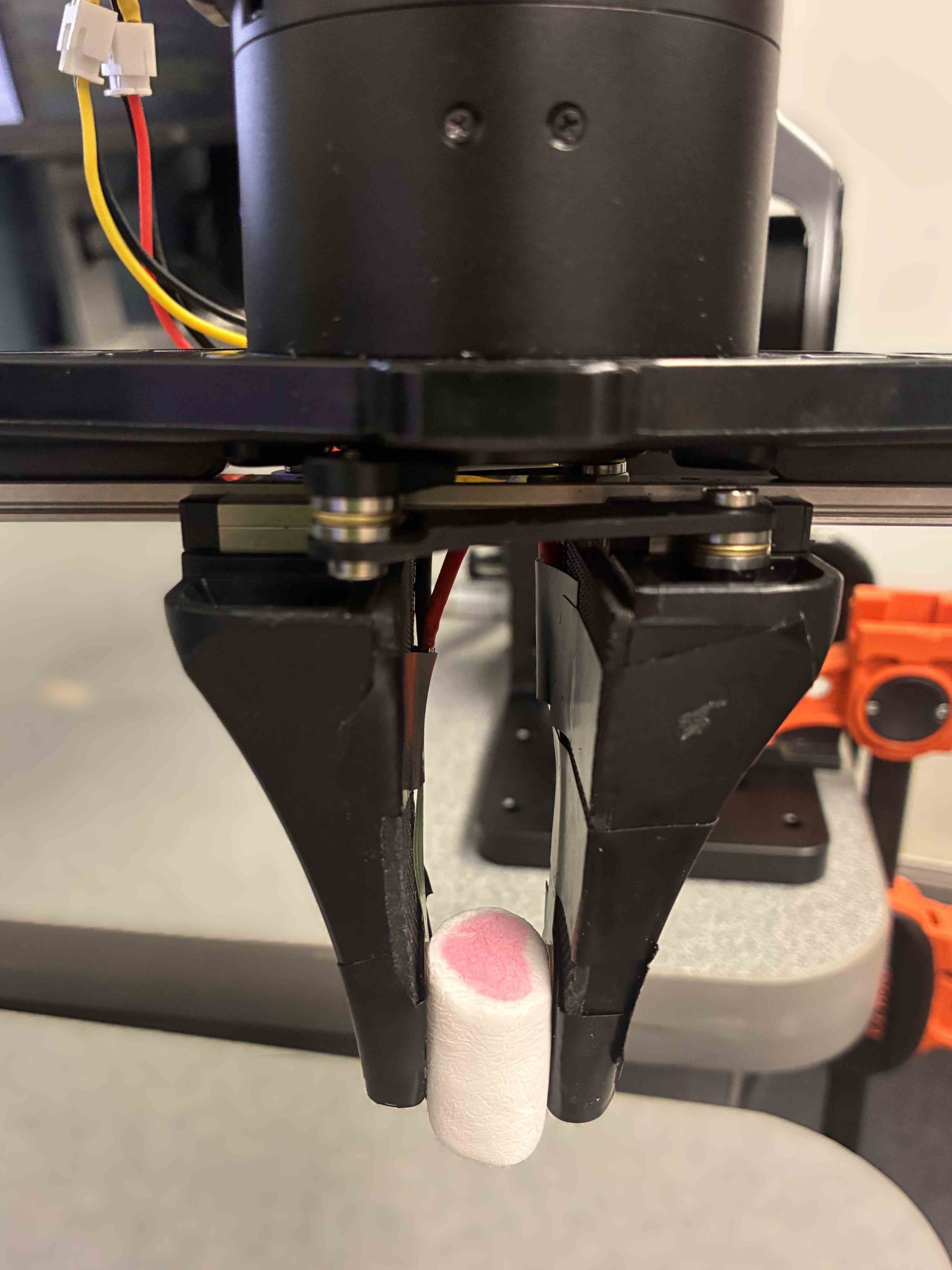}}
    \hspace{-3mm}
  \subfigure[]{\label{Fig_Tofu}
  \includegraphics[height=2.85cm]{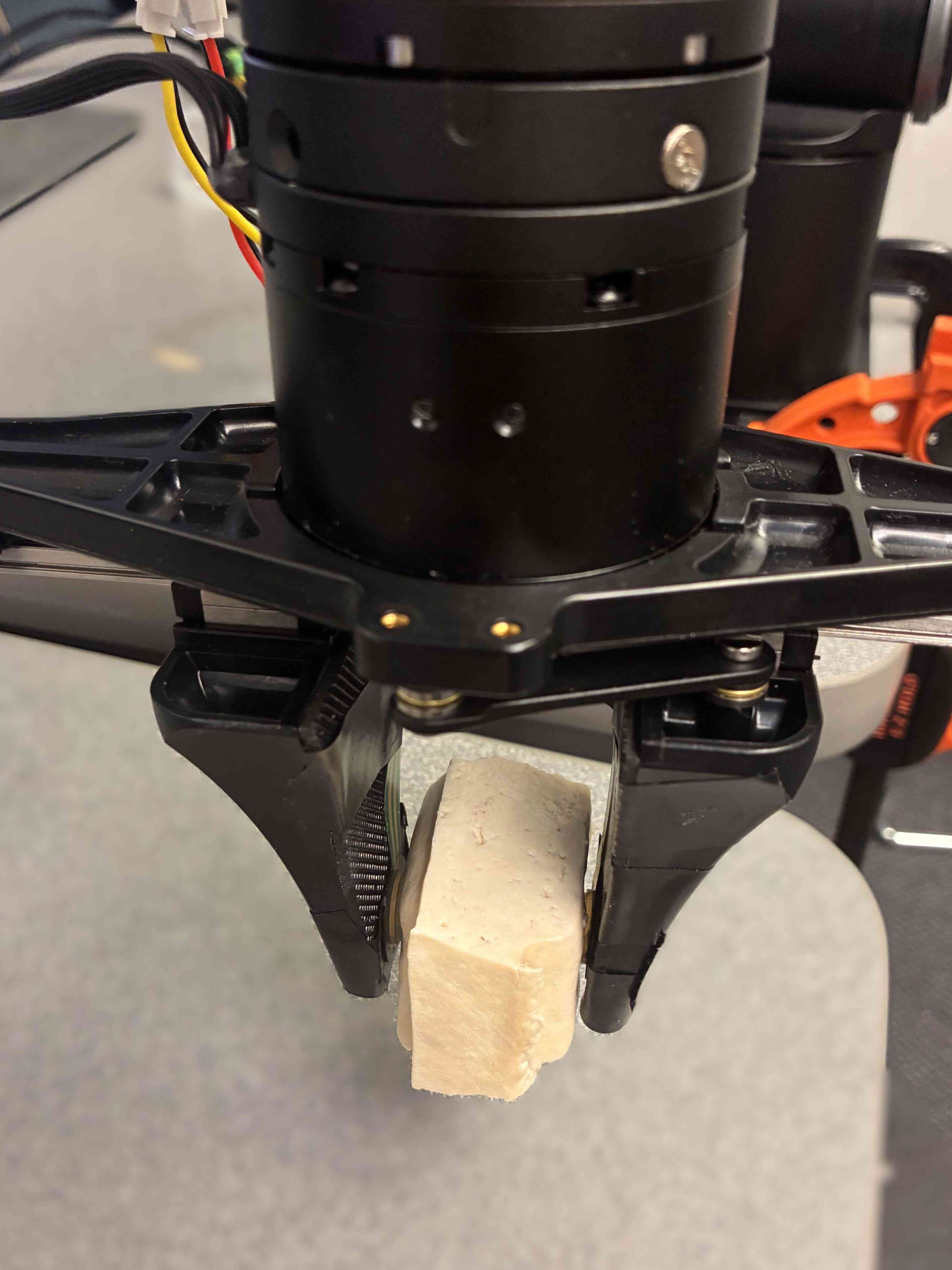}}
  \vspace{-2mm}
  \caption{Tactile grasping of different objects. (a) Copper block 1. (b) Copper block 2. (c) Aluminium block. (d) 3D-printed part. (e) Foam block. (f) Silicon block. (g) Rock. (h) Business card. (i) Chips from Moe's Southwest Grill. (j) Dry maple seed. (k) US quarter coin. (l) Hex nut fastener. (m) Tomato. (n) Blueberry. (o) Marshmallow. (p) Tofu.}
  \label{Fig_Object}
  \vspace{-3mm}
\end{figure*}

\renewcommand{\arraystretch}{1.35}
\setlength{\tabcolsep}{0.03in}
\begin{table*}
\centering
\caption{Grasping Object Properties}
\label{Tab_Objects}
\begin{tabular}{|c|c|c|c|c|c|c|c|c|c|c|c|c|c|c|c|c|c|c|c|}
\hline\hline

{Object} & \multicolumn{3}{c|}{Cylinder} & {Copper$1$} & {Copper$2$} & {Alum.} & {$3$D} & {Foam} & {Silicon} & {Rock} & {Card} & {Chips} & {Seed} & {Coin} & {Nut} & {Berry} & {Tomato} & Marsh. & Tofu\\ \hline

$m$ (g)& \multicolumn{3}{c|}{$29.4$}& $439.3$& $798.5$& $101.3$& $20.9$& $5.7$& $76.1$& $69.2$& $1.2$& $1.5$& $0.2$& $5.6$& $5.4$ & $2.2$ & $39.7$& $2.2$ & $38.7$  \\ \hline

$W_{\text{obj.}}^{\text{true}}$(mm)& $21.4$& $25.3$& $32.8$& $19.0$ & $35.0$& $12.8$& $25.2$& $37.5$& $24.4$& $51.0$& $50.4$& $36.1$& $8.8$& $23.7$& $14.3$& $16.0$ & $38.1$& $17.6$ &  $33.4$\\ \hline

Contact& Surf.& Line& Point& Surf.& Surf.& Surf.& Irreg.& Surface& Surf.& Irreg.& Line& Line& Line & Line & Surf.& Irreg.& Irreg.& Surf. & Surf. \\ \hline

Roughness& High& Low& N/A& Low& Low& Low& Low& Med.& Med.& High& N/A& N/A& N/A& N/A& Low& Low& Low & Med. & Med. \\ \hline

Hardness & High & High & High & High & High & High & High & Med. & Med. & High & Med. & Med. & High & High & High & Low & Low & Low & Low \\ \hline

$W_{\text{obj.}}$(mm)& $23.8$ & $27.4$ & $34.0$ & $21.8$ & $34.7$ & $14.8$ & $27.2$ & $36.9$ & $23.9$ & $49.5$ & $48.9$ & $38.5$ & $12.2$ & $25.8$ & $9.6$ & $17.6$ & $38.7$ & $11.9$ & $27.6$  \\ \hline

$\kappa_{\text{static}}$ (N/m) & $2040$ & $3331$ & $1375$ & $4000$ & $3637$ & $3113$ & $3146$ & $574$ & $415$ & $2515$ & $886$ & $1487$ & $1112$ & $2343$ & $2547$ & \color{blue}$118$ & $277$ & \color{blue}$89$ & \color{blue}$20$\\ \hline

$F_{x, {\min}}$(N)  & $2.5$ & $1.8$ & $1.0$ & $2.3$ &$2.9$& $1.8$ & $2.6$ & $0.9$ & $0.8$ & $1.1$ & $0.8$ & $1.0$ & $0.9$ & $1.2$ & $2.1$ & $0.8$ & $0.8$ & $0.7$ & $0.5$\\ \hline

$F_{x, {\max}}$(N)  & $7.3$ & $6.6$ & $3.7$ & $6.8$ & $7.6$ & $7.1$ & $7.3$ & $3.3$ & $3.0$& $4.3$ &$2.6$ & $3.8$ & $3.6$ & $4.6$ & $4.6$ & $1.9$ & $2.6$ & $1.5$ & $1.3$ \\ \hline\hline
\end{tabular}
\end{table*}

\begin{figure}
 \centering
 \includegraphics[width=7.5cm]{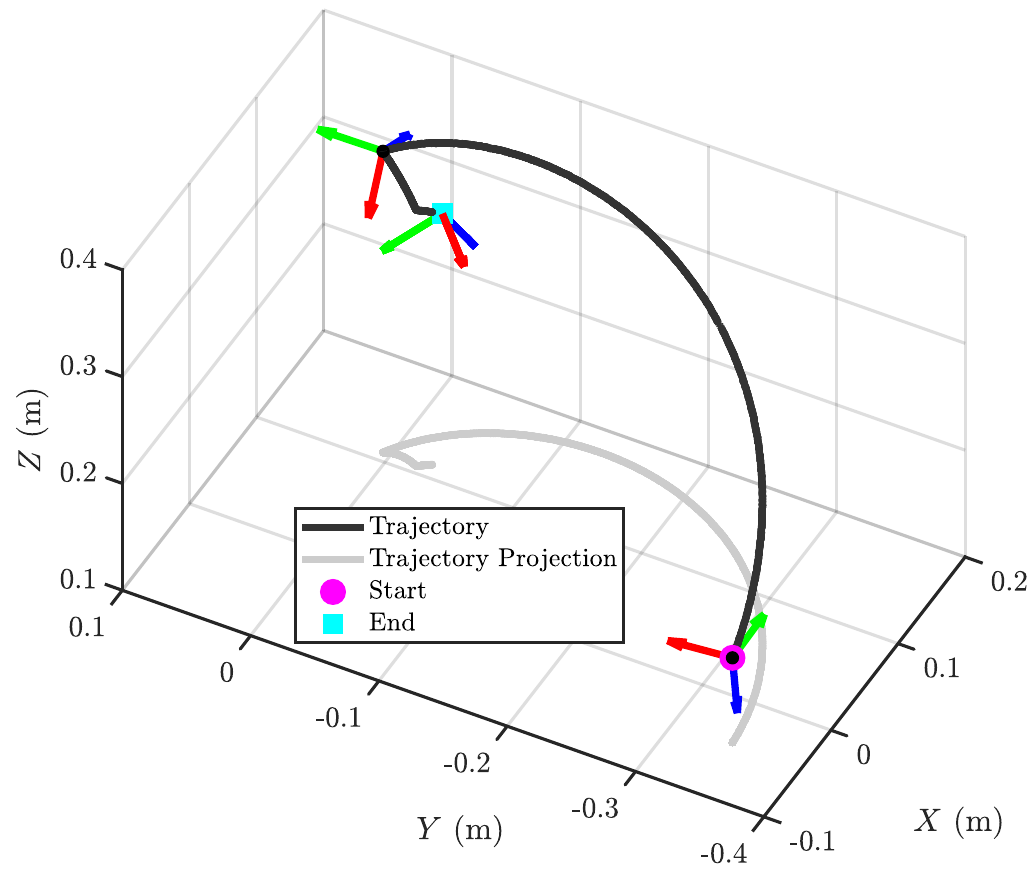}
 \vspace{-3mm}
 \caption{The trajectory of the arm movement.}
 \label{Fig_Arm_Traj}
 \vspace{-3mm}
\end{figure}

\begin{figure*}
  \centering
  \subfigure[Alum. Block]{\label{Fig_min_AL}
  \includegraphics[height=3.625cm]{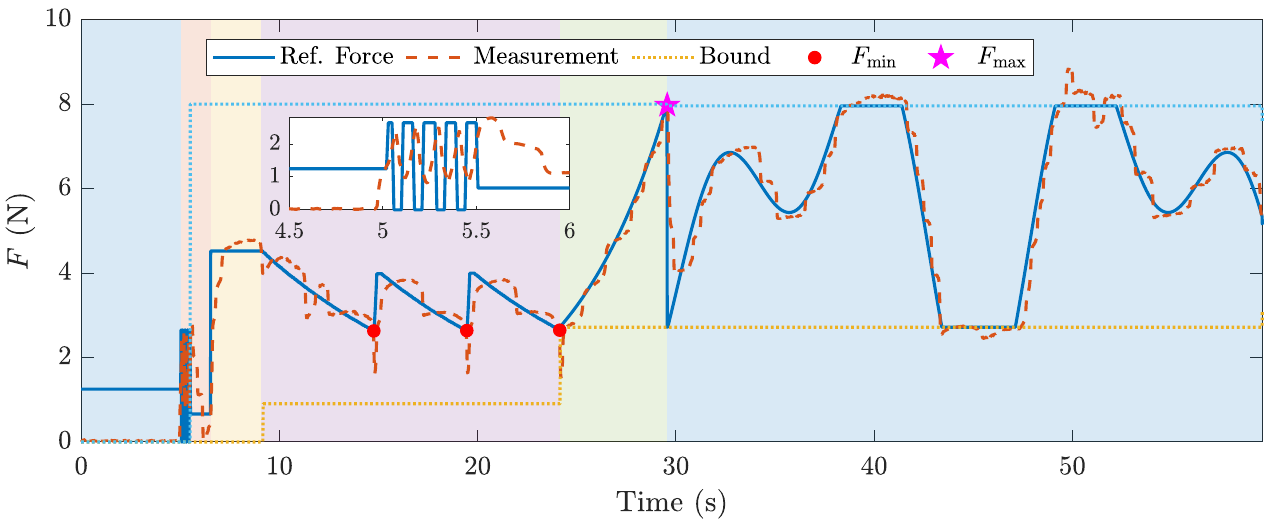}}
  \hspace{-4mm}
  \subfigure[Tomato]{\label{Fig_min_Tomato}
  \includegraphics[height=3.625cm]{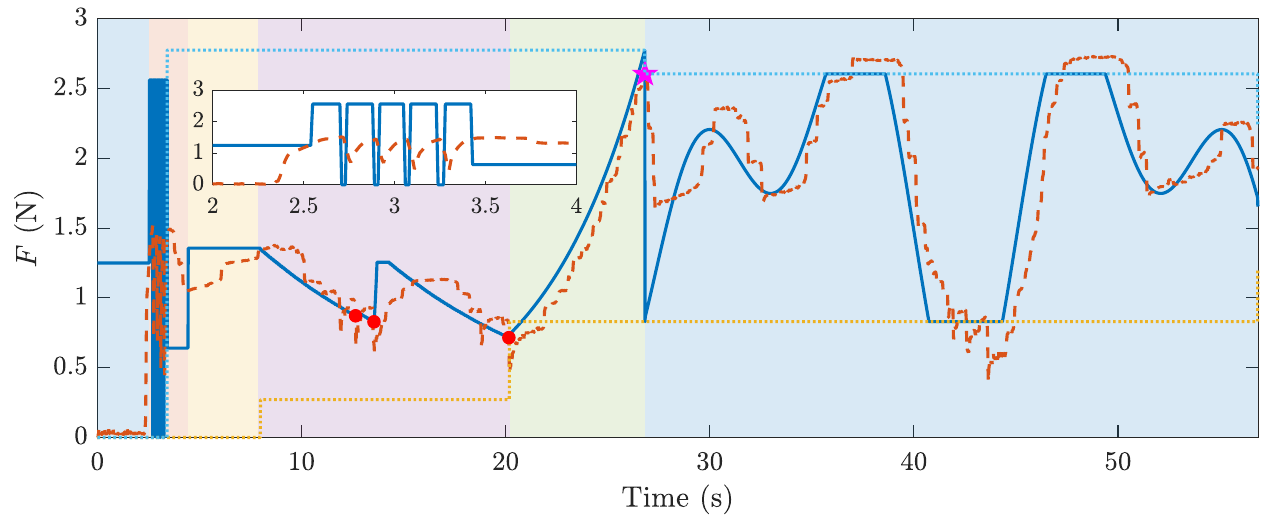}}
  \subfigure[Foam]{\label{Fig_min_Foam}
  \includegraphics[height=3.2cm]{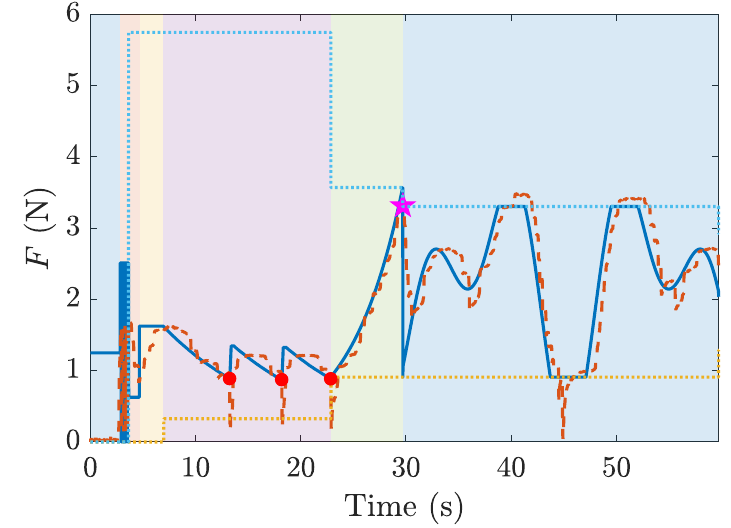}}
  \hspace{-4mm}
  \subfigure[Silicon]{\label{Fig_min_Silicon}
  \includegraphics[height=3.2cm]{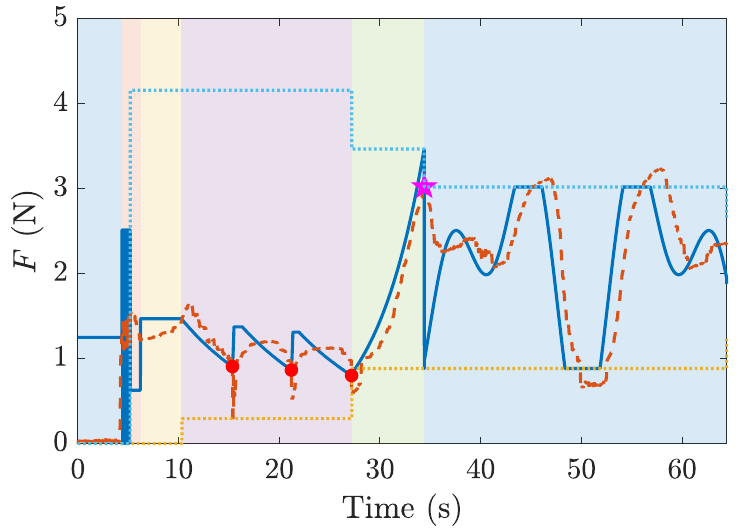}}
   \hspace{-4mm}
  \subfigure[Blueberry]{\label{Fig_min_Blue}
  \includegraphics[height=3.2cm]{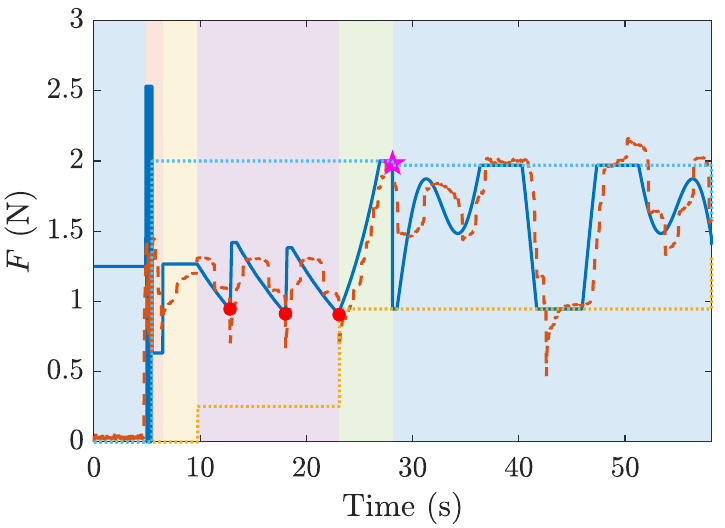}}
  \hspace{-4mm}
  \subfigure[Coin]{\label{Fig_min_Coin}
  \includegraphics[height=3.18cm]{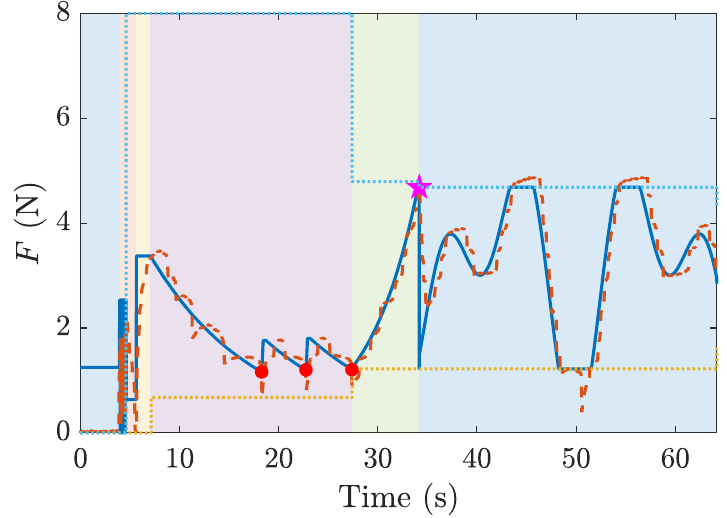}}
  \vspace{-1.5mm}
  \caption{Online object probing and constraints identification for various objects. (a) Alum. block. (b) Tomato. (c) Foam. (d) Silicon. (e) Blueberry. (f) US quarter coin. The entire process is divided into 6 sub-phases indicated by the colors, which are in the order of \texttt{approaching}, \texttt{probing}, \texttt{contact force applying}, \texttt{minimum force finding}, \texttt{maximum force verification}, and \texttt{secure holding}. Note after \texttt{contact force applying}, objects are in the air lifted by the robotic arm. The \enquote{$\bullet$} indicates the minimum forces, and \enquote{$\star$} the maximum forces.}
  \label{Fig_Probing}
  \vspace{-2mm}
\end{figure*}

\subsection{Results on Tactile Grasping}

\subsubsection{Objects Probing and Constraints Identification}


The proposed FSM probes the object's geometry and mechanical properties and actively identifies secure-and-gentle force constraints. We test the FSM on various objects with different sizes, weights, stiffnesses, and surface roughnesses, ranging from items as fragile and light as a marshmallow ($m=2$~g) to those as hard and heavy as a metal block. The entire process is divided into multiple phases, including \texttt{approaching}, \texttt{probing}, \texttt{contact force applying}, \texttt{minimum force finding}, \texttt{maximum force verification}, and \texttt{secure holding}. In the first two phases, the size and stiffness of the object are evaluated, and the initial force is calculated. In the \texttt{contact force applying} phase, contact between the gripper and the object is established. Afterward, the object is lifted into the air for transportation, and the secure-and-gentle force constraints are identified and verified online. Preliminary PID control is used for grasping force control. Fig.~\ref{Fig_Object} shows snapshots of the dynamic grasping of different objects. The gripper is commanded to grasp the object and move it to a different location; the trajectory and pose of the gripper are shown in Fig.~\ref{Fig_Arm_Traj}. In Fig.~\ref{Fig_Probing}, we show the results for several representative objects: an aluminum block, a tomato, a foam block, a silicone block, a blueberry, and a US quarter coin.

\begin{figure}
  \centering
  \includegraphics[width=8.5cm]{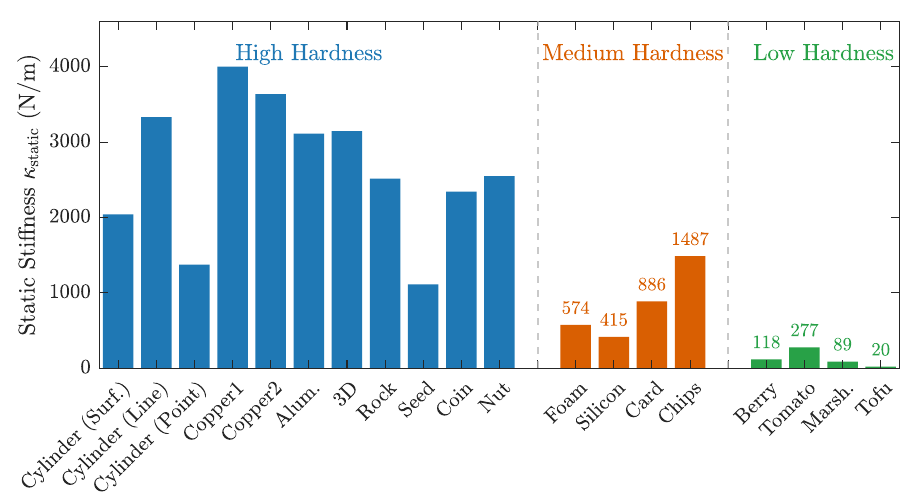}
  \vspace{-2mm}
  \caption{Experimentally evaluated stiffness of objects comparison. }
  \label{Fig_Stiffness}
  \vspace{-3mm}
\end{figure}

\begin{figure*}
  \centering
  \subfigure[]{
  \label{Fig_NMPC_AL}
  \includegraphics[height=2.9cm]{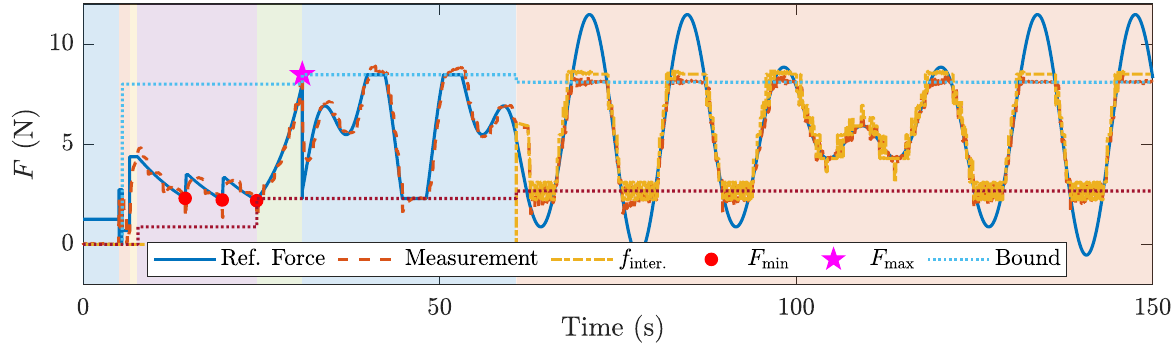}}
  \hspace{-3mm}\vspace{-2mm}
  \subfigure[]{
  \label{Fig_NMPC_AL_Error}
  \includegraphics[height=3cm]{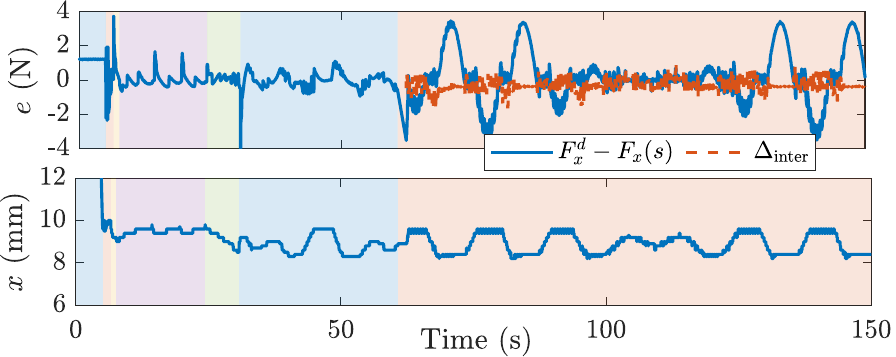}}
    \subfigure[]{
  \label{Fig_NMPC_Tomato}
  \includegraphics[height=3cm]{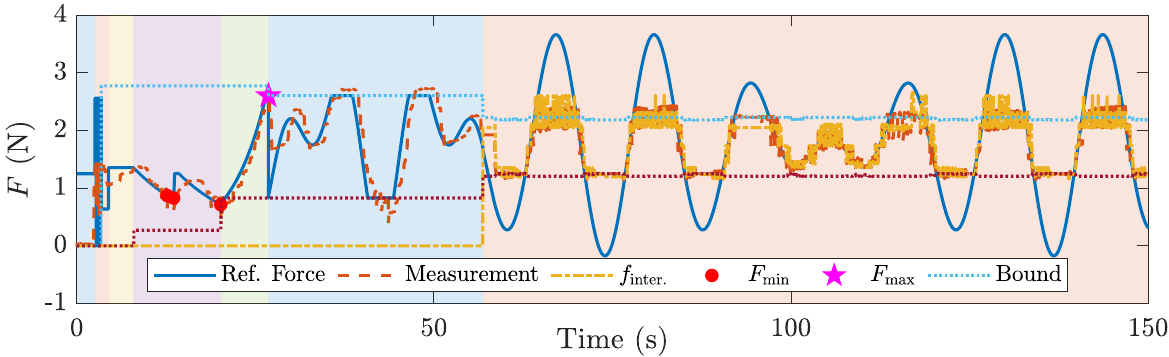}}
  \hspace{-3mm}\vspace{-2mm}
  \subfigure[]{
  \label{Fig_NMPC_Tomato_Error}
  \includegraphics[height=3cm]{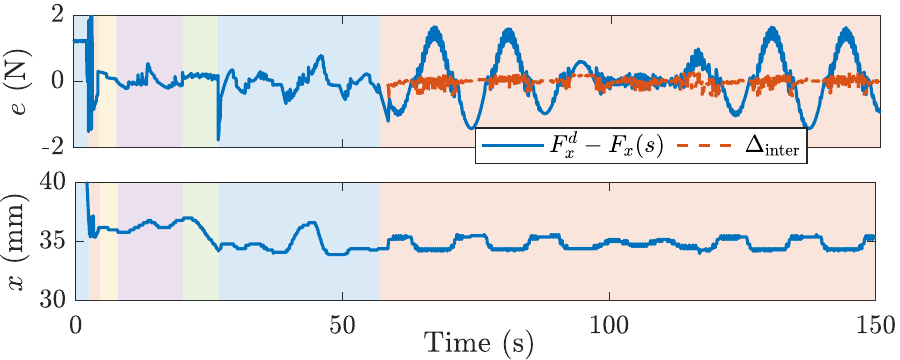}}
  \subfigure[]{
  \label{Fig_NMPC_Silicon}
  \includegraphics[height=3cm]{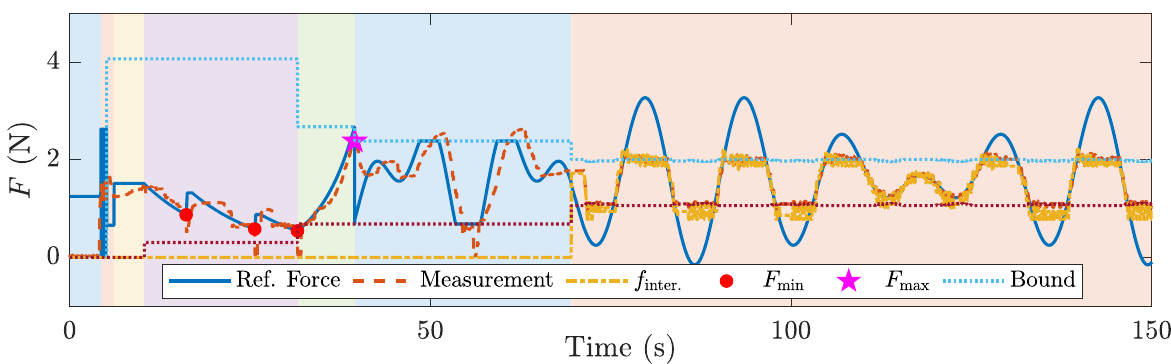}}\vspace{-2mm}
  \hspace{-3mm}
  \subfigure[]{
  \label{Fig_NMPC_Silicon_Error}
  \includegraphics[height=3cm]{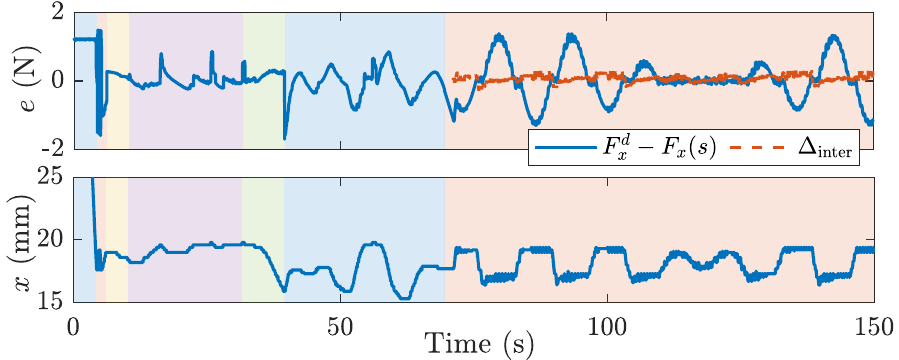}}
    \subfigure[]{
  \label{Fig_NMPC_Maple}
  \includegraphics[height=3cm]{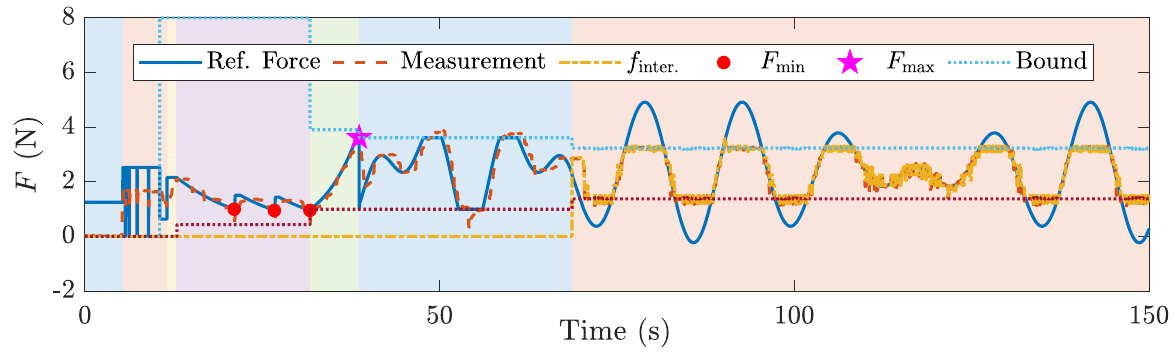}}
  \hspace{-3mm}
  \subfigure[]{
  \label{Fig_NMPC_Maple_Error}
  \includegraphics[height=3cm]{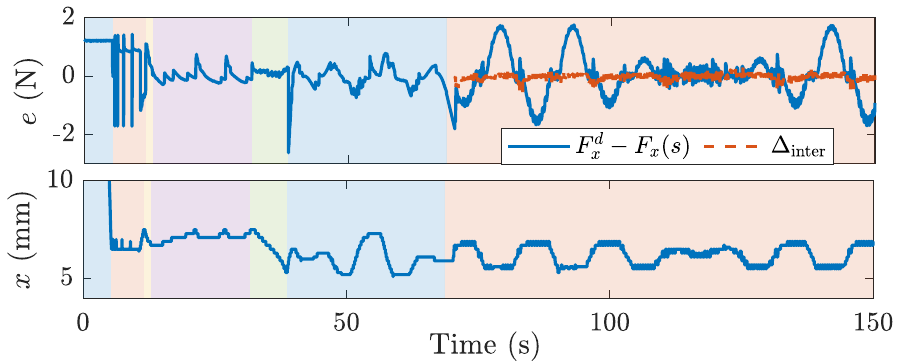}}
  \vspace{-2mm}
  \caption{Online object dynamic learning and nonlinear MPC control. (a) Alum. block. (b) Tomato.  (c) Silicon. (d) Maple seed.}
  \label{Fig_NMPC}
  \vspace{-2mm}
\end{figure*}

For instance, in Fig.~\ref{Fig_min_AL}, after initial contact is established, the gripper squeezes the object 5 times to evaluate its stiffness. The initial contact force for lifting the aluminum block is determined as $4.5$~N using~\eqref{Eq_Grasp_Force}. The grasping force is then commanded, and the robotic arm lifts the aluminum block at around $9$~s. At the same time, the initial constraints $F_{\text{obj}, \min}$ and $F_{\text{obj},\max}$ are updated based on the stiffness and allowable grasping depth. Since no slip occurs, the initial force is greater than $F_{\text{obj},\min}$; namely, $\mathds{C}_{\text{secure}}$ is inherently satisfied, which triggers constraint hunting to identify the true $F_{\text{obj},\min}$ and $F_{\text{obj},\max}$. In total, three trials of relaxation and reflex are performed. We reduce the grasping force at a low rate ($2\%$) to avoid the effects of PID overshoot and oscillation until a slip ($25\%$ error) occurs, at which point the current reference force is labeled as the lower bound (see \enquote{$\bullet$} in Fig.~\ref{Fig_min_AL}) and $\mathbb{D}_{F,\min}$ is updated. Immediately, the grasping force is increased based on the identified lower bound. After three trials, the lower bound is identified and $\mathds{C}_{\text{secure}}$ is updated. Furthermore, we verify the initially estimated maximum grasping force. The grasping force is gradually increased (at a rate of $2\%$). In the case of crushing or permanent deformation, the grasping force feedback will drop. Therefore, if a $20\%$ error is identified, the current feedback force is labeled as the true $F_{\text{obj},\max}$ and $\mathds{C}_{\text{gentle}}$ is updated; see \enquote{$\star$} in Fig.~\ref{Fig_min_AL}. We tested the FSM on various objects: a smooth-surfaced and soft tomato in Fig.~\ref{Fig_min_Tomato}, soft and light foam in Fig~\ref{Fig_min_Foam}, an extremely soft silicone block in Fig.~\ref{Fig_min_Silicon}, an extremely soft and extremely light blueberry in Fig.~\ref{Fig_min_Blue}, and a light but hard coin in Fig.~\ref{Fig_min_Coin}. These objects exhibit different grasping forces and distinct constraints. A summary of all objects' properties and identified constraints is shown in Table~\ref{Tab_Objects}, including a qualitative assessment of each object's hardness, surface roughness, and contact type, alongside the experimentally obtained stiffness and constraints. These objects present significantly distinct properties, thereby demonstrating the broad operating bandwidth of the proposed design and approach.

The evaluated stiffness is shown in Fig.~\ref{Fig_Stiffness}. We should note that the absolute value of the evaluated stiffness may not represent the true physical stiffness. However, the results align with qualitative assessments of object hardness based on everyday experience. Objects with high hardness display large stiffness, for instance, a copper block or a rock, while tofu and marshmallows have the lowest stiffness (less than 100). In the experiment, we normalized the stiffness to the range $[\kappa_{\min}, \kappa_{\max}]$ to ensure a valid initial force calculation in~\eqref{Eq_Grasp_Force}. Under this normalization, the initial force and maximum force might be overestimated, making the relaxation and reflex mechanism even more necessary.

With the identified constraints, we apply PID control to track a reference grasping force. The reference grasping force is filtered using the identified constraints. Under preliminary PID control, the grasping force follows the reference. However, because the reference changes rapidly and PID control does not account for the gripper-object interaction dynamics, overshoot and large tracking errors occur. This overshoot leads to violations of the secure-and-gentle constraints; see Figs.~\ref{Fig_min_AL} and~\ref{Fig_min_Tomato} at $t=50$~s and $t=45$~s, respectively. For light or rough-surfaced objects, small overshoots do not cause significant failures; see Figs.~\ref{Fig_min_Tomato} through~\ref{Fig_min_Coin}. However, for heavy and smooth objects, these violations may cause slippage and failure (see the discussion in Fig.~\ref{Fig_min_Copper2}), which necessitates constraint-embedded MPC.

\subsubsection{MPC-based Control}

Building upon the above preliminary results, we present the online interaction dynamics learning and control framework for dynamic grasping. The robotic arm is commanded to grasp and transport the unknown object from one location to another. Note that the object properties listed in Table~\ref{Tab_Objects} are not used. For each experimental trial, even for the same object, the object properties are re-evaluated, and the interaction dynamics model is re-learned online, complying with assumption $\mathcal{A}_2$ of zero prior knowledge.

Fig.~\ref{Fig_NMPC} shows the results for four representative objects: an aluminum block in Fig.~\ref{Fig_NMPC_AL}, a tomato in Fig.~\ref{Fig_NMPC_Tomato}, a silicone block in Fig.~\ref{Fig_NMPC_Silicon}, and a maple seed in Fig.~\ref{Fig_NMPC_Maple}. After the maximum force is verified (see \enquote{$\star$} in Fig.~\ref{Fig_NMPC_AL}, for instance), an excitation reference force is applied under PID control for $25$~s, during which interaction dynamics training data is collected. Out of $2500$ data points, we randomly select $1500$ points for online interaction model training according to~\eqref{Eq_gp_interaction}. The model is trained instantaneously on the GPU. With the case-specific GP-based gripper-object interaction dynamics established, we execute the proposed nonlinear MPC control to track the time-varying grasping force while updating the constraints $\mathds{C}_{\text{secure}}^{\text{Robust}}$ and $\mathds{C}_{\text{gentle}}^{\text{Robust}}$. As shown in Fig.~\ref{Fig_NMPC_AL}, once the interaction dynamics model is trained, the grasping force prediction becomes available and closely follows the actual measurement, yielding an accurate dynamics model for MPC prediction. During the MPC horizon, the reference force is unconstrained; however, the actual grasping force remains within the safe region and tracks the time-varying reference. Both the interaction dynamics prediction error and MPC control error are shown in Fig.~\ref{Fig_NMPC_AL_Error}. The model prediction error remains under $0.4$~N. The prediction uncertainties are reflected in the constraint updates, causing both the lower and upper bounds to tighten inward. The peak tracking error under PID control reaches $1$~N, whereas under the proposed nonlinear MPC, it stays below $0.5$~N.

\renewcommand{\arraystretch}{1.35}
\setlength{\tabcolsep}{0.035in}
\begin{table*}
\centering
\caption{Statistics of the  force error ($\times 10^{-2}$~N) under different control methods (Mean and one standard deviation) }
\label{Tab_Eror}
\begin{tabular}{|c|c|c|c|c|c|c|c|c|c|c|}
\hline\hline
Control & Alum. & Copper1 & Foam & Silicon & Tomato & Rock & 3D Part & Coin & Card  \\ \hline
PID & $35.2 \pm 40.1$ & $32.7 \pm 27.7$ & $24.4 \pm 20.0$ & $37.6 \pm 34.7$ & $24.8 \pm 22.7$ & $26.7 \pm 28.6$ & $38.2 \pm 50.0$ & $40.6 \pm 51.1$ & $37.8 \pm 29.5$  \\ \hline

LMPC & $25.7 \pm 20.6$ & $28.4 \pm 21.3$ & $13.1 \pm 10.9$ & $8.6 \pm 5.2$ & $11.9 \pm 9.4$ & $24.8 \pm 19.9$ & $\mathbf{25.8 \pm 23.1}$ & $\mathbf{17.5 \pm 15.3}$ &$ 6.49 \pm 4.5$ \\ \hline

NMPC & $\mathbf{23.6 \pm 17.7}$ & $\mathbf{24.1 \pm 19.0}$ & $\mathbf{13.4 \pm 10.1}$ & $\mathbf{5.1 \pm 3.9}$ & $\mathbf{8.3 \pm 6.5}$ & $\mathbf{18.2 \pm 15.0}$ & $27.7 \pm 21.3$ & $18.4 \pm 15.7$ & $\mathbf{5.3 \pm 5.6}$  \\ \hline\hline
\end{tabular}
\end{table*}

\begin{figure*}
  \centering
  \subfigure[Marshmallow]{\label{Fig_min_Marsh}
  \includegraphics[height=3.1cm]{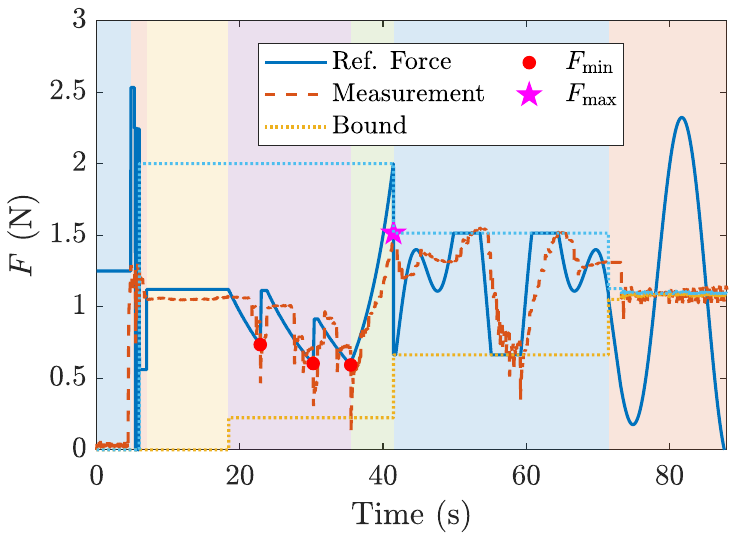}}
  \hspace{-3mm}
  \subfigure[Tofu]{\label{Fig_min_Tofu}
  \includegraphics[height=3.1cm]{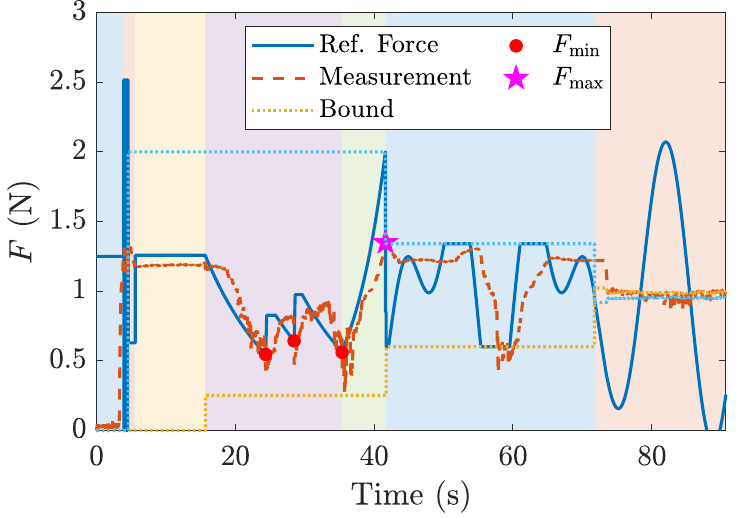}}
  \hspace{-3mm}
  \subfigure[Gripper Width, Tofu]{\label{Fig_min_Tofu1_x}
  \includegraphics[height=3.1cm]{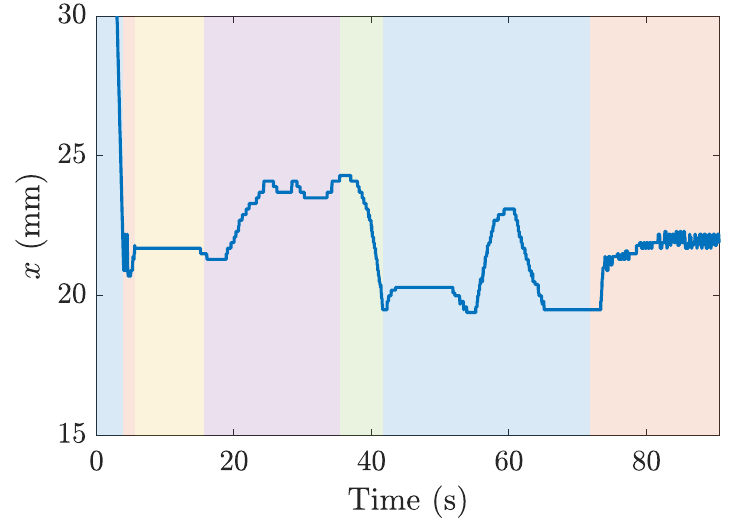}}
  \hspace{-3mm}
  \subfigure[Heavy Copper, $798.5$~g]{\label{Fig_min_Copper2}
  \includegraphics[height=3.1cm]{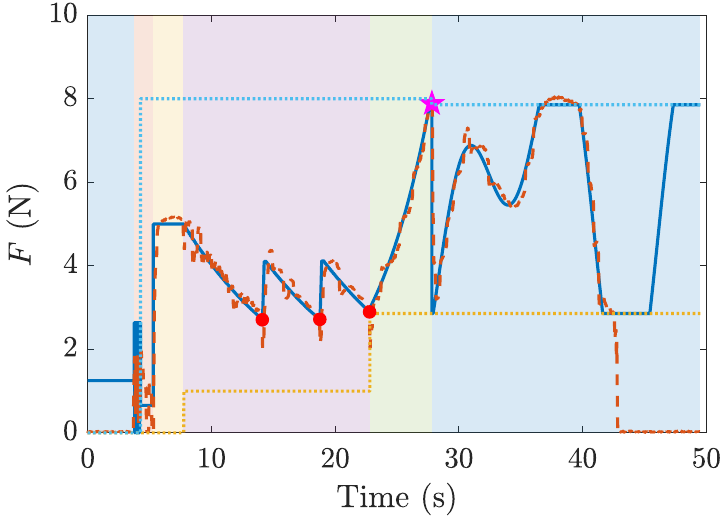}}

  \subfigure[Tofu with $\beta = 0$ ]{\label{Fig_NMPC_Tofu}
  \includegraphics[height=3.15cm]{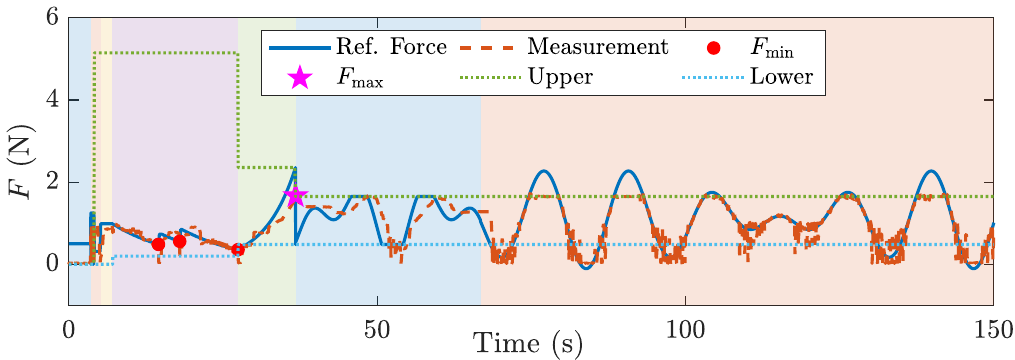}}
  \hspace{-3.5mm}
  \subfigure[Business Card, $0.3$~mm thick]{\label{Fig_NMPC_Card}
  \includegraphics[height=3.15cm]{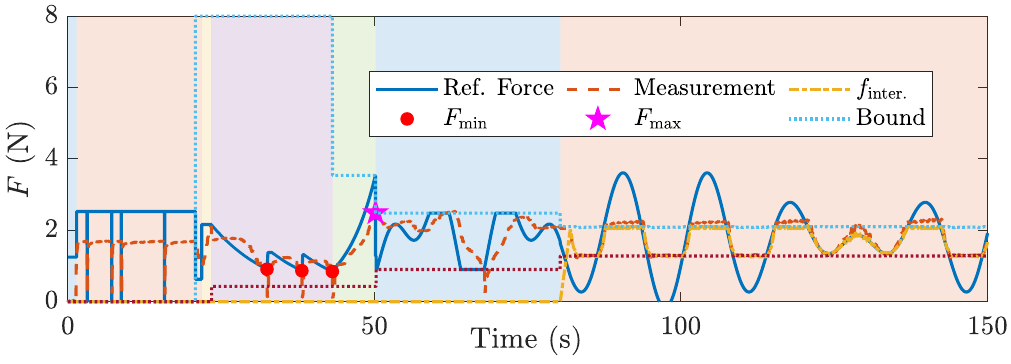}}
  \caption{Dynamic grasping of soft, hard, and permanently-deformable objects. (a) Marshmallow. (b) Tofu. (c) Gripper width when grasping tofu. (d) Heavy copper block. (e) Tofu with $\beta=0$. (f) Business card.}
  \label{Fig_Special_Case}
  \vspace{-3mm}
\end{figure*}

For the soft tomato and silicone block, both the lower and upper grasping force limits are shifted inward, leading to a narrow operating range. This result reflects the delicate nature of these two objects. The proposed online learning and MPC strategy achieve superior performance. The model learning error is less than $0.3$~N for the tomato (see Fig.~\ref{Fig_NMPC_Tomato_Error}) and less than $0.25$~N for the silicone block (see Fig.~\ref{Fig_NMPC_Silicon_Error}). The force tracking error under MPC control is less than $0.3$~N for the tomato and less than $0.2$~N for the silicone block. Noticeably, the error for the soft object is smaller than that for the hard object. This is because a hard object possesses high stiffness, which restricts the displacement of the gripper. As shown in Fig.~\ref{Fig_NMPC_AL_Error}, the gripper width variation is less than $1.3$~mm for the aluminum block, compared to $3$~mm of the silicone block in Fig.~\ref{Fig_NMPC_Silicon_Error}. The limited displacement causes lower data variance and high stiffness may induce potential overshoot.

The maple seed ($m=0.2$~g) represents a class of hard yet brittle objects and large force can cause catastrophic fracture. In Fig.~\ref{Fig_NMPC_Maple}, the high evaluated stiffness leads to an $8$~N maximum force. However, after verification, the constraints become more conservative, and $F_{\text{obj},\max}$ drops to $4$~N. Under line contact, the gripper width displacement remains within $1.6$~mm upon secure contact. The error is less than $0.35$~N for model learning error and  $0.6$~N for control. The relatively small modeling error occurs because the silicone pads on the fingers are compressed under line contact mode.

Due to page constraints, experimental results under the benchmark PID control and linear MPC are omitted here. However, in Table~\ref{Tab_Eror}, we summarize the error statistics ($|F_x^d-F_x(\bm s)|$) for dynamics-free PID control (with reference force filtering applied), linear MPC, and nonlinear MPC. All experiments were conducted starting from the \texttt{approaching} phase, and identical parameters were used across all controllers and objects. Overall, the errors under PID control are the largest. In most cases, the nonlinear MPC achieves the best performance. In particular, for soft objects such as the tomato and silicone block, material elasticity and properties vary under pressure or as compression becomes firm. PID control ignores these dynamic properties, resulting in tracking errors of $0.24$~N for the tomato and $0.37$~N for the silicone block, representing a $3\times$ and $7\times$ increase compared to the proposed controller. Linear MPC also outperforms PID control in these cases. For hard objects, such as the aluminum and copper blocks, object stiffness remains nearly constant within the applied force range; thus, linear and nonlinear MPC exhibit similar performance. Overall, control errors under MPC are smaller than those under preliminary PID control, as the latter lacks system dynamics modeling and, more critically, does not enforce safety constraints.

\section{Discussions}
\label{Sec_discussion}

In this section, we present results for several special cases. We further analyze the computational cost, parameter selection, and limitations of the framework.

\subsection{Special  Cases}

\subsubsection{Delicate Objects}


For delicate objects, secure and gentle grasping is of critical importance. For instance, tofu is an extremely soft material, and human grasping frequently results in structural damage. As reported in~\cite{Xu2024}, computationally expensive vision-tactile array sensing and control strategies were unable to successfully grasp soft objects such as boiled eggs or raw meat. Here, we consider the even more challenging tasks of grasping an extremely soft marshmallow and raw tofu.

\begin{figure}
  \centering
  \subfigure[]{
  \label{Fig_Tofu_dent}
  \includegraphics[height=2cm]{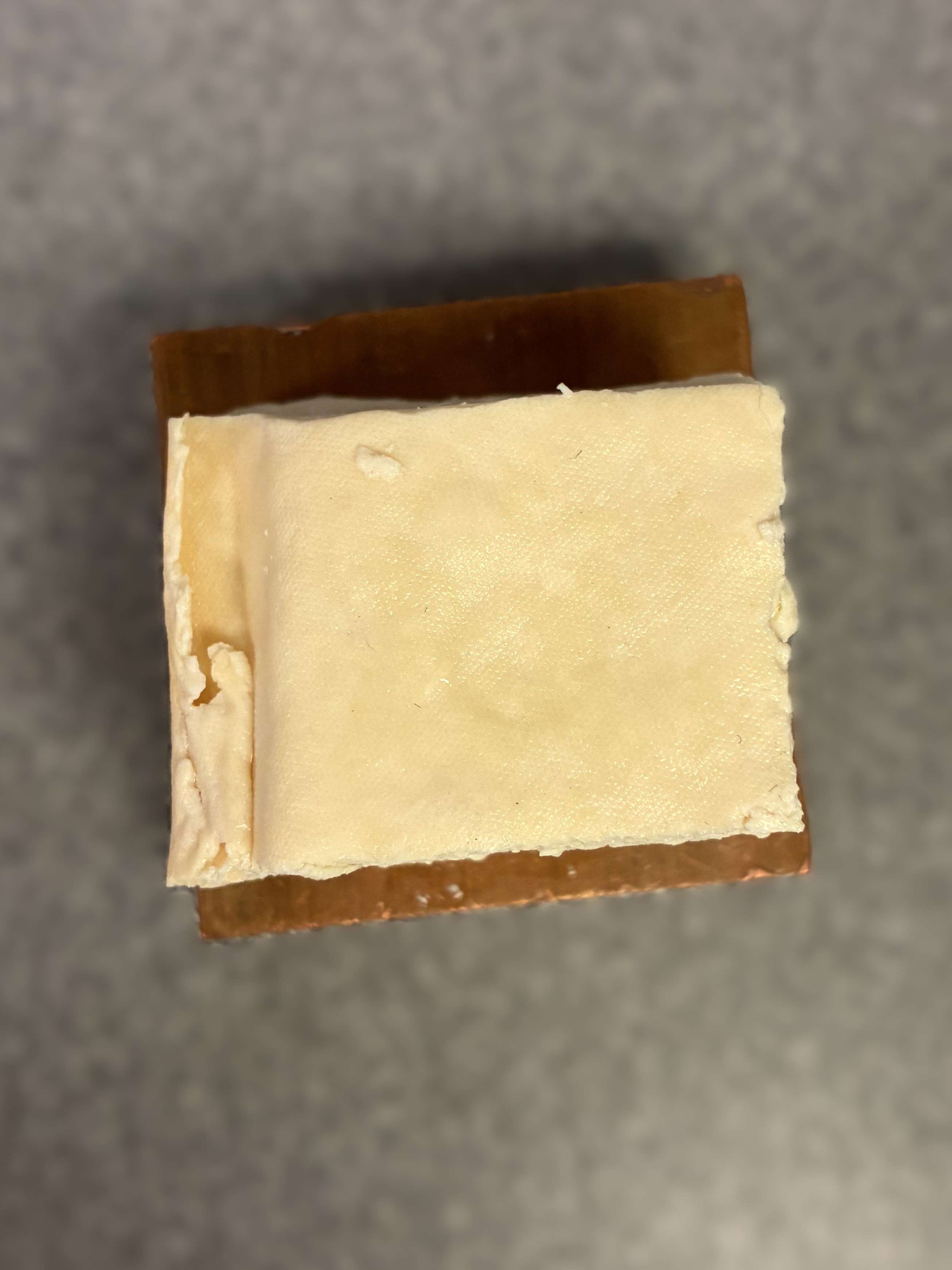}}
  \hspace{-3mm}
  \subfigure[]{
  \label{Fig_Marsh_dent}
  \includegraphics[height=2cm]{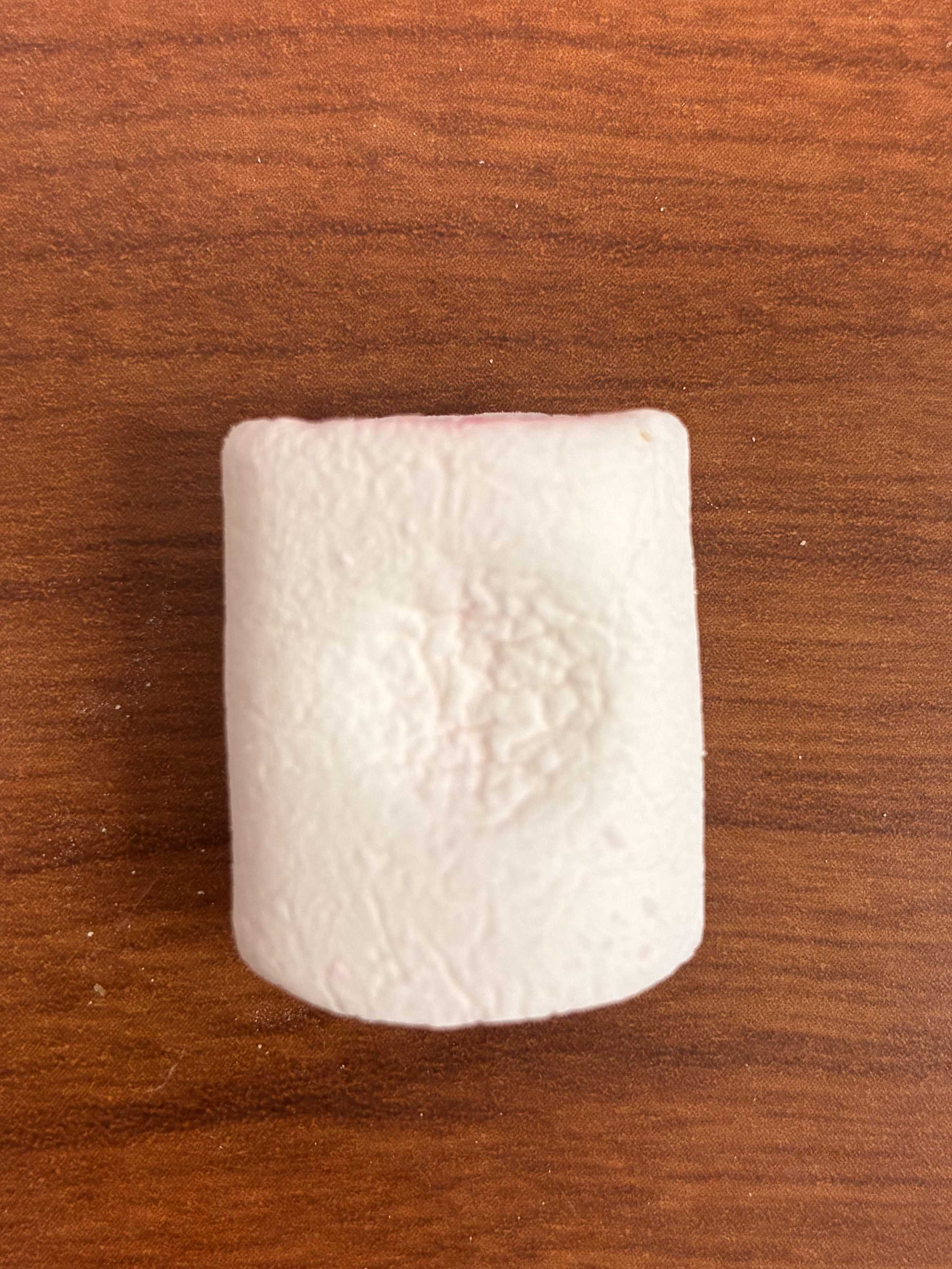}}
  \hspace{-3mm}
  \subfigure[]{
  \label{Fig_Card_Snap}
  \includegraphics[height=2cm]{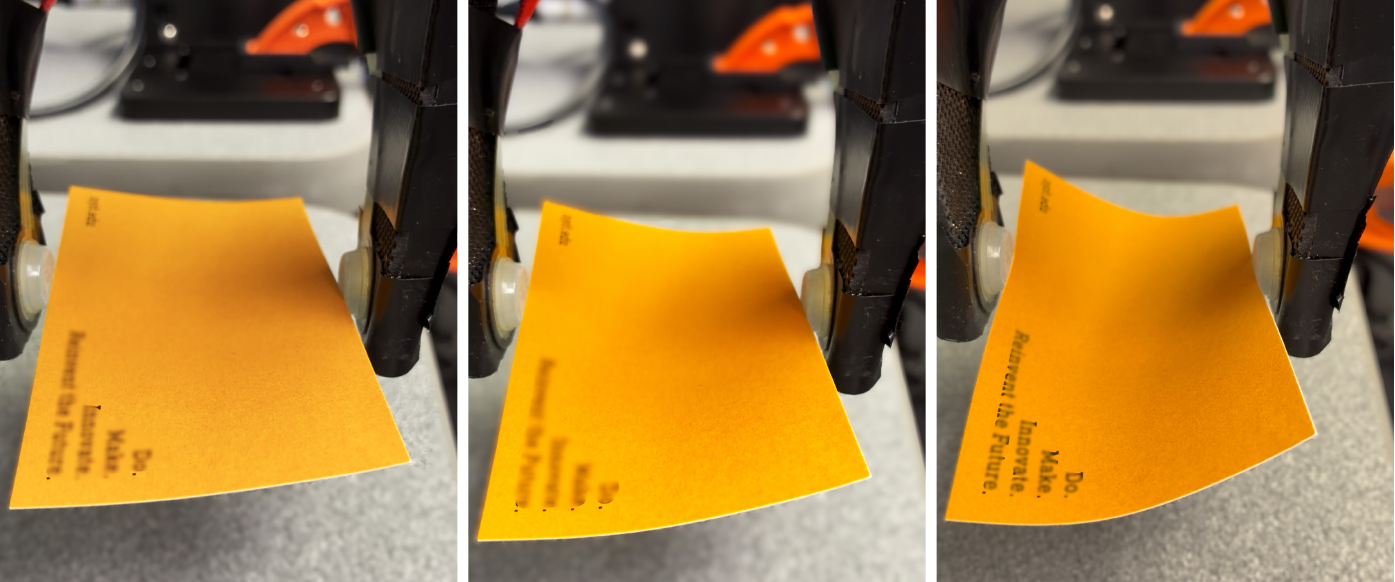}}
  \hspace{-3mm}
  \subfigure[]{
  \label{Fig_Card_Bend}
  \includegraphics[height=2cm]{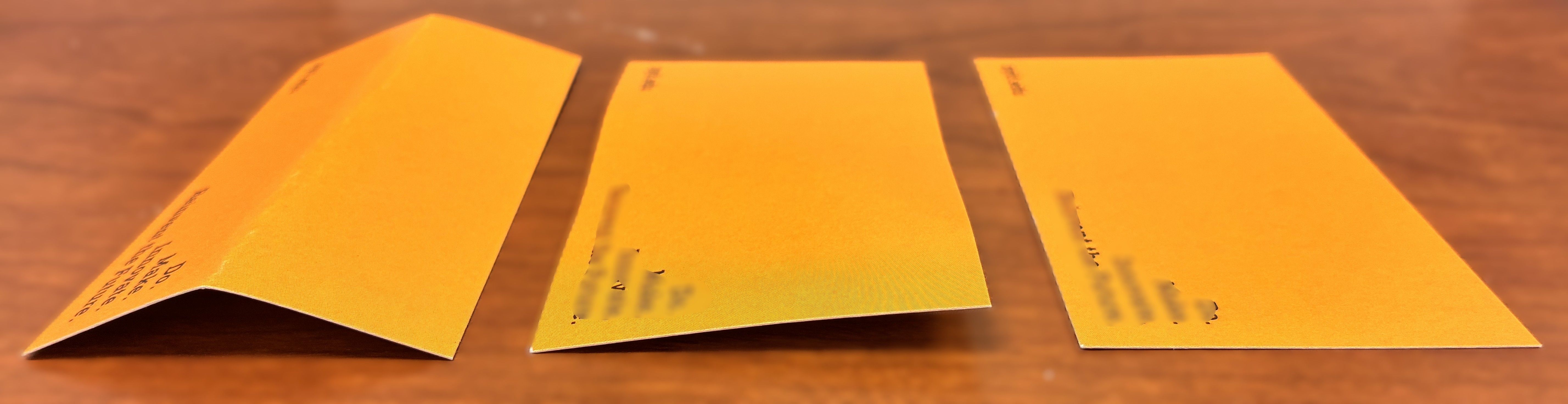}}
  \vspace{-2mm}
  \caption{The dent on the soft object (a) tofu and (b) marshmallow. (c) The snapshots of dynamic grasping of a business card when the applied force is increasing; see Fig.~\ref{Fig_NMPC_Card}. (d) The deformation of the card. From left to right: failed one with permanent deformation, the one under proposed control, and the original status. }
    \label{Fig_Special_Case}
    \vspace{-5mm}
\end{figure}

Fig.~\ref{Fig_min_Marsh} and Fig.~\ref{Fig_min_Tofu} show the robotic grasping of a marshmallow ($m=2.21$~g, $\kappa_{\text{static}}=89$~N/m) and raw tofu ($m=38$~g, $\kappa_{\text{static}}=20$~N/m). In Table~\ref{Tab_Objects}, the evaluated stiffnesses of the tofu and marshmallow are the lowest among all tested objects and fall below the lower limit ($\kappa_{\min}=200$), which violates assumption $\mathcal{A}_2$. Nevertheless, we demonstrate that the proposed approach can still successfully handle these delicate objects. Because the tactile sensor requires sufficient deformation to trigger activation, the marshmallow must be compressed somewhat firmly to be lifted (see the actual width versus measurement width in Table~\ref{Tab_Objects}). This explains why the evaluated stiffness of the marshmallow exceeds that of the tofu. To lift the extremely soft and light marshmallow, a force of $0.7$~N is required; see Fig.~\ref{Fig_min_Marsh}. Similarly, for the tofu, the minimum grasping force is $0.5$~N. For these two delicate objects, upper-bound verification becomes critically necessary. As shown in Fig.~\ref{Fig_min_Marsh} and Fig.~\ref{Fig_min_Tofu}, the proposed algorithm clips the stiffness to the lower bound of $200$, yielding an initial temporary maximum force of $2$~N, which caused catastrophic structural failure during experiments conducted without maximum force verification. In contrast, with verification active, as the grasping force increases toward $2$~N (around $t=40$~s), the gripper squeezes progressively more aggressively with positive acceleration (see Fig.~\ref{Fig_min_Tofu1_x}). However, the measured grasping force stops increasing. The identified maximum force ($1.3$~N) thus prevents potential damage. Meanwhile, the delicate and fragile nature of tofu and marshmallows introduces substantial uncertainties into the interaction dynamics learning, narrowing the safe operation region and leaving virtually no operational margin for grasping force adjustments. After secure grasping, a small indentation remains on the object (prior to elastic recovery); see Figs.~\ref{Fig_Tofu_dent} and~\ref{Fig_Marsh_dent}. To evaluate the performance of interaction dynamics learning and MPC control, we deliberately set $\beta=0$ and omit model uncertainties in constraint evaluation. Fig.~\ref{Fig_NMPC_Tofu} shows the dynamic grasping of the tofu, where the maximum grasping force is capped at approximately $1.7$~N following verification. The interaction force successfully tracks the reference while respecting the constraints, despite the extreme delicacy of the object. However, larger tracking errors occur when reaching the lower bound. This is because reducing the interaction force requires increasing the gripper width; for soft objects, a steady reaction force relies on maintaining a sufficiently firm engagement.

\subsubsection{Heavy Objects}


For heavy objects with smooth surfaces, the lower force bound plays a significant role in ensuring secure grasping. The weight ($m=798$~g) of the copper2 block falls outside the defined range of $(0, 500]$~g. In Fig.~\ref{Fig_min_Copper2}, the minimum required force is determined to be $3.2$~N, which is used as a constraint to saturate the reference force during PID control. However, PID control in principle does not enforce such hard constraints. Upon external disturbance and oscillation (induced by manually pushing the robotic arm) near the lower bound (see $t=42$~s), slippage occurs and contact is lost. These results highlight the necessity of constraint-enforced control. While tuning PID control gains might yield improved performance for a known object, an overly aggressive PID controller can easily cause catastrophic failures when handling delicate objects.

\subsubsection{Permanently Deformable Objects}


For certain rigid objects, high stiffness and elasticity are present along a specific grasping direction. However, if the grasping force becomes sufficiently large, the object undergoes permanent deformation, causing the reaction force to drop abruptly. Examples of such objects include thin cardboard, thin plastic cards, and thin aluminum or copper films. Fig.~\ref{Fig_NMPC_Card} shows the dynamic grasping of a business card ($0.1$~mm thickness, $m=1.2$~g), and Fig.~\ref{Fig_Card_Snap} presents snapshots of the experiment. The evaluated stiffness reaches $886$, categorizing the card as a medium-hard object. Consequently, the maximum grasping force prior to verification is calculated as $8$~N. Upon verification, the card begins to yield, and the reaction force increases only marginally despite aggressive reduction of the gripper width, similar to the response observed in Fig.~\ref{Fig_min_Tofu1_x}. Through this verification process, the maximum allowable force drops to a safe level of $2.6$~N. During the experiment, the card undergoes elastic deformation, with a control error of less than $0.2$~N. Compared to its initial state, only slight bending occurs; in contrast, without verification, the card under control suffers permanent plastic deformation, as shown in Fig.~\ref{Fig_Card_Bend}.

\begin{figure}
  \centering
  \includegraphics[width=8.5cm]{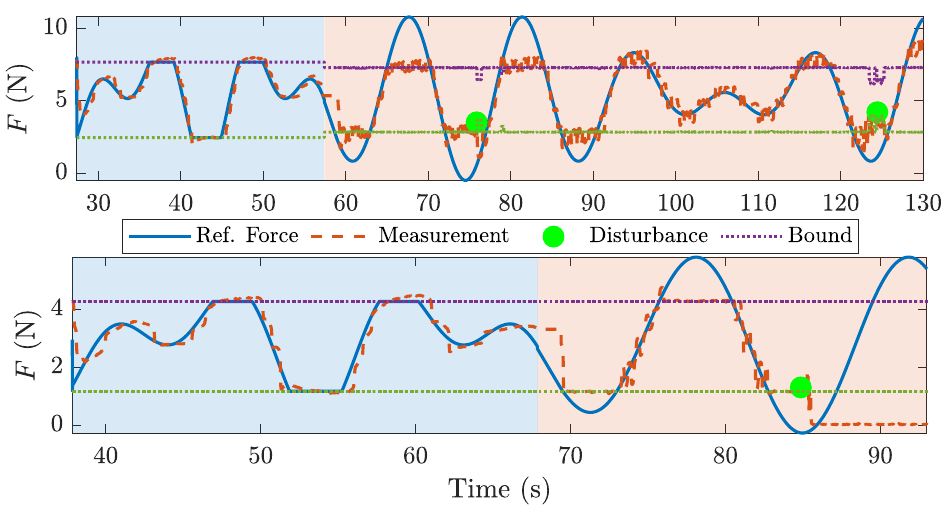}
  \vspace{-2mm}
  \caption{Demonstration of robust grasping under external disturbances. Upper: constraints with GP prediction uncertainties. Lower: constraints without GP prediction uncertainties.}
  \label{Fig_STD}
  \vspace{-2mm}
\end{figure}

\subsection{Robust Secure Grasping}


The proposed control under robust constraint enforcement demonstrates great potential for rejecting disturbances during object handling and grasping. To show the effectiveness of robust constraints, Fig.~\ref{Fig_STD} illustrates dynamic grasping under external disturbance. The robotic gripper grasps the aluminum block, and we strike the block using a rod to introduce disturbances. As shown in Fig.~\ref{Fig_STD}, when the grasping force reaches the lower bound, we apply the disturbances. The force bound varies in response to these external disturbances, and the grasping force accordingly increases to maintain contact. Conversely, under a constant grasping force limit, the aluminum block drops, resulting in failure. The secure-grasping constraint is identified online, and for the same object, this lower bound may vary dynamically. The robust constraints compensate for such variations caused by environmental uncertainties.

\begin{figure}[h]
  \centering
  \includegraphics[width=8.5cm]{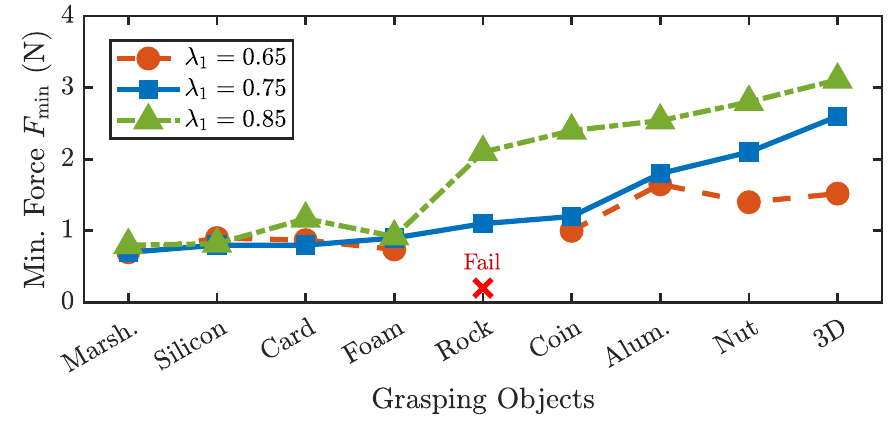}
  \vspace{-3mm}
  \caption{Minimum grasping force identified under different slip parameters. }
  \label{Fig_Fmin_Slip}
  \vspace{-3mm}
\end{figure}

\subsection{Parameters Selections}

The proposed method requires some pre-defined constants. The effects of some constraints are validated or corrected through online trials. For instance, the maximum force calculation in~\eqref{Eq_F_max_temp} depends on the user-defined parameters $\epsilon_{\max},  \xi $, which may generate a large force for a soft object. However, such a calculation is corrected online through maximum force online verification (see~\eqref{Eq_F_max_temp}). While out of many constants, the slip coefficient $\lambda_1$ in~\eqref{Eq_slip_event_trigger} is the most critical parameter.  A smaller $\lambda_1$ indicates a riskier grasping, and a larger $\lambda_1$ may result in over-conservativeness. During implementation, we used $\lambda_1=0.75$. To investigate these parameters on system performance. In Fig.~\ref{Fig_Fmin_Slip}, we show the minimum force identified online under different $\lambda_1$ values for some representative objects. For a more risky $\lambda_1$, the overall lower force bound becomes smaller. However, when the gripper-object contact topology becomes irregular, under slip, the contact may lose compared to a surface contact. For instance, when $\lambda_1=0.65$, for the rock with irregular-contact topology, the grasping fails (most time) due to loss of contact.

Similarly, the PID control also plays a significant role. An aggressive PID control may cause the identified constraints to be either more conservative or more risky. Meanwhile, under the assumption of unknown object properties, the PID controllers cannot be designed to be object-specific. To avoid an unrealistic estimation of the contact force bound. During the minimum force hunting and maximum force verification phases, the rate of the reference force is designed to be $2\%$,  followed by a rest period before the next increment/decrement. This reduces the influence of the PID control's transient performance.

\subsection{Computational Cost}

By leveraging computationally efficient mechatronics and modern computing resources, the computational cost of this work is mitigated through the following three aspects.

First, compared to~\cite{Romano2011TRO, Xu2024, kang2026Force, Yin2022Multimodal}, the proposed tactile-sensing mechatronics does not require an array of tactile sensor units. For each finger, only a single single-channel unit is needed, which significantly reduces latency and the overhead associated with lower-level raw data processing and transmission. In~\cite{Romano2011TRO}, multiple three-dimensional accelerometers are used to construct a tactile sensor array, yielding a fingertip force measurement frequency of only $5$~Hz. In contrast, the tactile force measurement in this work achieves a real-time sampling frequency exceeding $1000$~Hz.

Second, compared to visuo-tactile sensors, the proposed tactile-sensing system eliminates the need for computationally expensive computer vision or image processing. In~\cite{Tian2019Manipulation}, video processing techniques are used to handle tactile sensing. As a result, the control frequency of the MPC reaches only $1$~Hz, which is inadequate for real-time force control. In~\cite{Xu2024}, a visuo-tactile system is employed for robotic grasping. To reduce computational cost, state prediction is replaced by a neural network model; nevertheless, the overall control frequency remains low at $50$~Hz.

The primary computational burden of the proposed approach stems from online interaction dynamics learning and state prediction within the nonlinear MPC. A large data depth $d$ and dataset size $M$ (we set $M=1500$) in~\eqref{Eq_input_feature_vector} place heavy demands on online model training. In MPC optimization, both mean and variance evaluations are required, and constraints vary over the prediction horizon. To address this, we deploy GP model training, evaluation, and MPC optimization on a GPU by leveraging its parallel computing capabilities. Consequently, a real-time control frequency of $100$~Hz is achieved with no observable delay. The linear MPC-based control can also be executed on a resource-constrained machine (Intel Core i7-1260P x 16) at $50$~Hz, albeit with a smaller dataset size and a 10-second latency during GP model training. Importantly, under the ROS protocol, the grasping force command from the preceding step persists during this 10-second delay, preventing system failure. Therefore, the proposed method remains feasible even in computationally constrained setups.

\subsection{Limitations}

There are several limitations of the proposed approach compared to the work in the literature as follows.
\begin{itemize}



  \item One of the major limitations of the proposed work is the assumption that the object initially resides within the gripper's workspace. Under $\mathcal{A}_1$, the proposed framework does not require a vision system for object search. However, in practice, the object may lie outside the workspace of the gripper, necessitating active object search.

  \item Furthermore, for irregular objects, the grasp location on the object is critical. Even when the object is within the gripper's workspace, contact should be established at a location that maximizes the contact area. The contact area may shrink during online minimum force hunting, potentially leading to a loss of contact. Currently, the proposed algorithm lacks a module to adjust contact locations dynamically.

  \item Compared to~\cite{Xu2024}, the tactile sensing module measures only normal forces, leaving friction forces and torques unknown. Under this setup, the identified minimum force depends on user-defined parameters (e.g., $\lambda_1$), which may not closely reflect the ground truth.

\end{itemize}


Future work should focus on addressing these limitations. In particular, to relax $\mathcal A_1$, tactile fingers could be utilized to search for the object and reconstruct its three-dimensional surface contour. Additionally, the initial contact establishment can be optimized.

\section{Conclusions}
\label{Sec_conclusion}

This work proposes a tactile sensing-based dynamics learning and control framework for the robotic grasping of unknown objects. The framework encompasses multiple modules, including compact intelligent tactile mechatronics, object property probing, constraint identification, online grasping dynamics learning, and predictive control. The tactile sensing system measures gripper-object interaction forces, while the object probing module, resembling human grasping behavior, actively evaluates the object's stiffness, size, slip, and deformation. Fundamentally, we identify case-specific intrinsic constraints on the grasping force to achieve secure and gentle grasping of unknown objects. The interaction dynamics are formally learned online using GP regression, and predictive control is applied to dynamically regulate the grasping force while enforcing these constraints. The system safely grasps objects as delicate as raw tofu and marshmallows, as well as objects as rigid as metal blocks. Crucially, the proposed grasping system operates without offline training for object identification, complex tactile mechatronics, or vision systems, enabling computationally efficient control for real-time applications. Extensive experimental results validate the proposed framework across diverse scenarios. In the future, we plan to pursue two major directions: leveraging the tactile fingers to reconstruct 3D surface contours for optimizing contact establishment, and extending the proposed approach to high-DoF multi-fingered dexterous hands integrated with multi-axis tactile arrays to map complex shear forces and torsional sliding moments.

\bibliographystyle{IEEEtran}
\bibliography{HanRef}

\end{document}